\renewcommand{\@biblabel}[1]{[#1]\hfill}
\def\thm@space@setup{%
 \thm@preskip=\parskip \thm@postskip=0pt
}
\newcommand{\calA}{\mathcal{A}}
\newcommand{\calX}{\mathcal{X}}
\newcommand{\vx}{\mathbf{x}}
\newcommand{\1}{\mathds{1}}
\definecolor{orange}{rgb}{0.9,0.5,0.0}
\begin{document}

\title{Contextual search in the presence of adversarial corruptions}


 \author{
  Akshay Krishnamurthy\thanks{Microsoft Research NYC, \texttt{akshaykr@microsoft.com }}
  \and 
 Thodoris Lykouris\thanks{Massachusetts Institute of Technology, \texttt{lykouris@mit.edu}. Research was initiated while the author was a postdoctoral research at Microsoft Research NYC.}  
 \and 
 Chara Podimata\thanks{Harvard University, \texttt{podimata@g.harvard.edu}. Research was initiated while the author was an intern at Microsoft Research NYC. The author is supported in part under grant No. CCF-1718549 of the National Science Foundation, and the Harvard Data Science Initiative.}
 \and
 Robert Schapire\thanks{Microsoft Research NYC, \texttt{schapire@microsoft.com}}
 }
 \date{First version: February 2020\\Current version: August 2022
\footnote{The first version was titled \emph{Corrupted multidimensional binary search: Learning in the presence of irrational agents}. An $8$-page extended abstract titled \emph{Contextual search in the presence of irrational agents} \citep{KrishnamurthyLykourisPodimataSchapireSTOC21} appeared at the 53rd ACM Symposium on the Theory of Computing (STOC '21).}}
\maketitle

\begin{abstract}
We study contextual search, a generalization of binary search in higher dimensions, which captures settings such as feature-based dynamic pricing. Standard formulations of this problem assume that agents act in accordance with a specific homogeneous response model. In practice however, some responses may be adversarially corrupted. Existing algorithms heavily depend on the assumed response model being (approximately) accurate for all agents and have poor performance in the presence of even a few such arbitrary misspecifications.

We initiate the study of contextual search when some of the agents can behave in ways inconsistent with the underlying response model. In particular, we provide two algorithms, one based on multidimensional binary search methods and one based on gradient descent. We show that these algorithms attain near-optimal regret in the absence of adversarial corruptions and their performance degrades gracefully with the number of such agents, providing the first results for contextual search in any adversarial noise model. Our techniques draw inspiration from learning theory, game theory, high-dimensional geometry, and convex analysis. 
\end{abstract}
\addtocounter{page}{-1}
\thispagestyle{empty}

\newpage

\section{Introduction}
We study \emph{contextual search}, a fundamental problem that extends classical binary search to higher dimensions and has direct applications to pricing and personalized medicine~\citep{CLL16,BastaniBayati20,LLV18}.  In the most standard, linear version, at every round $t$, a \emph{context}  $\vx_t\in \R^d$ arrives.  Associated with this context is an unknown \emph{true value} $v_t\in\R$, which we here assume is a linear function of the context so that $v_t=\langle\stheta,\vx_t\rangle$ for some unknown vector $\stheta\in \R^d$, called the \emph{ground truth}. Based on the observed context $\vx_t$, the decision-maker or \emph{learner} selects a \emph{query} $\omega_t\in\R$ with the goal of minimizing some \emph{loss} that depends on the query as well as the true value; examples include the \emph{absolute loss}, $\abs{v_t-\omega_t}$, and the \emph{$\eps$-ball loss}, $\1\{\abs{v_t-\omega_t}>\eps\}$, both of which measure discrepency between $\omega_t$ and $v_t$.  Finally, the learner observes whether or not $v_t\geq\omega_t$, but importantly, the true value $v_t$ is never revealed, nor is the loss that was suffered.

For example, in feature-based dynamic pricing  \citep{CLL16,LLV18}, say, of Airbnb apartments, each context $\vx_t$ describes a particular apartment with components, or features, providing the apartment's location, cleanliness, and so on. The true value $v_t$ is the price an incoming customer or \emph{agent} is willing to pay, which is assumed to be a linear function (defined by $\stheta$) of $\vx_t$.  Based on $\vx_t$, the platform decides on a price $\omega_t$.  If this price is less than the customer's value $v_t$, then the customer makes a reservation, yielding revenue $\omega_t$; otherwise, the customer passes, generating no revenue.  The platform observes whether the reservation occurred (that is, if $v_t\geq \omega_t$).  The natural loss in this setting is called the \emph{pricing loss}, which captures how much revenue was lost relative to the maximum price that the customer was willing to pay.

A key challenge in contextual search is that the learner only observes \emph{binary feedback}, i.e., whether or not $v_t\geq \omega_t$. This contrasts with classical machine learning where the learner
observes either the entire loss function (full feedback) or only the loss itself for just the chosen query (bandit feedback). 

The above model makes the strong assumption that the feedback is always consistent with the ground truth. 
This is not always realistic as it does not account for model misspecifications. In particular, the agent's response may deviate \emph{arbitrarily} from the assumed linear model in some rounds. Such deviations can be modeled as adversarially corrupted feedback. 
Although 
prior works allow for stochastic noise (see Section~\ref{ssec:related_work}), they are 
not robust to any adversarial interference.

In this paper, we present the first contextual search algorithms that can handle adversarial noise. In particular, we allow some agents to behave in ways that are arbitrarily inconsistent with the ground truth. Inspired by the recent line of work on stochastic bandit learning with adversarial corruptions \citep{LML18, GKT19,ZimmertSeldin21}, we impose no assumptions on the order of corrupted rounds and obtain guarantees that gracefully degrade with their number while attaining near-optimality when all agents behave according to the linear model. 

\subsection{Our contributions}
We first provide a unifying framework encompassing disparate loss functions and agent response models (Section~\ref{sec:prelim}). In particular, we assume that the agent behaves according to a \emph{perceived value} $\tv$ and the precise response model determines the transformation from true to perceived value. The loss functions can depend on either the true value (to capture parameter estimation objectives) or the perceived value (for pricing objectives). This formulation allows us to capture adversarially corrupted agent responses, a setting not studied in prior work, as well as stochastic noise settings.

Our first algorithm (Section~\ref{sec:corpv_algorithm}) works for all of the aforementioned loss functions ($\eps$-ball, absolute, pricing). We prove that, with probability $1-\delta$, it
suffers  a relative degradation in performance or \emph{regret} of  $\bigO(C\cdot d^3\cdot \poly\log(T/\delta))$, where $C$ is the unknown number of adversarially corrupted rounds and $T$ is the total number of rounds (\emph{time horizon}). Our guarantee is logarithmic in $T$ when $C \approx 0$ and degrades gracefully as $C$ becomes larger. Our algorithm builds on the $\textsc{ProjectedVolume}$ algorithm \citep{LLV18}, which is optimal for the $\eps$-ball loss when $C=0$.

Our main technical advance is a method for maintaining a set of candidates for $\stheta$ (\emph{knowledge set}), successively removing candidates by hyperplane cuts while ensuring that $\stheta$ is never removed. When $C=0$, this is done via $\textsc{ProjectedVolume}$  which removes all parameters $\btheta$ that are inconsistent with the response in a way that each \emph{costly} query guarantees enough progress measured via the volume of the set of remaining parameters. However, when some responses are corrupted, such an aggressive elimination method may remove the ground truth $\stheta$ from the parameter space.

To deal with this key challenge, we run the algorithm in epochs, each corresponding to one query of  $\textsc{ProjectedVolume}$, and only proceed to the next epoch if we can find a hyperplane cut that makes volumetric progress and does not eliminate $\stheta$. We start from an easier setting where we assume a known upper bound $\bc$ on the number of corrupted responses ($C\leq \bc$) and only move to the next epoch when we can find a cut with enough volumetric progress that includes all parameters that are \emph{``misclassified''} by at most $\bc$ queries, i.e., parameters that were inconsistent with the agents' responses at most $\bc$ times. Note that $\stheta$ is consistent with all non-corrupted responses, and hence, it is always included in the new knowledge set (as it can only suffer at most $\bc$ misclassifications).

Our first challenge lies in identifying such a hyperplane cut, i.e., one that makes volumetric progress without eliminating $\stheta$. As discussed above, cuts associated with $\renato$ queries do make enough volumetric progress, but risk removing $\stheta$ due to their aggressive elimination. Interestingly, we can use ideas from convex analysis (specifically, the Carath\'eodory theorem) to show that, after collecting $\bigO(d^2 \bc )$ queries, we can combine them appropriately to produce the hyperplane cut with the desired properties. To guarantee the existence of such a cut, we identify a point in the parameter space that is outside of a convex body including all the \emph{protected parameters} (the ones with misclassification at most $\bc$) and then apply the separating hyperplane theorem.

A second challenge is that the separating hyperplane theorem does not provide a way to compute the corresponding cut. To deal with this, we use geometric techniques (volume cap arguments) to provide a sampling process that, with significant probability, identifies a point $\vq$ that is sufficiently far from the aforementioned separating hyperplane. We compute this hyperplane by running the classical learning-theoretic Perceptron algorithm repeatedly using points sampled from this process.

There are two remaining, intertwined challenges. On the one hand, the running time of Perceptron depends on the number of subregions created by removing all possible combinations of $\bc$ queries which is exponential in $\bc$. On the other hand, our algorithm needs to be agnostic to $\bc$. We deal with both of these via a multi-layering approach introduced in \citep{LML18} that runs multiple parallel versions of the aforementioned algorithm with only $\bc\approx \log T$ (Section~\ref{ssec:behavioral_results}). This results in a final algorithm that is quasipolynomial in the time horizon and does not assume knowledge of $C$.

Our second algorithm is based on gradient descent (Section~\ref{sec:gradient_descent}) and has a guarantee of $\bigO(\sqrt{T}+C)$ for the absolute loss. This algorithm is simple and efficient running time but does not provide logarithmic guarantees when $C\approx 0$ and does not extend to non-Lipschitz loss functions such as the pricing loss. The key idea in its analysis lies in identifying a simple proxy loss function based on which we can run gradient descent and directly apply its corresponding regret guarantee.

\subsection{Related work}\label{ssec:related_work}
Our work is closely related to dynamic pricing when facing an agent with \emph{unknown} demand curve. In the non-contextual version of the problem, there is a single item with infinite supply that is sold: the learner at each round posts a price for the item, and the agent decides whether to buy it or not based on their valuation function. This problem was formalized in the seminal work of Kleinberg and Leighton \cite{KL03} who studied the settings where the valuation is fixed, i.i.d., and adversarial and provided optimal pricing-loss regret guarantees of $\Theta(\log \log T), \Theta(\sqrt{T})$, and $\Theta(T^{2/3})$ respectively. Our work studies a contextual extension that falls between the first and the third category, since all agents behave according to the same valuation except for $C$ of them. 

\xhdr{Contextual search and dynamic pricing.} At a high level, there are two methodological approaches to handle the contextual setting. The first approach is based on binary search techniques and extends the first category described above where there exists a fixed parameter that, combined with the context, determines the valuation of the agent. Such techniques are very efficient in that they normally result in logarithmic regret guarantees and can handle contexts that are arbitrary and even selected by an adaptive adversary. This family of binary search approaches was introduced by Cohen, Lobel, and Paes Leme \cite{CLL16} who provided a binary search algorithm based on the ellipsoid method with a regret $\calO(d^2 \log (d/\eps))$ for the $\eps$-ball loss and $\calO(d^2\log T)$ for the symmetric and pricing loss. Lobel, Paes Leme, and Vladu \cite{LLV18} improved these bounds by obtaining the optimal regret $\calO(d \log (d/\eps))$ for the $\eps$-ball loss and regret $\calO(d \log T)$ for the symmetric and the pricing loss. Paes Leme and Schneider \cite{LS18} obtained regret guarantees of $\calO(d^4)$ and $\calO(d^4 \log \log (dT))$ for the symmetric and pricing loss, {which are optimal with respect to $T$}. Finally, Liu, Paes Leme, and Schneider \cite{LPLS20} obtained the optimal bounds {(with respect to both $d$ and $T$)} of $\calO(d \log d)$ and $\calO(d \log \log T)$ for the symmetric and the pricing loss respectively. These binary search techniques work by recursively refining a version space that contains the underlying parameter; this is what allows them to provide the logarithmic regret guarantees as they make exponential progress in refining the \emph{volume} of the version space at each round. This strength comes at a cost though in that it makes them very brittle even in the presence of a few corruptions. In particular, a single mistake may render the version space incorrect and the binary nature of the feedback makes it challenging to recover. Our work addresses this shortcoming by allowing some rounds to be arbitrarily (and even adversarially) corrupted and providing guarantees that gracefully degrade with the number of these corrupted rounds. Finally, most of the above works are not designed to handle even non-adversarial noise. The two exceptions are the work of Cohen, Lobel, and Paes Leme \cite{CLL16} who can handle a low-noise regime for all loss functions (in Appendix~
\ref{sec:extensions_behavior}, we extend our results to this setting) and the concurrent and independent work of Liu, Paes Leme, and Schneider \cite{LPLS20} whose results extend to a stochastic noise model where the feedback is flipped with a fixed, constant probability (their results only hold for the absolute loss and cannot handle adversarial noise).

The second methodological approach for contextual pricing is based on statistical methods such as linear regression and the central limit theorem \citep{GoldenshlugerZeevi13, BastaniBayati20, JN19, QB16, NSLW19,BK17, SBJ19, StatLearningPersonal}. These algorithms require the context to be i.i.d. and not adversarially selected as they separate exploration and exploitation in ways that are agnostic to the context. On the positive side, they are more robust to stochastic noise in the valuations and target the second category of valuations we discussed before, i.e., valuations that are i.i.d. Beyond the contextual setting, after the work of Kleinberg and Leighton \cite{KL03}, many papers incorporated important facets of dynamic pricing such as inventory constraints, multiple products, as well as different feedback and valuation models; see Appendix~\ref{app:related} for further discussion. 

\xhdr{Adversarial corruptions in learning with bandit feedback.} To capture adversarially corrupted agent responses, we posit that agents behave according to a fixed ground truth in all but $C$ rounds, during which they can deviate from it in arbitrary ways. The corruption budget $C$ can be selected adaptively and is unknown to the algorithm designer. This model was introduced by Lykouris, Mirrokni, and Paes Leme \cite{LML18} in the context of multi-armed bandits and their results for this setting were later strengthened by Gupta, Koren, and Talwar \cite{GKT19} and Zimmert and Seldin \cite{ZimmertSeldin21}. This model has been subsequently used for several other settings including linear optimization \citep{LLS19}, Gaussian bandit optimization \citep{bogunovic2020corruption}, assortment optimization \citep{CKW19}, reinforcement learning \citep{LykourisSSS2019CorruptionRL}, prediction with expert advice \citep{amir2020prediction}, learning product rankings \citep{GolrezaeiManSchSek21}, and dueling bandits \citep{AgarwalAgarwalPatil21}. Our work differs from these in that it involves a continuous action space which requires new analytical tools, while all prior results involve discrete (potentially large) action spaces. We note that a subsequent work by Chen and Wang \cite{Chen2020RobustDP} considers adversarial corruptions in a dynamic pricing setting with continuous actions. Our paper has orthogonal strengths: we consider a more complicated contextual setting and attain logarithmic regret, whereas they focus on incorporating inventory constraints.

Apart from corruptions, there are other multi-armed bandit approaches to go beyond i.i.d. rewards \citep{BrownianAlex, BubeckSli12, Gur1, Gur2, KeskinZeevi17, Cheungetal, COLT21Best}; see Appendix~\ref{app:related} for further discussion. 

\xhdr{Ulam's game and noisy binary search.} The non-contextual version of our problem bears similarities to \emph{Ulam's game} \citep{UlamOG}, where one wants to make the least number of queries to an adversary in order to identify a target number from set $\{1,2,\ldots,n\}$. The adversary can only give binary feedback and may lie at most $C$ times over the course of the game, where $C$ is known to the learner (\cite{Spencer92}, see also \cite{pelc2002searching} for a comprehensive survey). Rivest, Meyer, Kleitman, Winklmann, and Spenser \cite{RMKWS90} provide the optimal query complexity for this problem which is $\Theta(\log n+C\log\log n+C\log C)$; further discussion on how this bound relates to our guarantees is provided at the end of Section~\ref{ssec:analysis_adv_irrationality}. The algorithm proposed is intuitively a halving-type algorithm keeping track of all possible, feasible configurations for the timing of the lies. Our work extends this seminal paper in three directions. First, we cover the case of \emph{unknown} $C$. Second, we look at the contextual version of the problem. Third, our main algorithm obtains no-regret guarantees not only for the symmetric and the absolute loss, but for the \emph{pricing} loss as well. To achieve these, our algorithms and proof techniques are completely different from \cite{RMKWS90}. Ulam's game has been studied in multiple different variants~\citep{Pelc87, SpencerWinkler, AslamDhagat91, Dagan2, KK07, Nowak08, Nowak09}; see Appendix~\ref{app:related} for further discussion.

\xhdr{Beyond our assumptions.} Our model relies heavily on two assumptions. First, we assume that agents' valuations at each non-corrupted round are linearly dependent on the observed context $\vx_t$ (possibly with the addition of a small i.i.d. idiosyncratic noise). To the best of our knowledge, this is the viewpoint taken by almost all prior work that considers binary feedback with the exception of the works of Mao, Paes Leme, and Schneider \cite{MLS18} who consider \emph{Lipschitz} dependence of the valuation in $\vx_t$ and Shah, Blanchet, and Johari \cite{SBJ19} who posit an exponential relationship to the context. The second main assumption of our work is that the agents are \emph{myopic}, i.e., they make decisions optimizing their utilities only for the current round, without caring about future rounds. While this is a common assumption in prior work, there have also been works on dynamic pricing mechanisms where the agents are assumed to be long-living/non-myopic and thus \emph{``strategic''} \citep{Aminetal13, ARS14, MohriMunoz14, MohriMunoz15, FeldmanKorenLivniMansourZohar16, Drutsa17, LiuetalNeurIPS18,  Golrezaei1, Golrezaei2, PKDD20, ZD20, AAAI21, LearningFromBids}. Most of these works assume that agents optimize an infinite-horizon, discounted utility when making decisions. This can be viewed as a structured version of corruption as the agents only lie if they benefit from that and, at a high level, the goal of the learner in these cases is to design algorithms that will induce (approximately) truthful behavior from the agents and thus remove their incentive to deviate from the behavioral model. In contrast, we allow for arbitrary misspecifications in particular rounds so our algorithms cannot completely eliminate deviations from the prescribed behavioral model but need to be able to handle such misspecifications.
\section{Model}\label{sec:prelim}
In this section, we provide a general framework (Section~\ref{ssec:protocol}) that allows us to study contextual search under different agent response models (Section~\ref{subsec:behaviors}) and different loss functions (Section~\ref{subsec:loss}). To facilitate the reader, we include a glossary with all recurring notation in Appendix~\ref{app_glossary}.

\subsection{Protocol}\label{ssec:protocol}
We consider the following repeated interaction between the learner and nature. Following classical works in contextual search \citep{CLL16,LLV18} we assume that the learner has access to a parameter space $\calK= \{\vu \in \bbR^d: \|\vu\|_2 \leq 1\}$ and a context space $\calX = \{ \vx \in \bbR^d: \|\vx\|_2 = 1\}$. We denote by $\Omega = [0,1]$ the decision space of the learner and by $\calV = [0,1]$ {a value space}; in the pricing setting, $\Omega$ can be thought of as the set of possible prices available to the learner and $\calV$ as a set of values associated with incoming agents. Domain $\calV$ helps express both the true value of the agents and the perceived value driving their decisions. Finally, we consider an agent response model determining the transformation from the agent's \emph{true value} to a \emph{perceived value} that drives the decision at each round. All of the above are known to the learner throughout the learning process.

The setting proceeds for $T$ rounds. Before the first round, nature chooses a ground truth $\stheta\in \calK$; this is fixed across rounds and is \emph{not} known to the learner. This ground truth determines both the agent's \emph{true} value function $v:\calX\to\calV$ and the learner's loss function $\ell: \Omega \times \calV \times \calV\to [0,1]$. We note that both value and loss functions are also functions of the ground truth $\stheta$; given that $\stheta$ is fixed throughout this process, we drop the dependence on $\stheta$ to ease notation. The functional form of both $v(\cdot)$ and $\ell(\cdot)$ as a function of the ground truth $\stheta$ is known to the learner but the learner does not know $\stheta$. In what follows, we use $\textrm{sgn}$ to denote the sign function, i.e., $\textrm{sgn}(x) = 1$ if $x \geq 0$ and $-1$ otherwise. For each round $t = 1, \dots, T$: 
\begin{enumerate}
\item\label{step:context} Nature chooses (potentially adaptively and adversarially) and reveals context $\vx_t \in \calX$. 
\item\label{step:perceived} Nature chooses but \emph{does not} reveal a perceived value $\tv_t\in\calV$ based on the response model.
\item\label{step:feedback} Learner selects query point $\omega_t\in \Omega$ (in a randomized manner) and observes $y_t=\textrm{sgn} \left( \tv_t - \omega_t \right)$.
\item\label{step:loss} Learner incurs (but does \emph{not} observe) loss: $\ell(\omega_t, v(\vx_t), \tv_t) \in [0,1]$.  
\end{enumerate}
Nature is an adaptive adversary (subject to the agent response model), i.e., it knows the learner's algorithm along with the realization of all randomness up to and including round $t-1$ (i.e, it knows all $\omega_{\tau},\forall \tau\leq t-1$), but does not know the learner's randomness at the current round $t$. Moreover, the learner only observes the context $\vx_t$ and the \emph{binary} variable $y_t$ as described in Steps~\ref{step:context} and \ref{step:feedback} of the protocol, and has access to neither the perceived value $\tv_t$ nor the loss $\ell(\omega_t,v(\vx_t), \tv_t)$. Finally, in the pricing setting, $y_t$ corresponds to whether the agent of round $t$ made a purchase or not.


\subsection{Agent response 
models}\label{subsec:behaviors}
We assume that the agents' true value function is: $v(\vx) = \langle \vx, \stheta \rangle$ {for any $\vx\in\calX$} (i.e., independent of their response model). The agent response model affects the perceived value $\tv$ at round $t$, which then affects both the loss incurred and the feedback observed by the learner. The agent response model that is mostly studied in contextual search works is \emph{full rationality}. This assumes that the agent always behaves according to their true value, i.e., $\tv_t=v(\vx_t)=\langle\vx_t,\stheta\rangle$. In learning-theoretic terms, this consistency with respect to a ground truth is typically referred to as \emph{realizability}. 

Our main focus in this work is the study of \emph{adversarially corrupted agents}. There, nature selects the rounds where these agents arrive ($c_t = 1$ if adversarially corrupted agents arrive, else $c_t = 0$), together with an upper bound $C$ on this number of rounds (i.e., $\sum_{t \in [T]} c_t \leq C$). Neither the sequence $\{c_t\}_{t \in [T]}$ nor the number $C$ are ever revealed to the learner. If $c_t = 0$, then nature is constrained to $\tv_t=v(\vx_t)$, but can select adaptively and adversarially $\tv_t$ if $c_t=1$. This model is inspired by the model of adversarial corruptions in stochastic bandit learning \citep{LML18}.

Our results extend to \emph{bounded rationality} which posits that the perceived value is the true value plus some noise parameter. The noise parameter is drawn from a $\sigma$-subgaussian distribution $\subG(\sigma)$, \emph{fixed} across rounds and \emph{known} to the learner, i.e., nature selects it before the first round and reveals it. At every round $t$ a realized noise $\xi_t\sim \subG(\sigma)$ is drawn, but $\xi_t$ is never revealed to the learner. The agent's perceived value is then $\tv_t=v(\vx_t)+\xi_t$. This stochastic noise model has been studied in contextual search as a way to incorporate idiosyncratic market shocks \citep{CLL16}.

We note that, to ease presentation, our model treats agents as different but homogeneous: each of them interacts with the learner exactly once. The exact same model can also be used to model a single agent that shows up for all $T$ rounds but is myopic in his/her choices.

\subsection{Loss functions and objective}\label{subsec:loss}

We study three variants for the learner's loss function: the $\eps$-ball, the absolute, and the pricing loss. Abstracting away from $t$ subscripts and dependencies on contexts $\vx$, the loss $\ell(\omega,v,\tv)$ evaluates the loss of a query $\omega$ when the true value is $v$ and the perceived value is $\tv$.

The first class of loss functions includes parameter estimation objectives that estimate the value of $\stheta$. One such function is the \emph{$\eps$-ball loss} which is defined with respect to an accuracy parameter $\eps>0$. The $\eps$-ball loss is $1$ if the difference between the query point $\omega$ and the true value $v$ is larger than $\eps$ and $0$ otherwise. Formally, $\ell(\omega, v, \tv) = \1 \left\{ \left|v- \omega \right| \geq \eps \right\}$. Another parameter estimation loss function is the \emph{absolute} or \emph{symmetric} loss that captures the absolute difference between the query point and the true value, i.e., $\ell(\omega, v, \tv) = \left|v - \omega \right|$. The aforementioned loss functions are unobservable to the learner as the true value $v$ is latent; this demonstrates that binary feedback does not offer strictly more information than the bandit feedback as the latter reveals the loss of the selected query.

Another important objective in pricing is the revenue collected which is the price $\omega$ in the event that the purchase occurred, i.e., $\tv\geq \omega$. This can be expressed based on observable information by setting a reward equal to $\omega$ when $\tv\geq \omega$ and $0$ otherwise. However, having this as a comparator leads to a benchmark with high objective, which tends to hinder logarithmic performance guarantees that are typical in binary search and are enabled by the fact that the loss of the comparator is $0$. A loss function exploiting this structure is the \emph{pricing loss} which is defined as the difference between the highest revenue that the learner could have achieved at this round (the agent's \emph{perceived} value $\tv$) and the revenue that the learner currently receives, i.e., $\omega$ if a purchase happens, and $0$ otherwise. The outcome of whether a purchase happens or not is tied to whether $\omega$ is higher or smaller than the perceived value $\tv$. Putting everything together: $\ell(\omega, v, \tv) = \tv - \omega\cdot \1\left\{\omega \leq \tv\right\}.$ 

We remark that the $\eps$-ball and the absolute loss depend only on the true value $v$ (and not the perceived value $\tv$); indeed, when these losses are considered $\tv$ affects only the feedback that the learner receives. That said, we define $\ell(\cdot, \cdot, \cdot)$ with three arguments for unification purposes, since the pricing loss does depend on the feedback that the learner receives (and hence, on $\tv$).

The learner's goal is to minimize a notion of regret. For adversarially corrupted
agents, the loss of the best-fixed policy in hindsight is \emph{at least} $0$ and \emph{at most} $C$. Hence, to simplify exposition, we slightly abuse notation and conflate loss and regret: $R(T) = \sum_{t \in [T]} \ell(\omega_t, v(\vx_t), \tv_t)$

It is no longer possible to provide sublinear guarantees for this quantity when facing boundedly rational agents and we therefore need to slightly relax the benchmark. To ease the exposition, we defer further discussion on the extension to bounded rationality to Appendix~\ref{sec:extensions_behavior}.
\section{Corrupted Projected Volume: algorithm and main guarantee}\label{sec:corpv_algorithm}

In this section, we provide an algorithmic scheme that handles all the aforementioned agent response models and loss functions. The main result of this and the next section is an algorithm (Algorithm~\ref{algo:corpvAI}) for the adversarial corrupted agent response model when there is an \emph{unknown} upper bound $C$ on the number of corrupted agents. The regret of this algorithm is upper bounded by the following theorem. We first present the algorithm in this section and prove the stated theorem in Section~\ref{ssec:analysis_adv_irrationality}.

\begin{theorem}\label{thm:agnostic-corpv}
When run with an accuracy parameter $\eps > 0$ and an unknown corruption level $C$, $\textsc{CorPV.AC}$ incurs regret $\calO ( d^3 \cdot \log (\nicefrac{T}{\beta} ) \cdot \log (\nicefrac{d}{\eps} ) \cdot \log ( \nicefrac{1}{\beta}) \cdot (\log T + C ))$ with probability at least $1- \beta$ for the $\eps$-ball loss. When run with $\eps=1/T$, its regret for the pricing and absolute loss is $\calO (d^3 \log(dT) \log(T) \cdot (\log T + C) \log (\nicefrac{1}{\beta}))$ with probability at least $1 - \beta$. The expected runtime of the algorithm is quasi-polynomial; in particular, it is $\calO((d^2 \log T )^{\text{poly}\log T} \cdot \text{poly}(d,\log T) )$.
\end{theorem}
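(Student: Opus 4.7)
The plan has three main ingredients: (i) analyze a subroutine, call it $\textsc{CorPV}[\bar{C}]$, that assumes a known upper bound $\bar{C}\geq C$ on the number of corrupted rounds; (ii) show that the cut-finding step inside each epoch succeeds with high probability using volume-cap sampling combined with the Perceptron algorithm; and (iii) eliminate knowledge of $\bar{C}$ via the multi-layering reduction of \cite{LML18}. I would first argue that the known-$\bar{C}$ version mimics the epoch structure of $\textsc{ProjectedVolume}$: each epoch corresponds to one volumetric cut, and within an epoch the algorithm aggregates queries until it can certify a halfspace that simultaneously makes the projected-volume progress required by $\textsc{ProjectedVolume}$ and does not eliminate any ``$\bar{C}$-protected'' parameter, i.e., one inconsistent with at most $\bar{C}$ observed responses. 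Since $\stheta$ is $C$-protected and $C\leq \bar{C}$, the ground truth survives every epoch.

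For the existence of such a cut, I would invoke the convex-analytic argument sketched in the introduction: after $\bigO(d^2 \bar{C})$ queries inside an epoch, a Carath\'eodory-type argument locates a point in parameter space that lies outside the convex hull of the protected parameters, and the separating hyperplane theorem yields an improper cut with the desired properties. To actually compute this cut, I would sample candidate points through a volume-cap argument to guarantee margin with constant probability, feed them into the Perceptron algorithm, and repeat $\bigO(\log(1/\beta'))$ times to amplify success to $1-\beta'$ per epoch; a union bound over the $\bigO(d\log(d/\eps))$ epochs spawned by $\textsc{ProjectedVolume}$ then gives the $\log(T/\beta)$ factor. Each epoch can contribute at most $\bigO(d^2 \bar{C})$ queries of $\eps$-ball regret, so $\textsc{CorPV}[\bar{C}]$ incurs regret $\bigO(d^3 \bar{C} \log(d/\eps) \log(T/\beta) \log(1/\beta))$ on the $\eps$-ball loss.

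To remove knowledge of $C$, I would instantiate $\lceil \log T \rceil$ parallel copies of the subroutine with geometrically doubling budgets $\bar{C}_i = 2^i$ and combine them through the multi-layering template of \cite{LML18}, so that the layer $i^\star$ with $2^{i^\star}\geq C$ governs the regret while introducing only an extra $\log T$ factor; this yields the claimed $\eps$-ball bound $\bigO(d^3\log(T/\beta)\log(d/\eps)\log(1/\beta)\cdot (\log T + C))$. The absolute and pricing bounds follow by the standard $\eps = 1/T$ reduction: rounds with zero $\eps$-ball loss contribute at most $\eps\cdot T = 1$ to absolute/pricing regret, while the remaining rounds are charged directly via the $\eps$-ball analysis, upgrading $\log(d/\eps)$ to $\log(dT)$. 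The quasi-polynomial runtime arises because the Perceptron must reason over subregions defined by ignoring up to $\bar{C}\leq \poly\log T$ past queries, giving the $(d^2\log T)^{\poly\log T}$ factor. The main obstacle is the coupling between the purely existential Carath\'eodory argument and the algorithmic need to efficiently produce the cut with high probability: the protected set is only implicitly specified by the response history, so the volume-cap sampling and Perceptron analysis must succeed uniformly across every epoch of every layer while simultaneously respecting the $\textsc{ProjectedVolume}$ progress measure, which is the delicate point that drives both the $d^3$ and the $\poly\log T$ factors.
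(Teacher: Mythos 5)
Your ingredients (i) and (ii) match the paper's route: \textsc{CorPV.Known} with epochs, existence of an improper separating cut via Carath\'eodory plus the separating hyperplane theorem, and its computation via volume-cap sampling fed to Perceptron. The genuine gap is in step (iii). The paper does \emph{not} instantiate layers with geometrically doubling budgets $\bar{C}_i=2^i$: all $\log T$ layers of \textsc{CorPV.AI} run the known-budget subroutine with the \emph{same} small budget $\bc\approx\log(T/\beta)$, and the robustness of layer $j$ to corruption level up to $2^j$ comes from subsampling --- layer $j$ is played with probability $2^{-j}$, so with high probability only $O(\log(T/\beta))$ corrupted rounds actually land in any layer with $2^j\geq C$. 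This choice is essential to both claims of the theorem. With your doubling budgets the higher layers have $\bar{C}_i=\Theta(T)$; since the separating-cut step enumerates all ways of discarding $\bar{C}_i$ hyperplanes (cost roughly $(d^2\bar{C}_i)^{\bar{C}_i}$) and each epoch requires $\tau=\Theta(d^2\bar{C}_i)$ explore queries, you lose the quasi-polynomial runtime, and the corruption-tolerant layers no longer sum to the $\log T$ term of $(\log T + C)$: a tolerant layer with budget $2^i$ would pay $\Theta(d^3 2^i\log(d/\eps))$ across its epochs rather than $\Theta(d^3\log(T/\beta)\log(d/\eps))$. Relatedly, you misattribute the $\log(T/\beta)$ and $\log(1/\beta)$ factors to per-epoch amplification of the cut-finding: in the paper the known-budget regret (Proposition~\ref{prop:known-c}) is \emph{deterministic} --- the resampling/Perceptron loop runs between rounds and costs only time --- and those factors come from the layering analysis (the effective corruption $\log(T/\beta)$ experienced by each tolerant layer, and the $\log(1/\beta)$ from how often the critical layer is sampled).

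You also leave the corruption-intolerant layers ($2^j<C$) unhandled. These are played with constant probability and may eliminate $\stheta$ from their knowledge sets, so asserting that the layer $i^\star$ with $2^{i^\star}\geq C$ ``governs the regret'' does not suffice. The paper's argument needs the nested knowledge sets and global eliminations (a cut made by the sampled layer $j_t$ is also applied to all less robust layers $j'<j_t$, and exploit queries defer to the smallest more robust layer with small width), together with a binomial argument that the critical layer $\lceil\log C\rceil$ is sampled at least once every $C\log(1/\beta)$ rounds; multiplying its regret by that factor is precisely what produces the additive $C$ term in the bound. Without this mechanism and this accounting, the claimed $(\log T + C)$ dependence is not established.
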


\subsection{Algorithm for the known-corruption setting}\label{ssec:algorithmic_crux}

A useful intermediate setting is the case where we know an upper bound $\bc$ on the number of adversarial agents, i.e. $C\leq \bc$; we refer to this as the $\bc$-\emph{known-corruption} setting. Our algorithm for this setting, \textsc{CorPV.Known} (Algorithm~\ref{alg:corpv_known}), builds on the $\textsc{ProjectedVolume}$
algorithm of Lobel, Paes Leme, and Vladu \cite{LLV18} which is optimal in terms of regret for the $\varepsilon$-ball loss when $\bc=0$.  
The main idea in \textsc{ProjectedVolume} is to maintain a \emph{knowledge set} $\calK_t$ which includes all candidate parameters $\btheta$ that are \emph{consistent} with what has been observed so far; all other parameters are eliminated from $\calK_t$. To be more concrete, a parameter $\btheta$ is consistent with observation $y_t$ with respect to a query point $\omega_t$, if $\sgn( \langle \vx_t , \btheta \rangle - \omega_t) = y_t$. The true parameter $\stheta$ is always consistent (since without corruptions $\tv_t=\langle \vx_t,\stheta\rangle$ and $y_t = \sgn(\tv_t-\omega_t)$) and therefore is never eliminated from the knowledge set. Further the volume of the knowledge set is intuitively a measure of progress for the algorithm. We now briefly describe the main components of \textsc{ProjectedVolume} and refer the reader to Appendix~\ref{app:proj-vol} for an algorithmic sketch and more details.

Given a context $\vx_t$, there are two scenarios. Before we describe them, we define the \emph{width} of a body $\calK_0$ in direction $\vx$ as $w(\calK_0,\vx)=\sup_{\btheta,\btheta'\in\calK_0}\langle\btheta-\btheta',\vx\rangle$. A small width along a certain direction $\vx$, means that we have adequately learned said direction, i.e., we do not need to refine our estimate of $\stheta$ further in this direction. Hence, if the width of the knowledge set in the direction of $\vx_t$ is $w(\calK_t,\vx_t)\leq \eps$, the algorithm makes an \emph{exploit} query $\omega_t=\langle\vx_t,\ttheta\rangle$ for \emph{any} point $\ttheta\in\calK_t$ which guarantees an $\eps$-ball loss equal to $0$. Otherwise, if $w(\calK_t,\vx_t)>\eps$, then the algorithm further refines the estimate of $\stheta$ in the direction of $\vx_t$ by making an \emph{explore} query $\omega_t=\langle \vx_t,\vkappa_t\rangle$, where $\vkappa_t$ is the (approximate) centroid of $\calK_t$. For a convex body $\calK_0$, the centroid is defined as $\vkappa^{\star}= \frac{1}{\texttt{\upshape vol} (\calK_0)} \int_{\calK_0} \vu d\vu$, where $\texttt{\upshape vol}(\cdot)$ denotes the volume of a set. Although computing the exact centroid of a convex set is $\#$P-hard \citep{Rademacher07}, one can efficiently approximate it~\citep{LLV18}. 

By querying $\omega_t=\langle \vx_t,\vkappa_t\rangle$, the algorithm learns that $\stheta$ lies in one of the two halfspaces passing through $\vkappa_t$ with normal vector $\vx_t$ , i.e., either $\langle \vx_t, \stheta \rangle \geq \omega_t$ or $\langle \vx_t, \stheta \rangle < \omega_t$. Then, it updates the knowledge set by taking intersection with this halfspace, i.e., all the parameters not in the intersection get eliminated from $\calK_{t+1}$. This ensures that the updated knowledge set still contains $\stheta$. We use $(\vh,\omega)$, $\vH^+\left(\vh,\omega \right)$, and $\vH^-\left(\vh, \omega\right)$ to denote the hyperplane with normal vector $\vh\in\bbR^d$ and intercept $\omega$, and the positive and negative halfspaces it creates with intercept $\omega$, i.e., $\vH^+(\vh, \omega) = \{\vx \in \bbR^d: \langle \vh, \vx \rangle \geq \omega\}$ and $\vH^-(\vh, \omega) = \{\vx \in \bbR^d: \langle \vh, \vx\rangle \leq \omega\}$. By properties of $\vkappa_t$, the volume of the {updated} knowledge set is a constant factor of the initial volume, leading to geometric volume progress. For technical reasons, \textsc{ProjectedVolume} keeps a set $S_t$ of dimensions with small width and works with the so-called \emph{cylindrification} $\Cyl(\calK_t,S_t)$ rather than the knowledge set $\calK_t$. Although we do the same to build on their analysis in a black-box manner, the distinction between $\calK_t$ and $\Cyl(\calK_t,S_t)$ is not important for understanding our algorithmic ideas and the corresponding definitions are deferred to Section~\ref{ssec:remaining_alg_components}.

Having described \textsc{ProjectedVolume} that works when there are no corruptions, we turn to our algorithm. In the presence of even a few corruptions, \textsc{ProjectedVolume} may quickly eliminate $\stheta$ from $\calK_t$ as in a corrupted round it may be that $\sgn(\langle \vx_t, \stheta \rangle-\omega_t) \neq y_t$ (see Appendix~\ref{app:single_dimension} for such an attack). To deal with this issue, we run the algorithm in epochs consisting of multiple queries. At each epoch, we combine all its queries to compute a hyperplane cut that both preserves $\stheta$ in the knowledge set and also makes enough volumentric progress on the latter's size. We face three important design decisions discussed separately below: what occurs inside an epoch, when to stop an epoch, and how to initialize the next one.

\xhdr{What occurs within an epoch?}
$\textsc{CorPV.Known}$ (Algorithm~\ref{alg:corpv_known}) formalizes what happens within an epoch $\phi$. The knowledge set is updated only at its end; this means that all rounds $t$ in epoch $\phi$ have the same knowledge set $\calK_\phi$ (and hence, the same centroid $\vkappa_\phi$). If the width of the knowledge set in the direction of $\vx_t$ is smaller than $\eps$, then, as in $\textsc{ProjectedVolume}$, we make an \emph{exploit}  query precisely described in Section~\ref{ssec:remaining_alg_components}. Otherwise, we make an \emph{explore} query $\omega_t=\langle \vx_t,\vkappa_\phi\rangle$, described below. The epoch keeps track of all explore queries that occur within its duration in a set $\mathcal{A}_\phi$. When it ends ($\phi'=\phi+1)$, the knowledge set $\calK_{\phi+1}$ of the new epoch is initialized. In this subsection, $\Cyl(\calK_\phi,S_\phi)$ can be thought as the knowledge set $\calK_\phi$ and the sets $S_\phi$ and $L_\phi$ can be ignored; {these quantities are needed for technical reasons and are discussed in Section~\ref{ssec:remaining_alg_components}.}
\begin{algorithm}[htbp]
\caption{$\textsc{CorruptedProjectedVolume-Known}$ ($\textsc{CorPV.Known}$)}
\label{alg:corpv_known}
\DontPrintSemicolon
\SetAlgoNoLine
\textbf{Global parameters:} Budget $\bc$, accuracy $\varepsilon$ \;
Initialize $\phi = 1, \calK_\phi \gets \calK,  S_\phi \gets \emptyset$, $\vkappa_{\phi} \gets \text{apx-centr}(\Cyl(\calK_{\phi}, S_\phi)), L_{\phi} \gets \text{orth-basis}(\bbR^d), \mathcal{A}_\phi \gets \emptyset$ \label{step:init_known}\;
\For{$t \in [T]$}{
    Observe context $\vx_t$ and set $\phi'\gets \phi$ \;
     \textbf{if }{$w(\Cyl(\calK_\phi,S_\phi),\vx_t)\leq \eps$ or $L_\phi= \emptyset$} \textbf{then}
    {query point $\omega_t=\textsc{CorPV.Exploit}(\vx_t,\calK_\phi)$\;
}    
    \textbf{else }{\label{step:geq-eps}
    $(\phi',\mathcal{A}_{\phi})\gets \textsc{CorPV.Explore}(\vx_t,\phi,\vkappa_\phi,L_{\phi},\mathcal{A}_{\phi})$\; }
    \If(\tcp*[f]{epoch changed in \textsc{CorPV.Explore}}){$\phi'=\phi+1$}{
        Compute separating cut: $(\tvh,\tomega)\gets \textsc{CorPV.SeparatingCut}(\vkappa_{\phi}, S_\phi, L_\phi, \mathcal{A}_\phi)$\;
        
        Make updates $(\calK_{\phi+1},S_{\phi+1},L_{\phi+1})\gets \textsc{CorPV.EpochUpdates}(\calK_{\phi}, S_\phi, L_\phi,\tvh,\tomega)$\;
        Initialize next epoch: $\phi\gets \phi'$, $\vkappa_{\phi} \gets \text{apx-centroid}(\Cyl(\calK_{\phi}, S_\phi))$, and $\mathcal{A}_{\phi}\gets \emptyset$.
        }
}
\end{algorithm}

$\textsc{CorPV.Explore}$ (Algorithm~\ref{alg:corpv-explore}) describes how we handle an
explore query. When $\bc = 0$, we can eliminate the halfspace that lies in the opposite direction of the feedback $y_t$ after each explore query. However, when $\bc>0$, this may eliminate $\stheta$. Instead, we keep all explore queries that occurred in  epoch $\phi$ as well as the halfspace consistent with the observed feedback in $\mathcal{A}_\phi$ and wait until we have enough data to identify a halfspace of the knowledge set that includes $\stheta$ and makes sufficient volumetric progress; we refer to this as a \emph{separating cut}. We then move to epoch $\phi'=\phi+1$. 

\begin{algorithm}[htbp]
\caption{$\textsc{CorPV.Explore}$}
\label{alg:corpv-explore}
\DontPrintSemicolon
\SetAlgoNoLine
\textbf{Parameters:} $\vx_t$, $\phi$, $\vkappa_\phi$, $L_{\phi}$, $\mathcal{A}_{\phi}$\;
        Select query point $\omega_t = \langle \vx_t, \vkappa_\phi \rangle$ and observe feedback $y_t$.\;
        Update set $\calA_\phi$: \textbf{if} $y_t = + 1$: $\mathcal{A}_\phi \gets \mathcal{A}_\phi \bigcup \vH^+ \left(\Pi_{L_\phi}\vx_t,\omega_t \right)$ \textbf{else} $\mathcal{A}_\phi \gets \mathcal{A}_\phi \bigcup \vH^- \left(\Pi_{L_\phi}\vx_t,\omega_t \right)$.\;
        \label{step:change_sign}
 \textbf{if} {$\abs{\mathcal{A}_\phi}\geq 
        \tau$} \textbf{then} {Move to next epoch $\phi'
        \gets\phi+1$.
      } \tcp*{$\tau:=2d\cdot \bc\cdot (d+1)+1$}
        \textbf{else }{Stay in the same epoch $\phi'\gets\phi$.}\;
        \Return $(\phi',\mathcal{A}_\phi)$ 
\end{algorithm}

\xhdr{When does the epoch end?} We next explain how many queries are enough to guarantee that such a separating cut exists. To guarantee that $\stheta$ is preserved after the cut, we need to make sure that we only eliminate the halfspace where candidate parameters $\btheta$ are misclassified (i.e., are inconsistent with the agent's response) by at least $\bc + 1$ explore queries. Note that because there are at most $\bc$ corruptions, $\stheta$ can be misclassified by at most $\bc$ queries. In other words, if the set of all candidate parameters $\btheta$ that are misclassified by at most $\bc$ explore queries are on the non-eliminated halfspace of the hyperplane, then this hyperplane can serve as a separating cut. We refer to the set of these parameters as the \emph{protected region} as we aim to ensure that they are not eliminated.

At first glance, one might think that, after $2\bc + 1$ explore queries, we can directly use one of them as a separating cut. Interestingly, although this is the case for $d=2$, we show that for $d=3$, if we are restricted to separating cuts on the direction of existing explore queries, even arbitrarily many such queries do not suffice (see Appendix~\ref{ssec:improper_cut}). One key technical component in our analysis is to show that when we combine $\tau = 2d (d+1) \bc + 1$ explore queries, there exists a hyperplane that separates the convex hull of the protected region from a point $\vpst$ that is close to $\vkappa_\phi$ but also outside of that convex hull (separating hyperplane theorem). Since that hyperplane crosses close to $\vkappa_\phi$, we ensure enough volumetric progress. Since the non-eliminated halfspace includes all the parameters in the protected region, we ensure that we do not eliminate $\stheta$. The proof of this argument relies on the Carath\'eodory theorem and is informally sketched in the left figure of Fig.~\ref{fig:calP}.

\xhdr{How do we initialize the next epoch?} Since the existence of the separating cut is established, if we were able to compute this cut, we would be able to compute the knowledge set of the next epoch by taking its intersection with the positive halfspace of the cut. However, the separating hyperplane theorem provides only an existential argument and no direct way to compute the separating cut. To deal with this, recall that the separating cut should have $\vpst$ on its negative halfspace and the whole protected region in its positive halfspace. To compute it, we use the Perceptron algorithm \citep{R58}, which is typically used to provide a linear classifier for a set of (positive and negative) points in the \emph{realizable} setting (i.e., when there exists a hyperplane that correctly classifies these points). Perceptron proceeds by iterating across the points and suggesting a classifier. Every time that a point is misclassified, Perceptron makes an update. If the entire protected region is classified as positive and $\vpst$ as negative by Perceptron, then we return its hyperplane as the separating cut; otherwise we feed one point that violates the intended labeling to Perceptron. Perceptron makes a mistake and updates its classifier. The main guarantee of Perceptron is that, if there exists a classifier with margin of $\gamma>0$ (i.e., smallest distance to any data point is $\gamma$), the number of mistakes that Perceptron makes is at most $1/\gamma^2$ (precisely, the bound is in Lemma~\ref{lem:perceptron}).

The problem is that we do not know $\vpst$ and, even if we deal with this, $\vpst$ does not necessarily have a large enough margin from the protected region. To overcome this, we provide a sampling process that with big enough probability identifies a different point $\vq$, \emph{in the vicinity} of $\vpst$, whose margin to the protected region is lower bounded by $\gamma$. If $\vq$ does have the desired margin, the mistake bound of Perceptron controls the running time needed to identify the separating hyperplane. Otherwise, we proceed with a new random point. This takes care of the small margin issue with $\vpst$.

In order to pin down $\vpst$, we construct
a set of points $\Land_\phi$, which we call \emph{landmarks}, such that at least one of them is outside of the convex hull of the protected region. We run multiple versions of Perceptron, each with a random $\vpst\in\Land_\phi$ and a point $\vq$ randomly selected in a ball around $\vpst$ of an appropriately defined radius $\zeta$, which we denote by $\calB_{L_\phi}(\vpst, \zeta)$; this can be computed efficiently by normalizing $\calB_{L_{\phi}}(\vpst,\zeta)$ to a unit ball and using the techniques presented in \cite[Section~2.5]{BHK16}. If $\vq$ has a big-enough margin then the mistake bound of Perceptron ensures that $\textsc{CorPV.SeparatingCut}$ (Algorithm~\ref{algo:separating_cut}) returns the separating cut. Volume cap arguments show that point $\vq$ has the required margin with big enough probability (informally sketched in the right figure of Fig.~\ref{fig:calP}), which bounds the number of the outer \emph{while} loops and thereby the running time. 

\begin{figure}
    \centering
    \includegraphics[width=\textwidth]{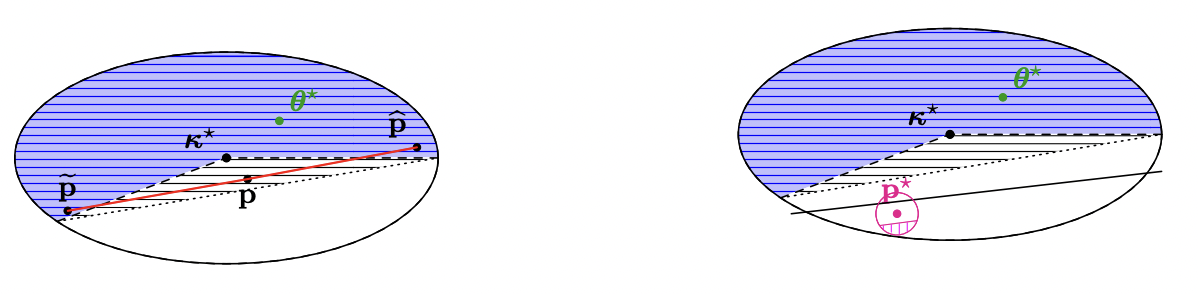}
    \caption{Informal sketch of the Carath{\'e}odory's theorem in two dimensions (left) and the computation of a separating cut (right). The ellipsoid corresponds to the current knowledge set $\calK$. The blue area corresponds to the $\bc$-protected region. The patterned area corresponds to its convex hull. In the left figure, any point $\vp$ of the convex hull can be written as a convex combination of points $\widehat{\vp}, \widetilde{\vp}$ from the protected region. Hence, all points inside the convex hull of the protected region (horizontal-line pattern) have been misclassified \emph{at most} $(d+1)\bc$ times. In the right figure, point $\vpst$ has been misclassified \emph{at least} $2d(d+1)\bc +1$. Sampling points from the spherical cap of the magenta ball (vertical-line pattern) gives a big enough margin for Perceptron. The black, solid line corresponds to the valid separating cut.}\label{fig:calP}
\end{figure}

\begin{algorithm}[htbp]
\caption{\textsc{CorPV.SeparatingCut}}\label{algo:separating_cut}
\DontPrintSemicolon
\SetAlgoNoLine
\textbf{Parameters:} $\vkappa_\phi,S_{\phi},L_\phi,\mathcal{A}_{\phi}$\tcp*[r]{size of small dimensions $\delta:=\frac{\eps}{4(d + \sqrt{d})}$}
\label{step:resample}
Fix landmarks $\Land_\phi = \left\{\vkappa_\phi \pm  \bnu \cdot e_i, \forall e_i \in E_\phi\right\}$ where $E_{\phi}\gets \text{orth-basis}(L_\phi)$ and ${\bnu} = \frac{\eps - 2 \sqrt{d} \cdot \delta}{4\sqrt{d}}$\label{step:landmarks}\;
Let $\vw := (\tvh, \tomega) \in \bbR^{d+1}$. \tcp*[r]{Perceptron hyperplane}
        \While{$\mathtt{true}$}{
   Initialize $\vw = [1]^d$ and mistake counter to $M \gets 0$. \tcp*[r]{Perceptron initialization}
    Sample a random point $\vq$ from ball $\calB_{L_{\phi}}(\vpst, \zeta)$ with radius $\zeta=\bnu$ around random $\vpst\in\Land_\phi$. \label{step:sample-ball}  \;
        \While(\tcp*[f]{Perceptron mistake bound}){$M < \frac{d-1}{\zeta^2\cdot \ln^2(3/2)}$\label{step:perc-mb}}{
        Set $m \gets 0$. \;
        \lIf(\tcp*[f]{Perceptron update}){$\vq \in \vH^{+}
        (\tvh,\tomega)$}{ $\vw \gets \vw - q$
        ; set $m \gets m +1$. \label{step:check-centr-pt1}
        }
        \lIf(\tcp*[f]{Perceptron update}){$\vkappa_\phi \in \vH^{-}
        (\tvh, \tomega)$} {$\vw \gets \vw + q$; set $m \gets m +1$. \label{step:check-centr-pt2}
        }
        \label{step:comb1}\For{subsets $D_\phi \subseteq \mathcal{A}_\phi$ such that $|D_\phi| = \bc$ \label{step:subsets}}{
            Let $P$ be the polytope created by halfspaces of $\mathcal{A}_\phi\setminus D_\phi$ and $\vH^-(\tvh,\tomega)$. \label{step:lp1} \;
            \lIf(\tcp*[f]{Perceptron update}){$P \neq \emptyset$}{
            $\vw \gets \vw + \vz$ for $\vz \in P$; 
            set $m \gets m+1$.\label{step:lp2}
            }
        }
        \lIf{$m \neq 0$}  {increase mistake counter $M \gets M+m$.} \label{step:check-calpc}
        \lElse {
        \textbf{return} $(\tvh, \tomega)$ 
        }
}}
\end{algorithm}

We discuss next the computational complexity of our algorithm. As written in lines \ref{step:subsets}-\ref{step:lp2} of Algorithm~\ref{algo:separating_cut}, checking whether the protected region is contained in the positive halfspace of the Perceptron hyperplane requires going over all $\binom{|\mathcal{A}_\phi|}{\bc}$ ways to remove $\bc$ hyperplanes and checking whether the resulting region intersects the negative halfspace (if this happens, then points with misclassification of at most $\bc$ may be misclassified). This suggests a running time that is exponential in $\bc$. Fortunately, as detailed in Section~\ref{ssec:behavioral_results}, to handle the unknown corruption or the other intricacies in our actual behavioral model beyond the $\bc$-known-corruption, we only run this algorithm with $\bc\approx\log(T)$. As a result, the final running time of our algorithms is quasi-polynomial in $T$.

\subsection{Adapting to an unknown corruption level}\label{ssec:behavioral_results}
We now provide the algorithm when the corruption level $C$ is unknown (Algorithm~\ref{algo:corpvAI}). The places where $\textsc{CorPV.AC}$ differs from $\textsc{CorPV.Known}$ are in lines \ref{step:layer_sampling}, \ref{step:first_robust_layer}-\ref{step:exploit_robust}, \ref{step:consistent_non_robust}-\ref{step:end_updates_non_robust}. This section extends ideas from \cite{LML18} for multi-armed bandits to contextual search which poses an additional difficulty as the search space is continuous. This is not as straightforward as a doubling trick for the unknown $C$, as both the loss and the corruption $c_t$ are unobservable; doubling tricks require identifying a proxy for the quantity under question and doubling once a threshold is reached.

The basic idea is to maintain multiple copies of $\textsc{CorPV.Known}$, which we refer to as \emph{layers}. At every round, we decide which copy to play probabilistically. Each copy $j$ keeps its own environment with its corresponding epoch $\phi(j)$ and knowledge set $\calK_{j,\phi(j)}$. Smaller values $j$ for the copies are \emph{less robust} to corruption and we impose a monotonicity property among them by ensuring that the knowledge sets are nested, i.e., $\calK_{j,\phi(j)}\subseteq \calK_{j',\phi(j')}$ for $j\leq j'$. This allows more robust layers to correct mistakes of less robust layers that may inadvertently eliminate $\stheta$ from their knowledge set.

More formally, we run $\log T$ parallel versions of the $\bc$-known-corruption algorithm with a corruption level of $\bc\approx\log(T)$. At the beginning of each round $t$, the algorithm randomly selects layer $j$ with probability $2^{-j}$ (line \ref{step:layer_sampling}) and executes the layer's algorithm for this round. Since the adversary does not know the randomness in the algorithm, this makes layers $j$ with $C\leq 2^j$ robust to corruption level of $C$. The reason is that the expected number of corruptions occurring at layer $j$ is at most $1$ and, with high probability, less than $\log T$ which is accounted by the $\bc=\log T$ upper bound on corruption based on which we run $\textsc{CorPV.Known}$ on this layer. 

However, there is a problem: all layers with $C>2^j$ are not robust to corruption of $C$ so they may eliminate $\stheta$ and, to make things worse, the algorithm follows the recommendation of these layers with large probability. As a result, we need a way to supervise their decisions by more robust layers. To achieve that, we use nested active sets; when the layer $j_t$ selected at round $t$ proceeds with a separating cut on its knowledge set, we also make the same cut on all less robust layers $j'<j_t$ (lines \ref{step:consistent_non_robust}-\ref{step:epoch_update_non_robust}). This allows non-robust layers that have eliminated $\stheta$ from their knowledge set to correct their mistakes by removing the incorrect parameters of their version space that they had converged to from their knowledge sets.
\begin{algorithm}[htbp]
\caption{$\textsc{CorPV.AC}$ (Adversarial Corruption version)}\label{algo:corpvAI}
\DontPrintSemicolon
\SetAlgoNoLine
\textbf{Global parameters:} Failure probability $\beta$, budget $\bc:=2\log (T/\beta)$, accuracy $\varepsilon$ \;
Initialize layer-specific quantities for all layers $j \in [\log T]$: $\phi(j) = 1$, $\calK_{j,\phi(j)} \gets \calK$, $S_{j,\phi(j)} \gets \emptyset,\vkappa_{j,\phi(j)} \gets \text{apx-centroid}(\Cyl(\calK_{j,\phi(j)}, S_{j,\phi(j)})), L_{j,\phi(j)} \gets \text{orthonorm-basis}(\bbR^d), \mathcal{A}_{j,\phi(j)} \gets \emptyset$\;
\For{$t \in [T]$}{
    Sample layer $j _t\in [\log T]$: $j_t=j$ with probability $2^{-j}$; with remaining probability, $j_t = 1$. \label{step:layer_sampling} \;
     Observe context $\vx_t$ and set $\phi'\gets \phi(j_t)$. \;
     \If{$w(\Cyl(\calK_{j_t,\phi(j_t)},S_{j_t,\phi(j_t)})\leq \eps$ or $L_{j_t,\phi(j_t)}\neq \emptyset$
     }
    {
        Find smallest more robust exploit layer $j \geq j_t$: $j=\min_{j'\geq j_t}{w(\calK_{j', \phi(j')},\vx_t) \leq \eps}$.\label{step:first_robust_layer}\;
    Compute \emph{exploit} query point for this layer: $\omega_t=\textsc{CorPV.Exploit}(\vx_t,\calK_{j,\phi(j)})$
    \label{step:exploit_robust}.
}    
    \textbf{else }$(\phi',\vpst,\mathcal{A}_{j_t,\phi(j_t)})\gets \textsc{CorPV.Explore}(\vx_t,\phi(j_t),\vkappa_{j_t,\phi(j_t)}, L_{j_t,\phi(j_t)},\mathcal{A}_{j_t,\phi(j_t)})$\; 
    \If(\tcp*[f]{epoch changed in \textsc{CorPV.Explore}}){$\phi'=\phi(j_t)+1$}{
      $(\tvh,\tomega)\gets \textsc{CorPV.SeparatingCut}(\vkappa_{j_t,\phi(j_t)}, S_{j_t,\phi(j_t)},L_{j_t,\phi(j_t)}, \mathcal{A}_{j_t,\phi(j_t)})$\;
     $(\calK_{j_t,\phi'},S_{j_t,\phi'},L_{j_t,\phi'})\gets \textsc{CorPV.EpochUpdates}(\calK_{j_t,\phi(j_t)}, S_{j_t,\phi(j_t)}, L_{j_t,\phi(j_t)},\tvh,\tomega)$\;
       $\phi(j_t)\gets \phi'$, $\vkappa_{j_t,\phi(j_t)} \gets \text{apx-centroid}(\Cyl(\calK_{j_t,\phi(j_t)}, S_{j_t,\phi(j_t)}))$, and  $\mathcal{A}_{j_t,\phi(j_t)}\gets \emptyset$.\;
    \For(\tcp*[f]{Make less robust layers consistent with $j_t$}){$j' \leq j_t$}{ \label{step:consistent_non_robust}
         $(\calK_{j',\phi(j')},S',L')\gets \textsc{CorPV.EpochUpdates}(\calK_{j',\phi(j')}, S_{j',\phi(j')}, L_{j',\phi(j')},\tvh,\tomega)$\label{step:epoch_update_non_robust}\;
        \If{$\vkappa_{j',\phi(j')}\notin \calK_{j',\phi(j')} \textbf{ or } S'\neq S_{j',\phi(j')}$ \label{step:check_for_update_epoch_non_robust}} {
       $\phi(j')\gets \phi(j')+1$, $(S_{j',\phi(j')+1},L_{j',\phi(j')+1})\gets (S',L')$, $\mathcal{A}_{j',\phi(j')}\gets \emptyset$\;
        $\vkappa_{j',\phi(j')} \gets \text{apx-centroid}(\Cyl(\calK_{j',\phi(j')}, S_{j',\phi(j')}))$. \label{step:end_updates_non_robust}}
      }
      }
}
\end{algorithm}
There are two additional points that arise in the contextual search setting. First, the aforementioned cut may not make enough volumetric progress in the knowledge sets of layers $j'<j_t$. As a result, as described in lines \ref{step:check_for_update_epoch_non_robust}-\ref{step:end_updates_non_robust}, we only move to the next epoch for layer $j'$ if its centroid is removed from the knowledge set or another change discussed in Section~\ref{ssec:remaining_alg_components} is triggered. Second, with respect to exploit queries, we want to make sure that we do not keep confidence on non-robust layers. As a result, we follow the exploit recommendation of the largest layer $j\geq j_t$ that has converged to exploit recommendation in this direction, i.e., $w(\calK_{j,\phi(j)})\leq \eps$ (lines \ref{step:first_robust_layer}- \ref{step:exploit_robust}). This eventually allows us to bound the regret from all non-robust layers by the smallest robust layer $\lceil\log C\rceil$ (see Section~\ref{ssec:analysis_adv_irrationality}).

\subsection{Remaining components of the algorithm.}\label{ssec:remaining_alg_components}
The presentation of the algorithm until this point has disregarded some technical parts. We now discuss each of them so that the algorithm is fully defined.

\xhdr{Cylindrification, small, and large dimensions.} To facilitate relating the volume progress to a bound on the explore queries, similar to \cite{LLV18}, we keep two sets of vectors/dimensions $S_\phi$ and $L_\phi$ whose union creates an orthonormal basis. The set $S_\phi$ has \emph{small dimensions} $\vs\in S_\phi$ with width $w(\calK_\phi,\vs)\leq\delta$ for $\delta:=\frac{\eps}{4(d+\sqrt{d})}$. The set $L_\phi$ is any basis for the subspace orthogonal to $S_\phi$, with the property that $\forall \vl \in L_\phi: w(\calK_\phi,\vl)> \delta$. The set $L_\phi$ completes an orthonormal basis maintaining that $\vl\in L_\phi: w(\calK_\phi,\vl)> \delta$. When an epoch ends, sets $S_\phi$ and $L_\phi$ are updated together with the knowledge set $\calK_\phi$ as described in $\textsc{CorPV.EpochUpdates}$ (Algorithm~\ref{alg:corpv_epoch_updates}): if the new direction $\tvh$ of the separating cut projected to the large dimensions has width $w(\Pi_{L_\phi}\calK_{\phi+1},\tvh)\leq \delta$, we add it to
$S_{\phi+1}$ and we update $L_{\phi+1}$ to keep the invariant that no large dimension has width larger than $\delta$.
\begin{algorithm}[htbp]
\caption{$\textsc{CorPV.EpochUpdates}$}
\label{alg:corpv_epoch_updates}
\DontPrintSemicolon
\SetAlgoNoLine
\textbf{Parameters: } $\calK_{\phi}, S_\phi, L_\phi,\tvh,\tomega$\;
Update $\calK_{\phi+1} \gets \calK_{\phi} \bigcap \vH^+(\tvh, \tomega)$ and save temporary sets $\tS \gets S_\phi$ and $\tL \gets L_\phi$. \;
        \label{step:small-large}\If(\tcp*[f]{size of small dimensions $\delta:=\frac{\eps}{4(d + \sqrt{d})}$}){$w\left(\Pi_{L_{\phi}}\calK_{\phi+1},\tvh\right) \leq \delta$}{
        Add hyperplane to small dimensions $\tS \gets S_{\phi} \bigcup \left\{\tvh\right\}$.\;
        Compute orthonormal basis for new large dimensions $\tL$ (without $\tS$).\;
        }
        Update $L_{\phi+1}\gets \tL \setminus \{e_i\in \tL: w(\calK_{\phi+1},e_i)\leq \delta\}$ and $S_{\phi+1}\gets \tS \bigcup \prn*{\tL \setminus L_{\phi+1}}$. \label{step:keep-large}\;
        \Return $(\calK_{\phi+1},S_{\phi+1},L_{\phi+1})$
\end{algorithm}

Overall, the potential function we use to make sure that we make progress depends on the projected volume of the knowledge set on the large dimensions $L_\phi$, as well as the number of small dimensions $S_\phi$. This is why in lines~\ref{step:first_robust_layer}- \ref{step:exploit_robust} of Algorithm~\ref{algo:corpvAI}, we update the epoch of less robust layers when one of these two measures of progress is triggered.
Sets $S_\phi$ and $L_\phi$ serve in explaining which dimensions are identified well enough so that we can focus our attention on making progress in the remaining dimensions. For this to happen, an important notion is that of \emph{Cylindrification} which creates a box covering the knowledge set and removes the significance of the small dimensions. 

\begin{definition}[Cylindrification, Definition~4.1 of \cite{LLV18}]
\label{def:cylindrification}
Given a set of orthonormal vectors $S = \{\vs_1, \dots, \vs_n\}$, let $L = \{ \vu | \langle \vu, \vs \rangle = 0; \forall \vs \in S\}$ be a subspace orthogonal to $\texttt{\upshape span}(S)$ and $\Pi_L \calK$ be the projection of convex set $\calK \subseteq \bbR^d$ onto $L$. We define:
\begin{equation*}
\Cyl(\calK,S) := \left\{\vz + \sum_{i = 1}^n b_i \vs_i \Big| \vz \in \Pi_L \calK \text{ and } \min_{\btheta \in \calK} \langle \btheta, \vs_i \rangle \leq b_i \leq \max_{\btheta \in \calK} \langle \btheta, \vs_i \rangle\right\}.
\end{equation*}
\end{definition}

By working with the Cylindrification $\Cyl(\calK_\phi,S_\phi)$ (Definition~\ref{def:cylindrification}) rather than the original set of small dimensions $S_\phi$, we can ensure that we make queries that make volumetric progress with respect to the large dimensions, that have been less well understood.
This is the reason why the landmark $\vpst$ we identify lives in the large dimensions while being close to the centroid $\vkappa_\phi$ (line~\ref{step:landmarks}).

\xhdr{Exploit queries for different loss functions.} When the width of the knowledge set on the direction of the incoming context is small, i.e., $w(\calK_\phi,\vx_t)\leq \eps$, we proceed with an exploit query. This module evaluates the loss of each query $\omega$ with respect to any parameter that is consistent with the knowledge set, i.e., $\stheta\in\calK_\phi$. It then employs a min-max approach by selecting the query $\omega_t$ that has the minimum loss for the worst-case selection of $\btheta\in\calK_\phi$. For the $\eps$-ball loss, any query point $\omega_t=\langle \vx_t, \btheta'\rangle$ with $\btheta'\in\calK_\phi$ results in loss equal to $0$; this is what $\textsc{ProjectedVolume}$ also does to achieve optimal regret for the $\eps$-ball loss function. 
\begin{algorithm}[htbp]
\caption{$\textsc{CorPV.Exploit}$}
\label{alg:corpv-exploit}
\DontPrintSemicolon
\SetAlgoNoLine
\textbf{Parameters:} $\vx_t$, 
$\calK_\phi$\;
        Compute  query point $\omega_t=\min_{\omega\in\Omega} \max_{\btheta\in\calK_\phi} \ell(\omega, \langle \btheta, \vx_t\rangle,\langle \btheta, \vx_t\rangle)$\;
        \Return $\omega_t$
\end{algorithm}

Moving to the pricing loss and assuming that the query point is $\omega_t = \langle \vx_t, \btheta \rangle$ for some $\btheta \in \calK_\phi$, although the distance of $\stheta$ to hyperplane $(\vx_t,\omega_t)$ is less than $\eps$, there is a big difference based on which side of the hyperplane $\stheta$ lies in (i.e., whether $\stheta \in \vH^+(\vx_t, \omega_t)$ or not). Specifically, if $\omega_t> \langle \vx_t, \stheta_t \rangle$ then a fully rational agent does not buy and we get zero revenue, thereby incurring a loss of $\langle \vx_t, \stheta \rangle$. On the other hand, querying $\omegast = \langle \vx_t, \stheta \rangle$ would lead to a purchase from a fully rational agent, and hence, to a pricing loss of $0$. As we discuss in Section~\ref{sec:gradient_descent}, this discontinuity in pricing loss poses further complications in extending other algorithms to contextual search. 

To deal with this discontinuity, we can query point $\omega_t$ with $\omega_t = \langle \vx_t, \stheta\rangle - \eps$, as the value of the fully rational agent is certainly above this price. In fact, when dealing with boundedly rational agents (Appendix~\ref{sec:extensions_behavior}), such a lower price is essential even if we know $\stheta$ in order to account for the noise and there the definition of $\omega_t$ accounts for the distributional information about the noise.
\section{Analysis}\label{sec:corpv_analysis}
In this section we provide the analysis of the algorithm introduced in Section~\ref{sec:corpv_algorithm}. We first analyze the result for the intermediate
$\bc$-known corruption setting.  This setting allows us to introduce our key additional ideas and serves as a building block to extend to both the setting where $C$ is unknown (Theorem~\ref{thm:agnostic-corpv}) as well as the bounded rationality behavioral model (Theorem~\ref{thm:bounded}).

\subsection{Existence of a separating hyperplane at the end of any epoch.}\label{ssec:existence}
We first show, in Lemma~\ref{lem:existence}, that after $\tau = 2d \cdot \bc (d+1) + 1$ rounds, there exist $\vhst_\phi\in\bbR^d$ and $\omegast_\phi\in\bbR$ such that the hyperplane $\left(\vhst_\phi,\omegast_\phi\right)$ is a separating cut, i.e., it passes close to the approximate centroid $\vkappa_\phi$ (and therefore also to the centroid $\vkappast_\phi$), and has in the entirety of one of its halfspaces only parameters ``misclassified'' at least  $\bc+1$ explore times. The results of this subsection hold for \emph{any} scalar $\delta <\frac{\eps}{2\sqrt{d} + 4d}$. For the analysis, we make three simplifications (all without loss of generality) in an effort to ease the notation. First, we assume that $y_t = +1, \forall t \in [T]$. This is indeed without loss of generality since the algorithm can always negate the received {context} $\vx_t$ and the chosen {query} $\omega_t$ to force $y_t = +1$ (Step~\ref{step:change_sign} of Algorithm~\ref{alg:corpv-explore}). Second, for rounds where nature's answer is arbitrary, we assume that the perceived value is $\tv_t = \langle \vx_t, \ttheta \rangle$, where $\ttheta \in \calK_0$ and it can change from round to round. For all other rounds $\ttheta = \stheta$. Third, we assume that all hyperplanes have unit $\ell_2$ norm.  

\begin{lemma}\label{lem:existence}
For any epoch $\phi$, scalar $\delta \in \left(0,\frac{\eps}{2\sqrt{d} + 4d}\right)$, and scalar {$\bnu = \frac{\eps - 2 \sqrt{d} \cdot \delta}{4\sqrt{d}}$}, after $\tau=2 d \cdot \bc (d+1) + 1$ rounds, there exists a hyperplane $(\vhst_{\phi}, \omegast_\phi )$ \emph{orthogonal to all small dimensions $S_\phi$} 
such that the resulting halfspace $\vH^+(\vhst_\phi,\omegast_\phi)$ always contains $\stheta$ and $\dist(\vkappa_\phi, (\vhst_\phi, \omegast_\phi )) \leq {\bnu}$, where by $\dist(\vkappa, (\vh, \omega))$ we denote the distance of point $\vkappa$ from hyperplane $(\vh, \omega)$, i.e., $\dist(\vkappa, (\vh, \omega)) = \frac{|\langle \vkappa, \vh \rangle - \omega|}{\|\vh\|}$. 
\end{lemma}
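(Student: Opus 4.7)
Define the $\bc$-protected region $\calP_\phi^{(\bc)} \subseteq \Cyl(\calK_\phi, S_\phi)$ as the set of $\btheta$ misclassified (lying in the eliminated halfspace) by at most $\bc$ of the explore queries in $\mathcal{A}_\phi$; since non-corrupted feedback is consistent with $\stheta$ and corruption affects at most $\bc$ rounds in the $\bc$-known-corruption setting, $\stheta \in \calP_\phi^{(\bc)}$. The task reduces to exhibiting a hyperplane with normal orthogonal to $S_\phi$, placing $\mathrm{conv}(\calP_\phi^{(\bc)})$ in its positive halfspace and within distance $\bnu$ of $\vkappa_\phi$. A Carath\'eodory argument extends the misclassification count: any $\vp \in \mathrm{conv}(\calP_\phi^{(\bc)})$ is a convex combination of $d+1$ points from $\calP_\phi^{(\bc)}$, and since each kept halfspace is convex, $\vp$ lies outside it only if some witness does; union-bounding, $\mathrm{conv}(\calP_\phi^{(\bc)}) \subseteq \calP_\phi^{((d+1)\bc)}$. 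Hence it suffices to exhibit a point in $\vkappa_\phi + \bnu \cdot L_\phi$ that is misclassified at least $(d+1)\bc + 1$ times.

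To this end, consider the at most $2d$ landmarks $\Land_\phi = \{\vkappa_\phi \pm \bnu e_i : e_i \in E_\phi\}$. The key geometric claim is that every explore query misclassifies at least one landmark. Because the explore condition enforces $w(\Cyl(\calK_\phi, S_\phi), \vx_t) > \eps$ while every small-dimension width is bounded by $\delta$, the $L_\phi$-projection $\|\Pi_{L_\phi} \vx_t\|$ is substantial, so some $|\langle \vx_t, e_i \rangle| \geq (\eps - 2\sqrt{d}\delta)/\sqrt{d}$. Picking the sign $s$ opposite to that of $\langle \vx_t, e_i \rangle$, the signed distance of the landmark $\vkappa_\phi + s\bnu e_i$ to the query hyperplane $(\Pi_{L_\phi}\vx_t, \omega_t)$ decomposes as $-\bnu |\langle \vx_t, e_i \rangle|$ plus an error term absorbing the $\langle \Pi_{S_\phi} \vx_t, \vkappa_\phi \rangle$ offset (bounded by $\sqrt{d}\delta$ through the small widths) and the approximate-centroid slack; the calibration $\bnu = (\eps - 2\sqrt{d}\delta)/(4\sqrt{d})$ makes the first term strictly dominate, yielding misclassification. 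Pigeonholing $\tau = 2d(d+1)\bc + 1$ per-query misclassifications across at most $2d$ landmarks gives some $\vpst \in \Land_\phi$ misclassified at least $(d+1)\bc + 1$ times, which by the Carath\'eodory inclusion lies outside $\mathrm{conv}(\calP_\phi^{(\bc)})$.

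Because misclassification depends only on the $L_\phi$-component of a point (each query normal $\Pi_{L_\phi}\vx_t$ lies in $L_\phi$), I pass to the projection $\pi := \Pi_{L_\phi}$ and apply the separating hyperplane theorem inside $L_\phi$ to the point $\pi(\vpst)$ and the closed convex set $\mathrm{conv}(\pi(\calP_\phi^{(\bc)}))$. This produces $\vhst_\phi \in L_\phi$ (hence orthogonal to $S_\phi$) and an intercept $\omegast_\phi$ with $\vpst \in \vH^-(\vhst_\phi, \omegast_\phi)$ and $\mathrm{conv}(\calP_\phi^{(\bc)}) \ni \stheta$ in $\vH^+(\vhst_\phi, \omegast_\phi)$. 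Choosing the hyperplane to pass through $\vpst$ itself preserves the containment of the convex hull, and Cauchy--Schwarz yields $\dist(\vkappa_\phi, (\vhst_\phi, \omegast_\phi)) = |\langle \vhst_\phi, \vkappa_\phi - \vpst \rangle|/\|\vhst_\phi\| \leq \|\vkappa_\phi - \vpst\| = \bnu$, since $\vpst - \vkappa_\phi \in L_\phi$.

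The main obstacle I anticipate is the geometric claim that every explore query misclassifies at least one landmark: it requires carefully tracking the approximate-centroid perturbation, the $S_\phi$-offset of $\vkappa_\phi$, and the width lower bound on $\|\Pi_{L_\phi}\vx_t\|$ so that the $-\bnu|\langle \vx_t, e_i\rangle|$ term strictly outweighs the error. The remaining ingredients -- Carath\'eodory, pigeonhole, and the dimensionally-reduced separating hyperplane theorem -- are standard once these calibrations are in place.
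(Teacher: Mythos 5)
Your overall route is the same as the paper's: protect the parameters misclassified at most $\bc$ times, use Carath\'eodory to cap the misclassification count inside the convex hull at $(d+1)\bc$, charge each explore query to one of the $2d$ landmarks $\vkappa_\phi\pm\bnu\,\mathbf{e}_i$, pigeonhole over $\tau=2d\bc(d+1)+1$ queries, and separate the over-charged landmark from the hull inside $L_\phi$. Your projection argument for orthogonality to $S_\phi$ and taking the hyperplane through $\vpst$ to get $\dist(\vkappa_\phi,\cdot)\le\|\vkappa_\phi-\vpst\|=\bnu$ are, if anything, slightly cleaner than the paper's corresponding wording.

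The gap sits exactly at the step you flag as the main obstacle, and your sketched resolution does not work as stated. You use a single, unmargined notion of ``misclassified'' relative to the stored hyperplanes $(\Pi_{L_\phi}\vx_t,\omega_t)$ and need it to deliver two things simultaneously: (i) $\stheta$ is charged only on corrupted rounds, and (ii) every explore query charges some landmark. For (i), uncorrupted feedback only certifies consistency of $\stheta$ with the unprojected hyperplane $(\vx_t,\omega_t)$; transferring this to the projected hyperplane leaves an $S_\phi$-component error, and your bound $|\langle\Pi_{S_\phi}\vx_t,\vkappa_\phi\rangle|\le\sqrt{d}\delta$ is false: the small widths only bound $\langle\Pi_{S_\phi}\vx_t,\vp-\vp'\rangle$ for $\vp,\vp'\in\calK_\phi$, i.e.\ differences of points of the knowledge set, not the absolute offset $\langle\Pi_{S_\phi}\vx_t,\vkappa_\phi\rangle$, which can be of order $1$. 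The paper resolves this tension by recentring every query at $\vkappa_\phi$ and introducing the $\nu$-margin projected undesirability with $\unu=\sqrt{d}\delta<\nu<\bnu$ (Definition~\ref{def:undesir-final}, Lemmas~\ref{lem:undesir-gt}, \ref{lem:protect-stheta}, \ref{lem:landmark2}): the margin $\nu>\sqrt{d}\delta$ absorbs precisely the difference-type $S_\phi$ error on the $\stheta$ side, while on the landmark side the displacement $\bnu$ along a coordinate with $|\langle\mathbf{e}_i,\vx_t\rangle|$ large must beat that same margin. Without such a two-sided margin your accounting has nothing to absorb the error, so claims (i) and (ii) cannot both hold for the notion you defined. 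A smaller slip in the same step: the width condition $w(\Cyl(\calK_\phi,S_\phi),\vx_t)\ge\eps$ only yields $\|\Pi_{L_\phi}\vx_t\|\ge(\eps-2\sqrt{d}\delta)/4$ (one first passes to a point with $|\langle\vx_t,\vp-\vkappa_\phi\rangle|\ge\eps/2$ and uses $\|\Pi_{L_\phi}(\vp-\vkappa_\phi)\|\le 2$), hence some $|\langle\mathbf{e}_i,\vx_t\rangle|\ge(\eps-2\sqrt{d}\delta)/(4\sqrt{d})=\bnu$, not the stronger $(\eps-2\sqrt{d}\delta)/\sqrt{d}$ you assert; your final calibration must be checked against this weaker coordinate bound.
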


\noindent At a high level, the tuning of $\bnu$ depends on two factors. First, in order to make sure that we make enough progress in terms of volume elimination, despite the fact that we do not make a cut through $\vkappa_\phi$, we need $\bnu$ to be close enough to $\vkappa_\phi$ (Lemma~\ref{lem:approx_grunbaum}). Second, we need to guarantee that there exists at least one point with a very high undesirability level (Lemma~\ref{lem:landmark2}). For the analysis, we define the $\nu$-margin projected undesirability levels, which we later use for some fixed $\nu<\bnu$:
\begin{definition}[$\nu$-Margin Projected Undesirability Level]\label{def:undesir-final}
Consider an epoch $\phi$, a scalar {$\nu$}, and a point $\vp$ in $\calK_\phi$. Given the set $\mathcal{A}_\phi = \{(\Pi_{L_\phi}\vx_t,\omega_t)\}_{t \in [\tau]}$, we define $\vp$'s \emph{$\nu$-margin projected undesirability level}, denoted by $u_\phi(\vp,\nu)$, as the number of rounds within epoch $\phi$, for which
\[u_\phi(\vp,\nu)=\sum_{t\in [\tau]}\1\left\{\left(\left\langle\vp-\vkappa_{\phi},\Pi_{L_{\phi}}\vx_t\right\rangle+\nu\right)<0\right\}.\]
\end{definition}
Intuitively, $u_\phi(\vp,\nu)$ gives penalty to a point $\vp$ if it is far (more than $\nu$) from the negative halfspace of the query (when projected to the large dimensions $L_\phi$). We can then show (Lemma~\ref{lem:undesir-gt}) that the undesirability level of a point $\vp$ during an epoch $\phi$ corresponds to the number of times during epoch $\phi$ that $\vp$ and $\ttheta$ were at opposite sides of hyperplane $(\Pi_{L_\phi}\vx_t, \nu + \omega_t )$ for any $\nu > \bnu$.

Armed with this, we define the $\bc$-protected region in large dimensions, $\calP(\bc, \nu)$, which is the set of points in $\calK_\phi$ with $\nu$-margin projected undesirability level at most~$\bc$. Mathematically: \[ \calP(\bc{,\nu})= \{\vp \in \calK_\phi: u_{\phi}(\vp, \nu) \leq \bc \}\]

The next lemma establishes that if we keep set $\calP(\bc,\nu)$ intact in the convex body formed for the next epoch $\calK_{\phi+1}$, then we are guaranteed to not eliminate point $\stheta$ (proof in Appendix~\ref{app:existence}).

\begin{lemma}\label{lem:protect-stheta}
If $\nu > \unu$ (where $\unu=\sqrt{d}\delta$), then the ground truth $\stheta$ is included in the set $\calP(\bc{,\nu})$.
\end{lemma}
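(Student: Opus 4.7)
The plan is to show that the indicator in the definition of $u_\phi(\stheta,\nu)$ vanishes on every \emph{non-corrupted} round of epoch $\phi$; since at most $\bc$ rounds of the entire horizon are corrupted (and hence at most $\bc$ of the explore rounds in epoch $\phi$ are corrupted), this immediately gives $u_\phi(\stheta,\nu)\leq \bc$, i.e., $\stheta\in\calP(\bc,\nu)$.

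First I would translate the observed feedback into a linear inequality in $\stheta$. Using the WLOG conventions stated at the start of Section~\ref{ssec:existence} (namely $y_t=+1$ on every round, and $\ttheta=\stheta$ on non-corrupted rounds), the explore query $\omega_t=\langle \vx_t,\vkappa_\phi\rangle$ combined with $\langle \vx_t,\stheta\rangle\geq \omega_t$ gives $\langle \stheta-\vkappa_\phi,\vx_t\rangle\geq 0$. Splitting $\vx_t = \Pi_{L_\phi}\vx_t + \Pi_{S_\phi}\vx_t$ and rearranging yields
\[
\langle \stheta-\vkappa_\phi,\Pi_{L_\phi}\vx_t\rangle \;\geq\; -\langle \stheta-\vkappa_\phi,\Pi_{S_\phi}\vx_t\rangle.
\]

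Second, I would bound the small-dimension residual on the right. Writing $S_\phi=\{\vs_1,\dots,\vs_k\}$ as an orthonormal set with $k\leq d$, the small-dimension property $w(\calK_\phi,\vs_i)\leq \delta$ together with the fact that $\stheta\in\calK_\phi$ and $\vkappa_\phi\in\Cyl(\calK_\phi,S_\phi)$ (which preserves the $\vs_i$-extent of $\calK_\phi$) force $|\langle \stheta-\vkappa_\phi,\vs_i\rangle|\leq \delta$ for each $i$. A Cauchy--Schwarz estimate, using $\|\vx_t\|=1$, then gives
\[
\bigl|\langle \stheta-\vkappa_\phi,\Pi_{S_\phi}\vx_t\rangle\bigr| \;\leq\; \sqrt{\textstyle\sum_i\langle \vx_t,\vs_i\rangle^2}\cdot \sqrt{\textstyle\sum_i\langle \stheta-\vkappa_\phi,\vs_i\rangle^2}\;\leq\; \sqrt{k}\,\delta \;\leq\; \sqrt{d}\,\delta \;=\; \unu.
\]
Hence on every non-corrupted round, $\langle \stheta-\vkappa_\phi,\Pi_{L_\phi}\vx_t\rangle + \nu \geq \nu-\unu > 0$, and the indicator is zero.

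Summing over $t\in[\tau]$ then shows $u_\phi(\stheta,\nu)$ is bounded by the number of corrupted rounds within epoch $\phi$, which is at most $\bc$, establishing the claim. The only mild subtlety is justifying the small-dimension bound $|\langle\stheta-\vkappa_\phi,\vs_i\rangle|\leq \delta$ despite $\vkappa_\phi$ being an approximate centroid of the cylindrification rather than a point of $\calK_\phi$; this is resolved by the definition of $\Cyl(\calK_\phi,S_\phi)$, which constrains the $\vs_i$-coordinate to exactly the $\calK_\phi$-extent. Beyond this, the argument is a direct orthogonal decomposition plus Cauchy--Schwarz, and no other obstacle is expected.
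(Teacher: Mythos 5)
Your proposal is correct and follows essentially the same route as the paper: the paper simply factors the identical computation into Lemma~\ref{lem:undesir-gt} (the feedback inequality $\langle \stheta-\vkappa_\phi,\vx_t\rangle\geq 0$ under the $y_t=+1$ convention, the split into $\Pi_{L_\phi}$ and $\Pi_{S_\phi}$ components, and the Cauchy--Schwarz bound $|\langle \stheta-\vkappa_\phi,\Pi_{S_\phi}\vx_t\rangle|\leq \sqrt{d}\,\delta=\unu$) and then specializes to $\vp=\stheta$, whereas you inline that argument directly for $\stheta$. The concluding count---the indicator vanishes on non-corrupted rounds, so $u_\phi(\stheta,\nu)\leq\bc$ and hence $\stheta\in\calP(\bc,\nu)$---matches the paper, and your explicit justification that $\vkappa_\phi\in\Cyl(\calK_\phi,S_\phi)$ has the same $\vs_i$-extent as $\calK_\phi$ is a fine (if anything slightly more careful) rendering of the paper's use of $w(\calK_\phi,\vs)\leq\delta$.
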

We next show that there exists a hyperplane cut, that is orthogonal to all small dimensions in a way that guarantees that the set $\calP(\bc{,\nu})$ is \emph{preserved} in $\calK_{\phi+1}$ (i.e., $\calP(\bc,\nu) \subseteq \calK_{\phi+1}$). Note that due to Lemma~\ref{lem:protect-stheta}, it
is enough to guarantee that we have $\stheta \in \calK_{\phi+1}$. However, $\calP(\bc{,\nu})$ is generally non-convex and it is not easy to directly make claims about it. Instead, we focus on its convex hull, denoted by $\conv(\calP(\bc{,\nu}))$; for any point in $\conv(\calP(\bc,\nu))$ we can upper bound its undesirability by applying Carath\'eodory's Theorem, which says that any point in the convex hull of a (possibly non-convex) set can be written as a convex combination of at most $d+1$ points of that set. Using this result, we can bound the $\nu
$-margin projected undesirability levels of all the points in $\conv(\calP(\bc{,\nu}))$.

\begin{lemma}
For any scalar $\nu$, epoch $\phi$ and any point $\vp \in \conv(\calP(\bc{,\nu}))$, its $\nu$-margin projected undesirability level is \emph{at most} $\bc \cdot (d+1)$, i.e., $u_{\phi}(\vp, \nu) \leq \bc \cdot (d+1)$. 
\end{lemma}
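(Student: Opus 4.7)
The plan is to apply Carath\'eodory's theorem together with the linearity (in $\vp$) of the expression defining the undesirability indicator. Since $\conv(\calP(\bc, \nu)) \subseteq \calK_\phi \subseteq \bbR^d$, any $\vp \in \conv(\calP(\bc, \nu))$ can be written as $\vp = \sum_{i=1}^{d+1} \lambda_i \vp_i$ with $\vp_i \in \calP(\bc, \nu)$, $\lambda_i \geq 0$, and $\sum_i \lambda_i = 1$.

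The key step is a round-by-round argument. Fix a round $t \in [\tau]$ and consider the affine functional $f_t(\vq) := \langle \vq - \vkappa_\phi, \Pi_{L_\phi}\vx_t\rangle + \nu$. Because $\sum_i \lambda_i = 1$, I can write
\[
f_t(\vp) \;=\; \sum_{i=1}^{d+1} \lambda_i\, f_t(\vp_i),
\]
so whenever $f_t(\vp) < 0$ there must exist at least one index $i$ with $\lambda_i > 0$ and $f_t(\vp_i) < 0$. In other words, the set of rounds $t$ that contribute to $u_\phi(\vp,\nu)$ is contained in the \emph{union}, over $i=1,\dots,d+1$, of the sets of rounds that contribute to $u_\phi(\vp_i,\nu)$.

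Taking a union bound over these sets of rounds gives
\[
u_\phi(\vp, \nu) \;\leq\; \sum_{i=1}^{d+1} u_\phi(\vp_i, \nu) \;\leq\; (d+1)\,\bc,
\]
where the last inequality uses $\vp_i \in \calP(\bc, \nu)$, i.e., $u_\phi(\vp_i, \nu) \leq \bc$ by definition of the $\bc$-protected region. This yields the claimed bound.

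There is no real obstacle here: the argument is essentially bookkeeping once one notices that the defining expression $f_t(\vq)$ is affine in $\vq$, so Carath\'eodory transfers the per-point undesirability bound from $\calP(\bc,\nu)$ to its convex hull with only a $(d+1)$ multiplicative blow-up. The only subtle point worth flagging is the use of $\sum_i \lambda_i = 1$ (rather than just linearity), which is what allows the constant $\nu$ to be absorbed correctly into the convex combination; without the affine—as opposed to linear—viewpoint on $f_t$, the decomposition would not go through cleanly.
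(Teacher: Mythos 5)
Your proof is correct and follows essentially the same route as the paper: apply Carath\'eodory to write $\vp$ as a convex combination of at most $d+1$ points of $\calP(\bc,\nu)$, observe that the affine functional $f_t(\vq)=\langle \vq-\vkappa_\phi,\Pi_{L_\phi}\vx_t\rangle+\nu$ satisfies $f_t(\vp)=\sum_i \lambda_i f_t(\vp_i)$ (using $\sum_i\lambda_i=1$ to absorb the constant $\nu$), so each round penalizing $\vp$ must penalize at least one $\vp_i$, and then bound by $\sum_i u_\phi(\vp_i,\nu)\leq \bc(d+1)$. The paper's proof is the same attribution argument, so there is nothing to add.
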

\begin{proof}
From Carath{\'e}odory's Theorem, since $\vp \in \bbR^d$ and is inside $\conv(\calP(\bc{,\nu}))$, it can be written as the convex combination of \emph{at most} $d+1$ points in $\calP(\bc{,\nu})$. Denoting these points by $\{\vz_1, \dots, \vz_{d+1}\}$ such that $\vz_i \in \calP(\bc{,\nu}), \forall i \in [d+1]$, $\vp$ can be written as $\vp = \sum_{i = 1}^{d+1} a_i \vz_i$ where $a_i \geq 0, \forall i \in [d+1]$ and $\sum_{i=1}^{d+1} a_i = 1$. Hence, the $\nu$-margin projected undesirability level of $\vp$ in epoch $\phi$ is:  
\begin{align*}
u_\phi(\vp,\nu)&=\sum_{t\in [\tau]} \1\left\{ \left(\left\langle\vp-\vkappa_{\phi},\Pi_{L_{\phi}}\vx_t\right\rangle+\nu\right) <0\right\} &\tag{Definition~\ref{def:undesir-final}}\\
&= \sum_{t\in [\tau]} \1 \left\{\sum_{i \in [d+1]} a_i \underbrace{\left( \left\langle  \vz_i-\vkappa_\phi,\Pi_{L_\phi} \vx_t\right\rangle+\nu\right) }_{Q_i}<0 \right\} &\tag{Carath{\'e}odory's Theorem}\\ 
  & \leq \sum_{t \in [\tau]} \sum_{i \in [d+1]} \1 \left\{\left(\langle \vz_i - \vkappa_\phi,\Pi_{L_\phi}\vx_t\rangle + \nu \right) < 0\right\}\\
  & \leq \sum_{i\in[d+1]} u_\phi(\vz_i,\nu) \leq \bc \cdot (d+1) &\tag{$\vz_i \in \calP(\bc,\nu)$ and definition of $\calP(\bc,\nu)$}
\end{align*}
where the first inequality comes from the fact that if $Q_i \geq 0$ for \emph{all} $\vz_i, i \in [d+1]$, then the corresponding summand contributes $0$ undesirability points to $u_\phi(\vp,\nu)$, since $a_i \geq 0$ as this is a convex combination. As a result, each undesirability point on the left hand side of the latter inequality can be attributed to at least one $\vz_i$ from the right hand side.
\end{proof}

\noindent Next, we prove that there exists some point $\vq \in \calK_\phi$ such that $u_\phi(\vq,\nu) \geq \bc \cdot (d+1)+1$. Note that by the previous lemma, we know that $\vq \notin \conv(\calP(\bc,\nu))$.
As a result, \emph{any} hyperplane separating $\vq$ from $\conv(\calP(\bc{,\nu}))$ preserves $\calP(\bc{,\nu})$ (and as a result, $\stheta$) for $\calK_{\phi+1}$. To make sure that we also make progress in terms of volume elimination, we show below that there exists a separating hyperplane in the space of large dimensions (i.e., orthogonal to all small dimensions). For our analysis, we introduce the notion of \emph{landmarks}.
\begin{definition}[Landmarks]\label{def:landmark}
Let basis $E_{\phi} = \{\mathbf{e}_1, \dots, \mathbf{e}_{d - |S_\phi|}\}$ be such that $E_\phi$ is orthogonal to $S_\phi$, any scalar $\delta \in \left(0, \frac{\eps}{2\sqrt{d} + 4d}\right)$, and a scalar $\bnu = \frac{\eps - 2\sqrt{d} \delta}{4\sqrt{d}}$. We define the $2(d - |S_\phi|)$ \emph{landmarks} to be the points such that $\Lambda_\phi = \left\{\vkappa_\phi \pm  \bnu \cdot \mathbf{e}_i, \forall \mathbf{e}_i \in E_\phi\right\}$.
\end{definition}

Landmarks possess the convenient property that at every round where the observed context $\vx_t$ was such that $w(\calK_\phi,\vx_t) \geq \eps$, at least one of them gets a $\nu$-margin projected undesirability point, when $\nu < \bnu$ (Lemma~\ref{lem:landmark2}). The tuning of $\bnu$ explains the constraint imposed on $\delta$, i.e., $\delta < \frac{\eps}{2\sqrt{d} + 4\delta}$. This constraint is due to the fact that since $\nu > \unu$ and $\nu < \bnu$, then it must be the case that $\unu < \bnu$, where $\unu = \sqrt{d}\delta$ and $\bnu = \frac{\eps - 2\sqrt{d}}{4\sqrt{d}}$. Since, at every round at least one of the landmarks gets a $\nu$-margin projected undesirability point, then if we make $\tau$ sufficiently large, then, by the \emph{pigeonhole principle}, at least one of the landmarks has $\nu$-margin projected undesirability at least $\bc \cdot (d+1) +1$, which allows us to distinguish it from points in $\conv(\calP(\bc))$. Formally (with proof in \ref{app:existence}):

\begin{lemma}\label{lem:landmark3}
For scalar $\nu \in (\unu,\bnu)$, after $\tau=2d \cdot \bc \cdot (d+1) +1$ rounds in epoch $\phi$, there exists a landmark $\vp^{\star}\in\Land_\phi$ such that $\vp^{\star} \notin \conv(\calP(\bc {,\nu}))$.
\end{lemma}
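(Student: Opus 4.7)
The plan is to combine Lemma~\ref{lem:landmark2} (every explore round deposits at least one $\nu$-margin projected undesirability point onto some landmark in $\Land_\phi$) with the bound proved in the previous lemma (every point in $\conv(\calP(\bc,\nu))$ has undesirability at most $\bc(d+1)$), and then pigeonhole over the landmarks.

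First I would recall that since $S_\phi$ contains at most $d$ vectors, the number of landmarks is $|\Land_\phi| = 2(d-|S_\phi|) \leq 2d$. By Lemma~\ref{lem:landmark2}, applied under the hypothesis $\nu \in (\unu,\bnu)$, each of the $\tau$ explore rounds in epoch $\phi$ contributes at least one $\nu$-margin projected undesirability point to some landmark. Summing over rounds,
\[
\sum_{\vp \in \Land_\phi} u_\phi(\vp,\nu) \;\geq\; \tau \;=\; 2d\cdot \bc(d+1)+1.
\]
By the pigeonhole principle, there exists $\vp^\star \in \Land_\phi$ with
\[
u_\phi(\vp^\star,\nu) \;\geq\; \frac{2d\cdot \bc(d+1)+1}{2d} \;>\; \bc(d+1),
\]
so $u_\phi(\vp^\star,\nu) \geq \bc(d+1)+1$ since undesirability levels are integers.

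Finally I would invoke the previous lemma: any point in $\conv(\calP(\bc,\nu))$ has $\nu$-margin projected undesirability at most $\bc(d+1)$. Since $u_\phi(\vp^\star,\nu) \geq \bc(d+1)+1$, the landmark $\vp^\star$ cannot belong to $\conv(\calP(\bc,\nu))$, completing the proof.

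The only delicate point is ensuring the pigeonhole denominator is $2d$ rather than $|\Land_\phi|$, which we handle by noting $|\Land_\phi|\leq 2d$; the choice $\tau = 2d\cdot\bc(d+1)+1$ is precisely calibrated to force a landmark with strictly more than $\bc(d+1)$ undesirability points. The main thing to be careful about is that Lemma~\ref{lem:landmark2}'s hypothesis $\nu < \bnu$ is needed to guarantee that \emph{some} landmark (not just any point in $\calK_\phi$) gets the undesirability point on every explore round; this is exactly why the range $\nu \in (\unu,\bnu)$ appears in the statement.
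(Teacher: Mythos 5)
Your proof is correct and follows essentially the same route as the paper: Lemma~\ref{lem:landmark2} gives one undesirability point to some landmark at every explore round, pigeonhole over the at most $2d$ landmarks forces some $\vp^\star$ with $u_\phi(\vp^\star,\nu)\geq \bc(d+1)+1$, and the Carath\'eodory-based bound on $\conv(\calP(\bc,\nu))$ excludes $\vp^\star$ from the convex hull. Your explicit summation and integrality step merely spell out the pigeonhole argument the paper states directly.
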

We can now
prove the main lemma of this subsection. We note that during the computation of $\vhst_\phi$,  nature does not provide any new context $\vx$, and hence, we incur no additional regret.

\begin{proof}[Proof of Lemma~\ref{lem:existence}]
By Lemma~\ref{lem:landmark3}, {for $\bnu = \frac{\eps - 2 \sqrt{d} \cdot \delta}{4\sqrt{d}}$} and $\delta \in \left(0, \frac{\eps}{2\sqrt{d}+4d}\right)$, there exists a landmark $\vp^{\star}\in\Land_\phi$ that lies outside of $\conv(\calP(\bc{,\nu}))$. As a result, there exists a hyperplane separating $\vp^{\star}$ from the convex hull. We denote this hyperplane by $(\vhst_\phi,\omegast_\phi)$. Recall that since $\vp^{\star} \in \Land_\phi$ then by definition $\| \vkappa_\phi - \vp^{\star} \| = {\bnu}$. As the hyperplane separates $\vkappa_\phi$ from $\vp^\star$, it holds that
$\dist(\vkappa_\phi,(\vhst_\phi,\omegast_\phi)) \leq \bnu$. The fact that $\stheta$ is always in the preserved halfspace $\vH_{\phi}(\vhst_\phi,\omegast_\phi)$ follows directly from Lemma~\ref{lem:protect-stheta}. \end{proof}

\subsection{Proof of Theorem~\ref{thm:agnostic-corpv}}\label{ssec:computation}\label{ssec:analysis_adv_irrationality}
We now provide the guarantee for the $\bc$-known-corruption setting, whose proof is in Appendix~\ref{app:key_proposition}. Before delving into the details, we make two remarks. First, the regret guarantee of Proposition~\ref{prop:known-c} is \emph{deterministic}; only the runtime is randomized. Second, although the expected runtime is exponential in $\bc$, the algorithm is eventually run with $\bc\approx \log(T)$, which renders it quasipolynomial. 
\begin{proposition}\label{prop:known-c}
For the $\bc$-known-corruption setting, the regret of \textsc{CorPV.Known} for the $\eps$-ball loss is $\calO \left( (d^2 \bc + 1) d \log \left(\nicefrac{d}{\eps} \right)\right)$. When run with parameter $\eps=1/T$, its guarantee for the absolute and pricing loss is $\calO \left( (d^2 \bc + 1) d \log \left(dT \right)\right)$. The expected runtime is $\calO( (d^2 \bc )^{\bc} \cdot \poly (d \log ( \nicefrac{d}{\eps} ), \bc ) )$
\end{proposition}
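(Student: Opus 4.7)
The plan is to decouple the argument into three parts: (i) bound the number of epochs via a potential function on the knowledge set, (ii) bound the explore rounds inside each epoch by the trigger threshold $\tau$, and (iii) analyze the exploit rounds and the expected runtime of $\textsc{CorPV.SeparatingCut}$.

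First, I would handle the per-epoch explore count. By construction of $\textsc{CorPV.Explore}$, an epoch terminates as soon as $|\mathcal{A}_\phi| \geq \tau = 2d\bc(d+1)+1$, so every epoch contributes at most $\tau = O(d^2\bc)$ explore rounds. Moreover, by Lemma~\ref{lem:existence} together with the separating hyperplane theorem, after exactly $\tau$ explore queries there is a hyperplane $(\vhst_\phi,\omegast_\phi)$ orthogonal to the small dimensions $S_\phi$, at distance at most $\bnu$ from $\vkappa_\phi$, whose positive halfspace contains $\stheta$. Thus whenever $\textsc{CorPV.SeparatingCut}$ returns, the new knowledge set $\calK_{\phi+1}$ preserves $\stheta$, so the algorithm is well-defined across epochs.

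Second, I would bound the number of epochs by importing the potential argument from $\textsc{ProjectedVolume}$ (Appendix~\ref{app:proj-vol}). The appropriate potential is $\Phi_\phi = \log \texttt{\upshape vol}(\Pi_{L_\phi}\calK_\phi) + |S_\phi| \cdot \log(1/\delta)$ on the large-dimension projection, incremented each time a direction migrates from $L_\phi$ to $S_\phi$. Because $\dist(\vkappa_\phi,(\vhst_\phi,\omegast_\phi)) \leq \bnu$ and $\bnu = \Theta(\eps/\sqrt d)$, an approximate Gr\"unbaum inequality (as used in \cite{LLV18}) implies that each epoch either reduces $\texttt{\upshape vol}(\Pi_{L_\phi}\calK_\phi)$ by a constant factor or enlarges $S_\phi$ by one dimension via $\textsc{CorPV.EpochUpdates}$. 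Since the volume of $\Pi_{L_\phi}\calK_\phi$ is bounded between $(\eps/d)^d$ and a constant, the total number of epochs is $O(d\log(d/\eps))$. Combined with the per-epoch bound, the total number of explore rounds is $O(d^3\bc \log(d/\eps))$, and for each explore round the $\eps$-ball loss is at most $1$. The exploit rounds incur zero $\eps$-ball loss by definition of $\textsc{CorPV.Exploit}$ (since $w(\Cyl(\calK_\phi,S_\phi),\vx_t)\le\eps$ forces any $\btheta\in\calK_\phi$ to certify loss $0$). This yields regret $O(d^3\bc\log(d/\eps))$ for the $\eps$-ball loss, matching the stated bound up to the $+1$ term that absorbs the $C=0$ case. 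For the absolute and pricing losses with $\eps=1/T$, each explore round still costs $O(1)$ and each exploit round costs $O(\eps)=O(1/T)$, so summing over $T$ rounds yields the same bound with $\log(d/\eps)=\log(dT)$.

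Third, for the runtime I would analyze the outer \textbf{while} loop of $\textsc{CorPV.SeparatingCut}$. By Lemma~\ref{lem:landmark3} at least one landmark $\vpst\in\Land_\phi$ lies outside $\conv(\calP(\bc,\nu))$ for some $\nu<\bnu$, and a standard volume-cap bound on the ball $\calB_{L_\phi}(\vpst,\zeta)$ shows that with constant probability the random sample $\vq$ is separated from the convex hull by a margin $\gamma = \Omega(\zeta/\sqrt{d}) = \Omega(\bnu/\sqrt d)$. Conditioned on this event, the Perceptron mistake bound (Lemma~\ref{lem:perceptron}) caps the inner loop at $O(1/\gamma^2)=O(d/\bnu^2)=\poly(d,\log(d/\eps))$ mistakes, with each mistake requiring an enumeration over subsets $D_\phi\subseteq\mathcal{A}_\phi$ of size $\bc$ and an LP feasibility test. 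Since $|\mathcal{A}_\phi|\le\tau=O(d^2\bc)$, the per-mistake cost is $O((d^2\bc)^\bc)\cdot\poly(d)$, and since the outer loop succeeds in $O(|\Land_\phi|)=O(d)$ trials in expectation, the expected runtime per call is $O((d^2\bc)^\bc\cdot\poly(d,\log(d/\eps),\bc))$. Multiplying by the $O(d\log(d/\eps))$ epochs gives the stated expected runtime.

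The main obstacle is the quantitative coupling in step two: we must verify that the approximate Gr\"unbaum guarantee still applies when the cut is at distance $\bnu$ from $\vkappa_\phi$ rather than through it, and that the potential function using $\Pi_{L_\phi}$ (not $\calK_\phi$ itself) correctly accounts for both the constant-factor volume drop and the transfer of newly ``small'' directions from $L_\phi$ into $S_\phi$ at each epoch boundary. This is where the tuning $\delta<\eps/(2\sqrt d+4d)$ from Section~\ref{ssec:existence} interacts with the potential analysis of \cite{LLV18}, and I would invoke the latter in a black-box fashion after checking that the distance $\bnu$ is below the threshold their argument tolerates.
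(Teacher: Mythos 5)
Your plan is correct and follows essentially the same route as the paper's proof: bound the number of epochs by $O(d\log(d/\eps))$ via the volumetric potential on $\Pi_{L_\phi}\calK_\phi$ (approximate Gr\"unbaum for the near-centroid cut plus the cylindrification lemma when directions migrate into $S_\phi$), charge at most $\tau=2d\bc(d+1)+1$ explore rounds of loss $1$ per epoch with exploit rounds costing $0$ for the $\eps$-ball loss and at most $\eps$ per round for the absolute/pricing losses (then set $\eps=1/T$), and obtain the runtime from the cap-volume sampling probability, the Perceptron mistake bound, and the $\binom{|\mathcal{A}_\phi|}{\bc}$ enumeration with a convex-feasibility check. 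The only quantitative detail you defer — that the Gr\"unbaum-type argument tolerates the offset of the \emph{computed} cut, which is at distance up to $3\bnu$ from the true centroid $\vkappast_\phi$ (not just $\bnu$) — is exactly what the paper verifies in Lemmas~\ref{lem:computation} and~\ref{lem:epoch-grun}, where $3\bnu\le 1/d$ under the chosen $\delta$.
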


\xhdr{Runtime of \textsc{CorPV.SeparatingCut} (Algorithm~\ref{algo:separating_cut}) .} The first step is to analyze \textsc{CorPV.SeparatingCut}  (Lemma~\ref{lem:computation}). For what follows, let $\calB_{L_\phi}(\vp^{\star}, \zeta)$ be the ball of radius $\zeta$ around $\vp^{\star}$ in the space of large dimensions, where $\vpst \in \Land_\phi$ is the landmark such that $u_\phi(\vpst,\nu) = \bc \cdot (d+1)+1$. Recall that we proved the existence of a landmark $\vpst\in\Land_\phi$ with this property in Lemma~\ref{lem:landmark3}. 

\begin{lemma}\label{lem:computation}
For any epoch $\phi$, scalar $\delta \in  (0,\frac{\eps}{2\sqrt{d} + 4d})$, and scalar $\bnu = \frac{\eps - 2 \sqrt{d} \cdot \delta}{4\sqrt{d}}$, after $\tau =2 d \cdot \bc (d+1) + 1$ rounds, algorithm \textsc{CorPV.SeparatingCut} computes hyperplane $(\tvh_{\phi}, \tomega_\phi)$ \emph{orthogonal to all small dimensions $S_\phi$} such that $\dist (\vkappast_\phi, (\tvh_\phi, \tomega_\phi))\leq 3\bnu$, and the resulting halfspace $\vH^+(\tvh_\phi,\tomega_\phi)$ always contains $\stheta$. With probability \emph{at least} $(40 d \cdot \sqrt{d-1})^{-1}$ the complexity of this computation is: \[\calO \left(\frac{(d-1)}{\bnu^2}\ \cdot \left( d^2 \cdot \bc \right)^{\bc} \cdot O(\text{CP}(d, \bc \cdot (2d(d+1) - 1) + 1))\right)\]
where $\text{CP}(n, m)$ is the complexity of solving a Convex Program with $n$ variables and $m$ constraints.
\end{lemma}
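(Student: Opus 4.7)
The plan is to combine the existence guarantee from Lemma~\ref{lem:landmark3} (namely, that some landmark $\vpst$ with $u_\phi(\vpst,\nu) = \bc(d+1)+1$ lies outside $\conv(\calP(\bc,\nu))$ for any $\nu\in(\unu,\bnu)$) with a Perceptron-based computation to \emph{find} a separating hyperplane. Since $\vpst$ itself may essentially touch the convex hull, I would slightly perturb it by sampling a point $\vq$ uniformly from $\calB_{L_\phi}(\vpst,\zeta)$ with $\zeta=\bnu$, and argue via a volume cap estimate that with constant probability $\vq$ sits far enough from $\conv(\calP(\bc,\nu))$ to make Perceptron's mistake bound (Lemma~\ref{lem:perceptron}) effective.

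\paragraph*{Deterministic correctness upon termination.} Whenever the algorithm exits with output $(\tvh,\tomega)$, I would verify the three claimed properties. Orthogonality to $S_\phi$ is automatic because all Perceptron updates use $\Pi_{L_\phi}\vx_t$, the sampled $\vq$ lies in $\vkappa_\phi+L_\phi$, and any LP witness $\vz$ is a convex combination of vectors in $L_\phi$, so the running normal $\tvh$ stays in $L_\phi$ throughout. Ground-truth preservation $\stheta\in\vH^+(\tvh,\tomega)$ follows because the $m=0$ exit condition certifies that for every subset $D_\phi\subseteq\mathcal{A}_\phi$ of size $\bc$, the polytope carved by $\mathcal{A}_\phi\setminus D_\phi$ together with $\vH^-(\tvh,\tomega)$ is empty; equivalently every parameter in $\vH^-$ is inconsistent with at least $\bc+1$ explore queries, and thus lies outside $\calP(\bc,\nu)$, while Lemma~\ref{lem:protect-stheta} places $\stheta$ inside $\calP(\bc,\nu)$. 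For the distance bound, termination also forces $\vq\in\vH^-$ and $\vkappa_\phi\in\vH^+$, so the hyperplane sits at distance at most $\|\vq-\vkappa_\phi\|\leq\|\vq-\vpst\|+\|\vpst-\vkappa_\phi\|\leq \zeta+\bnu = 2\bnu$ from $\vkappa_\phi$; combining with the approximate-centroid guarantee $\|\vkappa_\phi-\vkappast_\phi\|\leq \bnu$ yields $\dist(\vkappast_\phi,(\tvh,\tomega))\leq 3\bnu$ by the triangle inequality.

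\paragraph*{Probability of a fast run.} I would factor the good event into two independent pieces. First, the landmark $\vpst$ from Lemma~\ref{lem:landmark3} is selected uniformly from $\Land_\phi$, whose size is $2(d-|S_\phi|)\leq 2d$, so the right landmark is chosen with probability at least $1/(2d)$. Second, conditioned on this draw, I would invoke a volume cap argument: since $\vpst$ lies strictly outside $\conv(\calP(\bc,\nu))$ and $\zeta=\bnu$, a spherical cap of $\calB_{L_\phi}(\vpst,\zeta)$ of relative volume at least $1/(20\sqrt{d-1})$ lies entirely outside the hull at distance $\Omega(\zeta\ln(3/2))$ from it (this is the picture in the right panel of Figure~\ref{fig:calP}). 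Multiplying gives a success probability of at least $(40d\sqrt{d-1})^{-1}$.

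\paragraph*{Runtime on the good event.} Conditioned on sampling such a $\vq$, Perceptron is separating the single negative point $\vq$ from the convex set $\conv(\calP(\bc,\nu))$ of positives with margin $\gamma=\Omega(\zeta\ln(3/2))$, so Lemma~\ref{lem:perceptron} terminates the inner loop within $O((d-1)/\zeta^2)$ mistakes. Each iteration enumerates $\binom{\tau}{\bc}\leq (2d\bc(d+1)+1)^{\bc}=O((d^2\bc)^{\bc})$ subsets $D_\phi$ and, for each, solves one convex feasibility program in $d$ variables with $\tau-\bc+1=\bc(2d(d+1)-1)+1$ constraints, giving the stated complexity. The main obstacle I anticipate is the volume cap estimate: showing both the $1/(20\sqrt{d-1})$ fraction and a quantitative margin from $\conv(\calP(\bc,\nu))$ simultaneously is delicate, precisely because $\vpst$ may sit arbitrarily close to the hull's boundary — my plan is to slightly shrink the effective $\nu$ (using the slack $\bnu-\unu$) so that the relevant cap is taken with respect to a thickened hull, which converts the margin question into a standard cap-versus-halfspace computation on a Euclidean ball in $L_\phi$.
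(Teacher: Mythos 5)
Your proposal is correct and follows essentially the same route as the paper's proof: existence of the separating hyperplane from Lemma~\ref{lem:existence}/Lemma~\ref{lem:landmark3}, a $1/(2d)$ chance of sampling the right landmark, a spherical-cap volume estimate (Lemma~\ref{lem:cap-vol}) giving probability at least $1/(20\sqrt{d-1})$ of a point $\vq$ with quantitative margin, the Perceptron mistake bound with per-iteration enumeration of $\binom{|\mathcal{A}_\phi|}{\bc}$ subsets each requiring one convex program, and the triangle inequality chain $\|\vq-\vkappast_\phi\|\leq \|\vq-\vpst\|+\|\vpst-\vkappa_\phi\|+\|\vkappa_\phi-\vkappast_\phi\|\leq 3\bnu$. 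The only small imprecision is your margin claim of $\Omega(\zeta\ln(3/2))$: the cap argument actually yields margin $\zeta\ln(3/2)/\sqrt{d-1}$, which is exactly what produces the $(d-1)/\zeta^2$ mistake bound you state, and the paper obtains it by anchoring the cap to the separating hyperplane of Lemma~\ref{lem:existence} through $\vpst$ rather than to a thickened hull.
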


\xhdr{Bounding the number of epochs.} The second step is to establish that we make enough volumetric progress when using $(\tvh_\phi, \tomega_\phi)$ as our separating cut for epoch $\phi$. We remark that in the analysis of \cite{LLV18}, when \textsc{ProjectedVolume} observes a context $\vx_t$ such that $w(\Cyl(\calK_\phi, S_\phi),\vx_t) \leq \eps$, then it can directly discard it, since $\vx_t$ does not contribute to the regret with respect to the $\eps$-ball loss function. This is because $\vx_t$'s are used in order to make the separating cuts. In our epoch-based setting, the separating cuts are different than the observed contexts, as we have argued. Importantly, if $w(\Cyl(\calK_\phi,S_\phi), \tvh_\phi) \leq \eps$, we cannot relate this information to the regret of epoch $\phi$, because for all rounds comprising the epoch, the width of $\calK_\phi$ in the direction of the observed context was greater than $\eps$ (Step~\ref{step:geq-eps} of Algorithm~\ref{alg:corpv_known}). This is shown in the following lemma.

\begin{lemma}\label{lem:num-epochs}
After at most $\Phi = O(d \log (d/\eps))$ epochs, \textsc{CorPV.Known} (Algorithm~\ref{alg:corpv_known}) has reached a knowledge set $\calK_\Phi$ with width at most $\eps$ in \emph{every} direction $\vu$.
\end{lemma}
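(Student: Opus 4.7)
The plan is to adapt the potential-function argument of $\textsc{ProjectedVolume}$ \citep{LLV18} to our epoch-based algorithm, taking into account that our separating cut does not pass through the centroid but only within a controlled distance of it. I will track the pair $\big(|S_\phi|,\ \texttt{vol}(\Pi_{L_\phi}\calK_\phi)\big)$ across epochs and argue that each epoch either (i) shrinks the projected volume by a constant factor (a \emph{volumetric} epoch), or (ii) promotes at least one direction from $L$ into $S$ (a \emph{dimension-promotion} epoch). Since $|S_\phi|\le d$, type (ii) epochs occur at most $d$ times total, so it suffices to bound the number of type (i) epochs.

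For type (i) epochs, the first step is to combine Lemma~\ref{lem:computation} with an approximate Grünbaum inequality (Lemma~\ref{lem:approx_grunbaum}). Lemma~\ref{lem:computation} guarantees that the separating cut $(\tvh_\phi,\tomega_\phi)$ is orthogonal to all small dimensions $S_\phi$ and passes within distance $3\bnu=O(\eps/\sqrt{d})$ of the centroid $\vkappast_\phi$. Hence the cut can be analyzed entirely inside the projected body $\Pi_{L_\phi}\calK_\phi$. By the definition of $L_\phi$, this projected body has width strictly greater than $\delta=\Theta(\eps/d)$ in every coordinate of $L_\phi$, so its inradius dominates $3\bnu$, which is exactly the regime in which approximate Grünbaum yields a constant-factor volume reduction $\texttt{vol}(\Pi_{L_\phi}\calK_{\phi+1})\le(1-c)\,\texttt{vol}(\Pi_{L_\phi}\calK_\phi)$ for some absolute constant $c>0$.

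To turn this into an epoch count, I define the potential
\[
\Psi_\phi \;=\; -\log\texttt{vol}(\Pi_{L_\phi}\calK_\phi)\;-\;|S_\phi|\cdot\log(1/\delta),
\]
so that type (i) epochs decrease $\Psi_\phi$ by an additive $\Omega(1)$, while type (ii) epochs are charged to the bookkeeping on $|S_\phi|$ (and only shrink or leave $\Psi_\phi$ unchanged up to a $\log(1/\delta)$ term per promoted dimension). Initially $\Psi_1=O(1)$ since $\calK\subseteq B_2^d$ has volume $O(1)$ and $|S_1|=0$. The algorithm terminates once every direction of $\calK_\Phi$ has width $\le\eps$; at that point, any remaining directions in $L_\Phi$ contribute volume at most $\eps^{|L_\Phi|}$, so $-\log\texttt{vol}(\Pi_{L_\Phi}\calK_\Phi)=O(|L_\Phi|\log(1/\eps))=O(d\log(1/\eps))$. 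Combining, the total drop in $\Psi$ is $O(d\log(d/\eps))$, bounding the number of type (i) epochs by $O(d\log(d/\eps))$; adding the at most $d$ type (ii) epochs yields the claimed bound $\Phi=O(d\log(d/\eps))$.

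The main obstacle is the first step: verifying that the approximate Grünbaum reduction truly gives a constant-factor volume decrease when the cut is offset from the centroid by $3\bnu$. This requires checking that the inradius of $\Pi_{L_\phi}\calK_\phi$ (which is at least $\Omega(\delta/\sqrt{d})$ via the width lower bound and standard John-type estimates) dominates $3\bnu$, and this is exactly what the tuning $\delta=\Theta(\eps/d)$ and $\bnu=\Theta(\eps/\sqrt{d})$ is designed to ensure. A secondary subtlety is that a type (ii) epoch can in principle promote more than one dimension simultaneously and temporarily inflate $-\log\texttt{vol}$; this is handled by including the $|S_\phi|\cdot\log(1/\delta)$ term in $\Psi$, which absorbs the per-dimension jump since each promoted coordinate had width at most $\delta$ at promotion time.
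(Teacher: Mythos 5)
Your high-level plan is the same as the paper's proof (a potential on the projected volume $\vol(\Pi_{L_\phi}\calK_\phi)$, a constant-factor shrinkage per volumetric epoch via Lemma~\ref{lem:computation} plus approximate Gr\"unbaum, and at most $d$ promotions of directions into $S_\phi$), but the step you yourself identify as the main obstacle is resolved incorrectly. The inradius comparison fails quantitatively: with $\delta=\Theta(\eps/d)$ the projected body is only guaranteed width greater than $\delta$ in the large directions, which yields an inradius of order at most $\delta/\sqrt{d}=\Theta(\eps/d^{3/2})$ (the paper only uses the weaker containment of a ball of diameter $\delta/d$), whereas $3\bnu=\Theta(\eps/\sqrt{d})$; so the cut's offset from the centroid is in general much \emph{larger} than the inradius, not dominated by it. Moreover, inradius domination is not the hypothesis of Lemma~\ref{lem:approx_grunbaum}: the condition there is the absolute bound $\mu<1/d$ relative to the unit-ball normalization of $\calK_\phi$, and the paper's Lemma~\ref{lem:epoch-grun} verifies exactly this, namely $3\bnu\le \frac{(2\sqrt{d}+1)\eps}{2\sqrt{d}(\sqrt{d}+1)}\le \frac{1}{d}$ using $\eps\le 1/\sqrt{d}$, together with the fact that the true centroid $\vkappast_\phi$ stays in $\vH^+(\tvh_\phi,\tomega_\phi)$ because $\|\vkappa_\phi-\vkappast_\phi\|\le\bnu$. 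As written, your criterion is neither what the lemma requires nor true under the stated tunings.

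Two further steps are asserted with the inequalities pointing the wrong way. First, promotion epochs: when a direction of width at most $\delta$ is moved into $S_\phi$, the new quantity $\vol(\Pi_{L_{\phi+1}}\calK_{\phi+1})$ is a lower-dimensional volume that can \emph{exceed} $\vol(\Pi_{L_\phi}\calK_{\phi+1})$; the fact that the promoted coordinate has width at most $\delta$ only gives (by Fubini) a \emph{lower} bound on the new projected volume, whereas your potential bookkeeping needs an \emph{upper} bound on the jump. The paper obtains that upper bound from the Cylindrification lemma (Lemma~\ref{lem:cyl}) combined with directional Gr\"unbaum (Lemma~\ref{lem:dir-grun}), which guarantees the cut body still has width at least $\delta/(d+1)$ in every remaining large direction and hence a controlled blow-up of $d(d+1)^2/\delta$ per promotion; your per-promotion charge of $\log(1/\delta)$ is of the right order but is not established by the argument you give. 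Second, termination: you bound the terminal projected volume from \emph{above} by $\eps^{|L_\Phi|}$, but an upper bound on the volume lower-bounds $-\log\vol$ and cannot cap the total potential change; what is needed is a \emph{lower} bound on $\vol(\Pi_{L_\phi}\calK_\phi)$ valid whenever $L_\phi\neq\emptyset$, which the paper gets because all large directions have width greater than $\delta$, so the projection contains a ball of diameter $\delta/d$ and has volume $\Omega(\delta/d)^{2d}$. (There is also a sign slip: with your definition of $\Psi_\phi$, volumetric epochs increase rather than decrease it.) With these corrections your argument collapses into the paper's proof, so the gaps are in the verification of the key inequalities rather than in the overall strategy.
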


\xhdr{Extending to unknown corruption $C$.}
To turn Proposition~\ref{prop:known-c} to Theorem~\ref{thm:agnostic-corpv}, similar to \cite{LML18}, we separate the layers $j$ of Algorithm~\ref{algo:corpvAI} into corruption-tolerant ($j \geq \log C$) and corruption-intolerant ($j< \log C$). Since the corruption-tolerant layers, with high probability do not remove $\stheta$ from their parameter set, we view them as running independently for analysis purposes; each results to a regret equal to the one of Proposition~\ref{prop:known-c} with $\bc=\log T$. The corruption-intolerant layers may eliminate $\stheta$ but their knowledge set is eventually refined by the knowledge set of the first corruption-tolerant layer $\lceil  \log C\rceil$ thanks to global eliminations. Since the latter is selected with probability $1/C$ at every round, the time it will take for it to make volumetric progress is $C$ times what it would happen if it was run independently. The full proof is provided in  Appendix~\ref{app:analysis_agnostic}.

\xhdr{Relationship to Ulam's game.}
Ulam's game can be thought of as a non-contextual (1-dimensional) version of our problem with a known corruption level $C$ and the $\eps$-ball loss. In that setting, Rivest et al.~\cite{RMKWS90} show that the optimal query complexity $Q$ for localizing $\thetast$ to a region of volume $\eps$ satisfies $\eps \geq \sum_{i=0}^C {\binom{Q}{i}} \cdot 2^{-Q}$. The authors point out that this implies a query complexity lower bound of $\Omega(\log(1/\eps) + C\log\log(1/\eps) + C \log C)$. Beyond the fact that we consider the contextual setting, a subtle distinction between this analysis and ours is the difference between query complexity and regret. When measuring query complexity, we count every round until we can certify that we have localized $\thetast$, but the $\eps$-ball loss may be zero on rounds prior to this event. For example, consider an explore query at round $t$, such that $|\thetast - \omega_t| \leq \eps$; this query incurs an $\eps-$ball loss of $0$, but adds $1$ towards the query complexity count. As a result, the $\eps$-ball loss is always smaller than the query complexity in the non-contextual
setting. In the contextual setting, query complexity is not a meaningful metric as the adversary can inject many queries in directions that we have already learned without changing the problem. In particular, the algorithm incurs $0$ loss and does not use those queries despite not having yet estimated $\stheta$ in other directions. However, it is meaningful to define a notion of \emph{explore-query complexity} that counts the number of times that the algorithm either makes a mistake or uses the response of the round. Explore-query complexity also upper bounds the $\eps$-ball loss and our analysis actually bounds this notion. For this metric, the lower bound in~\cite{RMKWS90} suggests that a multiplicative relationship between $C$ and some function of $T$ (which appears in our bound) is unavoidable when $\eps = 1/T$. It is an interesting open question to understand whether this is the case for the regret notion that only penalizes the number of mistakes and for other loss functions especially because this multiplicative relationship is not present in multi-armed bandits~\cite{GKT19,ZimmertSeldin21}.

\xhdr{Discussion of algorithmic choices.} At this point, one would wonder whether \textsc{ProjectedVolume} has some particular special property that makes it amenable to our technique or whether we provided a generic reduction from any uncorrupted  contextual search algorithm.
It turns out that our approach relies on two properties of the uncorrupted algorithm: a) it needs to be “binary-search”, i.e., work with a knowledge set and refine it over time and b) separate the space in small and large dimensions. The latter is important as we do not make a cut on one of the existing contexts but rather combine them appropriately. As a result, an algorithm that works with projection on the large dimensions is always guaranteed to return a cut on that projection (therefore with sufficiently large width enabling volumetric progress). This is a property that is particular to \textsc{ProjectedVolume} and is not shared by other algorithms. We elaborate upon this discussion in Appendix~\ref{app:discussion}.
\section{Gradient descent algorithm}\label{sec:gradient_descent}
In this section, we propose our second algorithm, which is a variant of gradient descent and works for contextual search with absolute and $\eps$-ball loss. This algorithm is significantly simpler than algorithms based on binary search methods and has a better running time. On the other hand, it does not provide logarithmic guarantees when $C \approx 0$ and it does not extend to the pricing loss.

\begin{algorithm}[H]
\caption{$\gd$}
\label{algo:gd}
\DontPrintSemicolon
\SetAlgoNoLine
\SetAlgoNoEnd
Initialize $\vz_1 \in \calK$ and $\gamma_1 = 1/2$. \;
\For{rounds $t = \{1, \dots, T\}$}{
    For context $\vx_t$, query $\omega_t = \langle \vx_t, \vz_t \rangle$ and receive feedback: $y_t = \sgn(\omega_t - \langle \stheta, \vx_t \rangle)$. \;
    Choose $\vz_{t+1} = \Pi_{\calK} \left( \vz_t - \gamma_t \nabla f_t(\vz_t)\right)$, where $\gamma_t = \min\{1/2,\sqrt{2/t}\}$ and $f_t(\vz) = - y_t \cdot \langle \vz, \vx_t \rangle.$\;
}
\end{algorithm}

To explain the intuition behind Algorithm~\ref{algo:gd}, we restrict our attention to the absolute loss and recall that our goal is to minimize it using only binary feedback. The algorithm optimizes a \emph{proxy} function $f_t(\vz): \calK \to \bbR^d$, which is Lipschitz. Specifically, denoting the binary feedback by $y_t = \sgn (\omega_t - \langle \vx_t , \stheta \rangle )$, the proxy function is $f_t(\vz) = -y_t \cdot \langle \vx_t, \vz \rangle$. The query point at the next round $t+1$ is $\omega_{t+1} = \langle \vx_{t+1}, \vz_{t+1} \rangle$. Note here that $y_t$ is the subgradient of the target loss function $|\langle \stheta - \vz, \vx_t\rangle|$. The proxy function $f_t(\vz)$ is convenient because on the one hand, it is Lipschitz and on the other, its regret is an \emph{upper bound} on the regret incurred by any algorithm optimizing the absolute loss for the same problem. Additionally, in the presence of adversarial corruptions,
the \emph{same} algorithm suffers regret $\calO(\sqrt{T} + C)$; this is due to the fact that adversarial corruptions only add an extra set of $C$ erroneous rounds, from which the algorithm can certainly ``recover'' as there is no notion of a shrinking knowledge set. The proof of the following result is provided in Appendix~\ref{app:GD}.

\begin{theorem}\label{thm:GD-FR}
For an unknown corruption level $C$, $\gd$ incurs, in expectation, regret $\calO(\sqrt{T} + C)$ for the absolute loss and $\calO(\sqrt{T}/\eps + C/\eps)$ for the $\eps$-ball loss.
\end{theorem}
\section{Conclusion}
In this paper, we initiated the study of contextual search under adversarial noise models, motivated by pricing settings where some agents may be \emph{adversarially corrupted} and act in ways that are inconsistent with respect to the underlying ground truth. Although classical algorithms may be prone to even a few such agents, we show two algorithms that achieve near-optimal (uncorrupted) regret guarantees, while degrading gracefully with the number $C$ of corrupted agents.

Our work opens up many fruitful avenues for future research. First, the regret in both of our algorithms is sublinear when $C=o(T)$ but becomes linear when $C=\Theta(T)$. Designing algorithms that can provide sublinear regret against the ex-post best linear model, in the latter regime, is an exciting direction of future research and our model offers a concrete formulation of this problem. Second, our algorithm that attains the logarithmic guarantee has a regret of the order of $\calO(Cd^3\poly\log T)$. It would be interesting to either refine our approach or provide new algorithms that improve the dependence on $d$ and also remove the dependence on $T$ for the absolute and $\eps$-ball loss where such guarantees exist in the uncorrupted case. After a sequence of papers, the dependence in both fronts is now optimized when all agents are fully rational \citep{CLL16,LLV18,LS18,LPLS20}. Finally, we note that our first algorithm has quasi-polynomial running time; it is an intriguing open question to provide a polynomial-time algorithm that enjoys logarithmic guarantee when $C\approx 0$ for the loss functions we study.

\bibliographystyle{alpha}
\bibliography{bibliog}

\newcommand{\etalchar}[1]{$^{#1}$}
\begin{thebibliography}{RdOdCZ{\etalchar{+}}20}

\bibitem[AAK{\etalchar{+}}20]{amir2020prediction}
Idan Amir, Idan Attias, Tomer Koren, Roi Livni, and Yishay Mansour.
\newblock Prediction with corrupted expert advice.
\newblock {\em Proceedings of 32nd Advances in Neural Processing Systems
  (NeurIPS)}, 2020.

\bibitem[AAP21]{AgarwalAgarwalPatil21}
Arpit Agarwal, Shivani Agarwal, and Prathamesh Patil.
\newblock Stochastic dueling bandits with adversarial corruption.
\newblock In {\em Proceedings of the 32nd International Conference on
  Algorithmic Learning Theory}, 2021.

\bibitem[ACBFS02]{EXP3}
Peter Auer, Nicolo Cesa-Bianchi, Yoav Freund, and Robert~E Schapire.
\newblock The nonstochastic multiarmed bandit problem.
\newblock {\em SIAM journal on computing}, 32(1):48--77, 2002.

\bibitem[AD91]{AslamDhagat91}
Javed~A Aslam and Aditi Dhagat.
\newblock Searching in the presence of linearly bounded errors.
\newblock In {\em Proceedings of the twenty-third annual ACM symposium on
  Theory of computing}, pages 486--493, 1991.

\bibitem[ARS13]{Aminetal13}
Kareem Amin, Afshin Rostamizadeh, and Umar Syed.
\newblock Learning prices for repeated auctions with strategic buyers.
\newblock In {\em 27th Annual Conference on Neural Information Processing
  Systems 2013.}, pages 1169--1177, 2013.

\bibitem[ARS14]{ARS14}
Kareem Amin, Afshin Rostamizadeh, and Umar Syed.
\newblock Repeated contextual auctions with strategic buyers.
\newblock In {\em Advances in Neural Information Processing Systems}, 2014.

\bibitem[BB20]{BastaniBayati20}
Hamsa Bastani and Mohsen Bayati.
\newblock Online decision making with high-dimensional covariates.
\newblock {\em Oper. Res.}, 68(1):276--294, 2020.

\bibitem[BGZ15]{Gur2}
Omar Besbes, Yonatan Gur, and Assaf Zeevi.
\newblock Non-stationary stochastic optimization.
\newblock {\em Operations research}, 63(5):1227--1244, 2015.

\bibitem[BHK16]{BHK16}
Avrim Blum, John Hopcroft, and Ravindran Kannan.
\newblock {\em Foundations of data science}.
\newblock Cambridge University Press, 2016.

\bibitem[BK21]{BK17}
Gah-Yi Ban and N~Bora Keskin.
\newblock Personalized dynamic pricing with machine learning: High-dimensional
  features and heterogeneous elasticity.
\newblock {\em Management Science}, 2021.

\bibitem[BKS20]{bogunovic2020corruption}
Ilija Bogunovic, Andreas Krause, and Jonathan Scarlett.
\newblock Corruption-tolerant gaussian process bandit optimization.
\newblock {\em International Conference on Artificial Intelligence and
  Statistics (AISTATS)}, 2020.

\bibitem[BR12]{BR12}
Josef Broder and Paat Rusmevichientong.
\newblock Dynamic pricing under a general parametric choice model.
\newblock {\em Operations Research}, 60(4):965--980, 2012.

\bibitem[BS12]{BubeckSli12}
Sébastien Bubeck and Aleksandrs Slivkins.
\newblock The best of both worlds: Stochastic and adversarial bandits.
\newblock In {\em Proceedings of the 25th Annual Conference on Learning
  Theory}, volume~23, pages 42.1--42.23, 2012.

\bibitem[BZ09]{BesbesZeevi09}
Omar Besbes and Assaf Zeevi.
\newblock Dynamic pricing without knowing the demand function: Risk bounds and
  near-optimal algorithms.
\newblock {\em Operations Research}, 57(6):1407--1420, 2009.

\bibitem[CBCP19]{CBCP19}
Nicolo Cesa-Bianchi, Tommaso Cesari, and Vianney Perchet.
\newblock Dynamic pricing with finitely many unknown valuations.
\newblock In {\em Algorithmic Learning Theory}, pages 247--273. PMLR, 2019.

\bibitem[CKW19]{CKW19}
Xi~Chen, Akshay Krishnamurthy, and Yining Wang.
\newblock Robust dynamic assortment optimization in the presence of outlier
  customers.
\newblock {\em arXiv:1910.04183}, 2019.

\bibitem[CLPL19]{CLL16}
Maxime Cohen, Ilan Lobel, and Renato Paes~Leme.
\newblock Feature-based dynamic pricing.
\newblock {\em Management Science}, 2019.

\bibitem[COPSL21]{StatLearningPersonal}
Xi~Chen, Zachary Owen, Clark Pixton, and David Simchi-Levi.
\newblock A statistical learning approach to personalization in revenue
  management.
\newblock {\em Management Science}, 2021.

\bibitem[CSLZ21]{Cheungetal}
Wang~Chi Cheung, David Simchi-Levi, and Ruihao Zhu.
\newblock Hedging the drift: Learning to optimize under non-stationarity.
\newblock {\em Management Science}, 2021.

\bibitem[CW20]{Chen2020RobustDP}
Xi~Chen and Yining Wang.
\newblock Robust dynamic pricing with demand learning in the presence of
  outlier customers.
\newblock {\em working paper}, 2020.

\bibitem[dB14]{dB14}
Arnoud~V den Boer.
\newblock Dynamic pricing with multiple products and partially specified demand
  distribution.
\newblock {\em Mathematics of operations research}, 39(3):863--888, 2014.

\bibitem[dBZ14]{dBZ14}
Arnoud~V den Boer and Bert Zwart.
\newblock Simultaneously learning and optimizing using controlled variance
  pricing.
\newblock {\em Management science}, 60(3):770--783, 2014.

\bibitem[DFKM18]{Dagan2}
Yuval Dagan, Yuval Filmus, Daniel Kane, and Shay Moran.
\newblock The entropy of lies: playing twenty questions with a liar.
\newblock {\em arXiv preprint arXiv:1811.02177}, 2018.

\bibitem[Dru17]{Drutsa17}
Alexey Drutsa.
\newblock Horizon-independent optimal pricing in repeated auctions with
  truthful and strategic buyers.
\newblock In {\em Proceedings of the 26th International Conference on World
  Wide Web}, pages 33--42, 2017.

\bibitem[FKL{\etalchar{+}}16]{FeldmanKorenLivniMansourZohar16}
Michal Feldman, Tomer Koren, Roi Livni, Yishay Mansour, and Aviv Zohar.
\newblock Online pricing with strategic and patient buyers.
\newblock In {\em Advances in Neural Information Processing Systems}, 2016.

\bibitem[GJL19]{Golrezaei2}
Negin Golrezaei, Patrick Jaillet, and Jason Cheuk~Nam Liang.
\newblock Incentive-aware contextual pricing with non-parametric market noise.
\newblock {\em arXiv preprint arXiv:1911.03508}, 2019.

\bibitem[GJM19]{Golrezaei1}
Negin Golrezaei, Adel Javanmard, and Vahab Mirrokni.
\newblock Dynamic incentive-aware learning: Robust pricing in contextual
  auctions.
\newblock In {\em Advances in Neural Information Processing Systems}, pages
  9759--9769, 2019.

\bibitem[GKT19]{GKT19}
Anupam Gupta, Tomer Koren, and Kunal Talwar.
\newblock Better algorithms for stochastic bandits with adversarial
  corruptions.
\newblock In {\em Conference on Learning Theory}, 2019.

\bibitem[GMSS21]{GolrezaeiManSchSek21}
Negin Golrezaei, Vahideh~H. Manshadi, Jon Schneider, and Shreyas Sekar.
\newblock Learning product rankings robust to fake users.
\newblock In {\em Twenty-Second ACM Conference on Economics and Computation
  (EC)}, 2021.

\bibitem[GZ13]{GoldenshlugerZeevi13}
Alexander Goldenshluger and Assaf Zeevi.
\newblock {A linear response bandit problem}.
\newblock {\em Stochastic Systems}, 2013.

\bibitem[GZB14]{Gur1}
Yonatan Gur, Assaf~J. Zeevi, and Omar Besbes.
\newblock Stochastic multi-armed-bandit problem with non-stationary rewards.
\newblock In {\em Annual Conference on Neural Information Processing Systems
  2014}, pages 199--207, 2014.

\bibitem[HW98]{HerbsterWarmuth}
Mark Herbster and Manfred~K Warmuth.
\newblock Tracking the best expert.
\newblock {\em Machine learning}, 32(2):151--178, 1998.

\bibitem[JN19]{JN19}
Adel Javanmard and Hamid Nazerzadeh.
\newblock Dynamic pricing in high-dimensions.
\newblock {\em The Journal of Machine Learning Research}, 20(1):315--363, 2019.

\bibitem[KK07]{KK07}
Richard~M Karp and Robert Kleinberg.
\newblock Noisy binary search and its applications.
\newblock In {\em Proceedings of the eighteenth annual ACM-SIAM symposium on
  Discrete algorithms}, pages 881--890, 2007.

\bibitem[KL03]{KL03}
Robert Kleinberg and Tom Leighton.
\newblock The value of knowing a demand curve: Bounds on regret for online
  posted-price auctions.
\newblock In {\em Symposium on Foundations of Computer Science}. IEEE, 2003.

\bibitem[KLPS21]{KrishnamurthyLykourisPodimataSchapireSTOC21}
Akshay Krishnamurthy, Thodoris Lykouris, Chara Podimata, and Robert Schapire.
\newblock Contextual search in the presence of irrational agents.
\newblock In {\em 53rd Annual Symposium on Theory of Computing}, STOC 2021,
  2021.

\bibitem[KN21]{LearningFromBids}
Yash Kanoria and Hamid Nazerzadeh.
\newblock Incentive-compatible learning of reserve prices for repeated
  auctions.
\newblock {\em Operations Research}, 69(2):509--524, 2021.

\bibitem[KZ14]{KZ14}
N~Bora Keskin and Assaf Zeevi.
\newblock Dynamic pricing with an unknown demand model: Asymptotically optimal
  semi-myopic policies.
\newblock {\em Operations Research}, 62(5):1142--1167, 2014.

\bibitem[KZ17]{KeskinZeevi17}
N.~Bora Keskin and Assaf Zeevi.
\newblock Chasing demand: Learning and earning in a changing environment.
\newblock {\em Math. Oper. Res.}, 42(2):277--307, 2017.

\bibitem[LHW18]{LiuetalNeurIPS18}
Jinyan Liu, Zhiyi Huang, and Xiangning Wang.
\newblock Learning optimal reserve price against non-myopic bidders.
\newblock In {\em Annual Conference on Neural Information Processing Systems
  2018}, 2018.

\bibitem[LLS19]{LLS19}
Yingkai Li, Edmund~Y Lou, and Liren Shan.
\newblock Stochastic linear optimization with adversarial corruption.
\newblock {\em arXiv:1909.02109}, 2019.

\bibitem[LMPL18]{LML18}
Thodoris Lykouris, Vahab~S. Mirrokni, and Renato Paes~Leme.
\newblock Stochastic bandits robust to adversarial corruptions.
\newblock In {\em Symposium on Theory of Computing}, 2018.

\bibitem[LPLS21]{LPLS20}
Allen Liu, Renato Paes~Leme, and Jon Schneider.
\newblock Optimal contextual pricing and extensions.
\newblock In {\em Symposium on Discrete Algorithms}, 2021.

\bibitem[LPLV18]{LLV18}
Ilan Lobel, Renato Paes~Leme, and Adrian Vladu.
\newblock Multidimensional binary search for contextual decision-making.
\newblock {\em Operations Research}, 2018.

\bibitem[LSSS21]{LykourisSSS2019CorruptionRL}
Thodoris Lykouris, Max Simchowitz, Aleksandrs Slivkins, and Wen Sun.
\newblock Corruption-robust exploration in episodic reinforcement learning.
\newblock In {\em Annual Conference on Learning Theory}, 2021.

\bibitem[MM15]{MohriMunoz15}
Mehryar Mohri and Andr{\'e}s~Munoz Medina.
\newblock Revenue optimization against strategic buyers.
\newblock In {\em NIPS}, 2015.

\bibitem[MMM14]{MohriMunoz14}
Mehryar Mohri and Andres Munoz~Medina.
\newblock Optimal regret minimization in posted-price auctions with strategic
  buyers.
\newblock In {\em Advances in Neural Information Processing Systems}, 2014.

\bibitem[MPLS18]{MLS18}
Jieming Mao, Renato Paes~Leme, and Jon Schneider.
\newblock Contextual pricing for lipschitz buyers.
\newblock In {\em Advances in Neural Information Processing Systems}, 2018.

\bibitem[Nov63]{N63}
Albert~B Novikoff.
\newblock On convergence proofs for perceptrons.
\newblock Technical report, STANFORD RESEARCH INST MENLO PARK CA, 1963.

\bibitem[Now08]{Nowak08}
Robert Nowak.
\newblock Generalized binary search.
\newblock In {\em 2008 46th Annual Allerton Conference on Communication,
  Control, and Computing}, pages 568--574. IEEE, 2008.

\bibitem[Now09]{Nowak09}
Robert Nowak.
\newblock Noisy generalized binary search.
\newblock In {\em Advances in neural information processing systems}, pages
  1366--1374, 2009.

\bibitem[NSLW19]{NSLW19}
Mila Nambiar, David Simchi-Levi, and He~Wang.
\newblock Dynamic learning and pricing with model misspecification.
\newblock {\em Management Science}, 65(11):4980--5000, 2019.

\bibitem[Pel87]{Pelc87}
Andrzej Pelc.
\newblock Coding with bounded error fraction.
\newblock {\em Ars Combinatoria}, 24:17--22, 1987.

\bibitem[Pel02]{pelc2002searching}
Andrzej Pelc.
\newblock Searching games with errors—fifty years of coping with liars.
\newblock {\em Theoretical Computer Science}, 270(1-2):71--109, 2002.

\bibitem[PLS18]{LS18}
Renato Paes~Leme and Jon Schneider.
\newblock Contextual search via intrinsic volumes.
\newblock In {\em Symposium on Foundations of Computer Science}. IEEE, 2018.

\bibitem[PS21]{AdvZooming}
Chara Podimata and Alex Slivkins.
\newblock Adaptive discretization for adversarial lipschitz bandits.
\newblock In {\em Conference on Learning Theory}, pages 3788--3805. PMLR, 2021.

\bibitem[QB16]{QB16}
Sheng Qiang and Mohsen Bayati.
\newblock Dynamic pricing with demand covariates.
\newblock {\em Available at SSRN 2765257}, 2016.

\bibitem[Rad07]{Rademacher07}
Luis~A. Rademacher.
\newblock Approximating the centroid is hard.
\newblock In {\em Proceedings of the Twenty-Third Annual Symposium on
  Computational Geometry}, SCG '07, page 302–305, New York, NY, USA, 2007.

\bibitem[RdOdCZ{\etalchar{+}}20]{PKDD20}
Jason Rhuggenaath, Paulo~Roberto de~Oliveira~da Costa, Yingqian Zhang, Alp
  Akcay, and Uzay Kaymak.
\newblock Low-regret algorithms for strategic buyers with unknown valuations in
  repeated posted-price auctions.
\newblock In {\em Machine Learning and Knowledge Discovery in Databases -
  European Conference, 2020}, 2020.

\bibitem[RMK{\etalchar{+}}80]{RMKWS90}
Ronald~L. Rivest, Albert~R. Meyer, Daniel~J. Kleitman, Karl Winklmann, and Joel
  Spencer.
\newblock Coping with errors in binary search procedures.
\newblock {\em Journal of Computer and System Sciences}, 20(3):396--404, 1980.

\bibitem[Ros58]{R58}
Frank Rosenblatt.
\newblock The perceptron: a probabilistic model for information storage and
  organization in the brain.
\newblock {\em Psychological review}, 1958.

\bibitem[RSUW20]{RothetalMultidimPricing}
Aaron Roth, Aleksandrs Slivkins, Jonathan Ullman, and Zhiwei~Steven Wu.
\newblock Multidimensional dynamic pricing for welfare maximization.
\newblock {\em ACM Transactions on Economics and Computation (TEAC)}, 2020.

\bibitem[RTMG21]{AAAI21}
Giulia Romano, Gianluca Tartaglia, Alberto Marchesi, and Nicola Gatti.
\newblock Online posted pricing with unknown time-discounted valuations.
\newblock In {\em Thirty-Fifth {AAAI} Conference on Artificial Intelligence},
  2021.

\bibitem[RUW16]{RUW16}
Aaron Roth, Jonathan Ullman, and Zhiwei~Steven Wu.
\newblock Watch and learn: Optimizing from revealed preferences feedback.
\newblock In {\em Symposium on Theory of Computing}. ACM, 2016.

\bibitem[SJJ19]{SBJ19}
Virag Shah, Ramesh Johari, and Ramesh Johari.
\newblock Semi-parametric dynamic contextual pricing.
\newblock In {\em Advances in Neural Information Processing Systems}, pages
  2360--2370, 2019.

\bibitem[Spe92]{Spencer92}
Joel Spencer.
\newblock Ulam's searching game with a fixed number of lies.
\newblock {\em Theoretical Computer Science}, 95(2):307--321, 1992.

\bibitem[SU08]{BrownianAlex}
Aleksandrs Slivkins and Eli Upfal.
\newblock Adapting to a changing environment: the brownian restless bandits.
\newblock In {\em 21st Annual Conference on Learning Theory}, 2008.

\bibitem[SW92]{SpencerWinkler}
Joel Spencer and Peter Winkler.
\newblock Three thresholds for a liar.
\newblock {\em Combinatorics, Probability \& Computing}, 1:81--93, 1992.

\bibitem[Ula76]{UlamOG}
Stanis{\l}aw~M. Ulam.
\newblock {\em Adventures of a mathematician}.
\newblock Charles Scribner's Sons, New York, NY, USA, 1976.

\bibitem[WL21]{COLT21Best}
Chen{-}Yu Wei and Haipeng Luo.
\newblock Non-stationary reinforcement learning without prior knowledge: an
  optimal black-box approach.
\newblock In {\em Conference on Learning Theory, {COLT} 2021}, 2021.

\bibitem[ZD20]{ZD20}
Anton Zhiyanov and A.~Drutsa.
\newblock Bisection-based pricing for repeated contextual auctions against
  strategic buyer.
\newblock In {\em Proceedings of the Thirty-Seventh International Conference in
  Machine Learning}, 2020.

\bibitem[ZS21]{ZimmertSeldin21}
Julian Zimmert and Yevgeny Seldin.
\newblock Tsallis-inf: An optimal algorithm for stochastic and adversarial
  bandits.
\newblock {\em Journal of Machine Learning Research (JMLR)}, 2021.

\end{thebibliography}

\newpage
\appendix
\section{Glossary}\label{app_glossary}

\begin{center}
\begin{tabular}{|c|c|} 
 \hline
 {\bf Notation} & {\bf Explanation} \\
 \hline \hline
 $\calK$ & parameter space \\ 
 \hline
 $w(\calK_\phi, \vx)$ & width of convex body $\calK_\phi$ in the direction of $\vx$ \\
 \hline
 $\thetast$ & ground truth common feature value \\ \hline 
 $C$ & true (unknown) number of corruptions \\ \hline
 $\bc$ & known number of corruptions (analysis only) \\ \hline
 $\vkappa_\phi$ & approximate centroid of knowledge set $\calK_\phi$ \\
 \hline
 $\vkappast_\phi$ & centroid of knowledge set $\calK_\phi$ \\
 \hline
 $L_\phi$ & set of large dimensions of epoch $\phi$ \\ 
 \hline
 $S_\phi$ & set of small dimensions of epoch $\phi$ \\ 
 \hline
 $\Cyl(\calK_\phi, S_\phi)$ & cylindrification of $\calK_\phi$ with small dimensions $S_\phi$ \\ 
 \hline
 $\calA_\phi$ & set of explore queries happening within epoch $\phi$\\ 
 \hline
 $(\vh, \omega)$ & hyperplane with normal vector $\vh$ and intercept $\omega$ \\
 \hline
 $\Pi_{L_\phi} \calK_\phi$  & projection of $\calK_\phi$ in large dimensions $L_\phi$ \\ 
 \hline
 $\vH^+(\vh, \omega)$ (resp. $\vH^-(\vh, \omega)$)  & positive (resp. negative) halfspace defined by $(\vh, \omega)$ \\ 
 \hline
 $\Lambda_\phi$ & set of landmarks of epoch $\phi$ \\ 
 \hline
 $\tau$  & length of epoch in rounds \\
 \hline
 $u_\phi(\vp, \nu)$  &  $\nu$-margin projected undesirability level\\
 \hline
 $\calP(\bc, \nu)$  &  $\bc$-protected region in large dimensions\\
 \hline    
 $\vpst$ & landmark that is used for the separating cut \\
 \hline
\end{tabular}
\end{center}

\section{Further related work}\label{app:related}

\xhdr{Second methodological approach.} The first guarantees for this contextual setting can be traced to the work of Goldenshluger and Zeevi \cite{GoldenshlugerZeevi13} who relied on least squares estimation. Subsequently, Bastani and Bayati \cite{BastaniBayati20} showed how to incorporate sparsity in the guarantees using an appropriately designed LASSO estimator; the dependence on the sparsity parameter was further improved by Javanmard and Nazerzadeh \cite{JN19}. Qiang and Bayati \cite{QB16} consider a richer feedback setting where one can observe the demand at a particular price rather than the binary feedback we consider in this work. This model was extended in two directions: Nambiar, Simchi-Levi, and Wang \cite{NSLW19} allow for misspecifications in the demand model that the learner uses, while Ban and Keskin \cite{BK17} incorporate sparsity in the guarantees. The latter work provides guarantees for binary feedback under a parametric noise distribution, while Shah, Blanchet, and Johari \cite{SBJ19} extend these to non-parametric distributions. Finally, there has been work on the pure estimation side of the problem (without the decision-making component); e.g., Chen, Owen, Pixton, and Simchi-Levi \cite{StatLearningPersonal} provide convergence rates on estimation of $\stheta$ that depends on the sequence of previously selected prices (treating the latter as exogenous).

\xhdr{Dealing with non-i.i.d. rewards beyond adversarial corruptions.} Apart from adversarial corruptions, there have also been other models in the multi-armed bandit literature that go beyond dealing with i.i.d. rewards. Classical adversarial multi-armed bandit algorithms such as EXP3 \citep{EXP3} make no assumption on the reward sequence (they can come from an adaptive adversary) and compare against the best arm in hindsight. A relevant line of work is the one of \emph{best of both worlds} that aims to design a single algorithm that achieves the optimal stochastic (logarithmic) guarantees when the input is i.i.d. while also retaining the adversarial guarantees of EXP3 otherwise \citep{BubeckSli12}. The model of adversarial corruptions interpolates between these two extremes allowing to capture the \emph{middle ground} where most of the data are i.i.d. (or according to the dominant behavioral model) but some can behave arbitrarily and even adversarially. Another variant of EXP3, termed EXP3.S, is more tailored to dynamically evolving settings and compares against a stronger benchmark, i.e., the best sequence of arms that changes at most $S$ times; this is typically referred to as \emph{dynamic} or \emph{tracking} regret \citep{HerbsterWarmuth}. Subsequent works extend this setting by positing that the rewards come from distributions that can change over time either in a smooth manner or subject to a particular variation budget \citep{BrownianAlex, Gur1, Gur2, KeskinZeevi17, Cheungetal, COLT21Best}.

\xhdr{Non-contextual dynamic pricing.} Beyond the contextual setting, after the work of Kleinberg and Leighton \cite{KL03}, many papers incorporated important facets of dynamic pricing. Besbes and Zeevi \cite{BesbesZeevi09} provide learning policies based on maximum likelihood estimation when there exist inventory constraints and a finite horizon of interactions between seller and buyers. Broder and Rusmevichientong \cite{BR12} prove a regret rate of $\Theta(\sqrt{T})$ when the demands come from generic distributions without extra knowledge of their parameters, and an optimized regret of $\Theta (\log T)$ when the distributions are ``well-separable''. Moreover, den Boer and Zwart \cite{dBZ14} model the demand as a random variable, but the seller is assumed to only know the relationship between the first two moments and the selling price in order to construct the estimates of the optimal selling prices through averaging over shrinking intervals of the prices that have been chosen thus far.  Another setting studied by den Boer \cite{dB14} is the multi-product pricing setting, where only the first two moments of the demand distribution are known by the seller. Keskin and Zeevi \cite{KZ14} also study multi-product dynamic pricing when the demand function is linear and is perturbed by subgaussian noise, and they use least-squares linear regression techniques in order to estimate it. Roth, Slivkins, Ullman, and Wu \cite{RUW16,RothetalMultidimPricing} study a multi-unit, divisible pricing setting where the agent at each round purchases the bundle that maximizes their private utility function, and the learner observes only the purchased bundle (\emph{``revealed preferences''} feedback). Cesa-Bianchi, Cesari, and Perchet \cite{CBCP19} focus on a generalization of the stochastic problem where instead of assuming any smoothness on the distribution of the valuations, they assume that the distribution of the buyers' valuations is supported on an unknown set of unknown finite cardinality. Finally, Podimata and Slivkins~\cite{AdvZooming} extend the direction of fully adversarial valuations with an algorithm that enjoys regret better than $o(T^{2/3})$ for ``nicer'' instances and $\calO(T^{2/3})$ in the worst-case.

\xhdr{Ulam's game variants.} Ulam's game has also been studied for the case that the lies observed are a constant fraction of the total answers issued \citep{Pelc87, SpencerWinkler}, a constant fraction of the prefix of the answers are lies~\citep{AslamDhagat91}, and the target number is drawn from a known distribution \citep{Dagan2}. Our work is also related to works in noisy binary search with stochastic noise. Karp and Kleinberg~\cite{KK07} consider the setting where the learner is given $n$ biased coins $p_1\leq p_2\leq\ldots\leq p_n$, and the goal of the learner is to identify an interval $[p_{i'}, p_{i'+1}]$ that contains a given number. Nowak~\cite{Nowak08,Nowak09} studies Generalized Binary Search in which the learner wishes to identify a target function among a family of functions satisfying certain geometric properties, when the learner can only receive binary feedback under a stochastic noise model. Our work presents two main differences: we consider a contextual version of the problem (instead of non-contextual) and we consider an adversarial noise model (instead of stochastic noise).

\section{Uncorrupted contextual search for \texorpdfstring{$\eps$}--ball loss}

\subsection{\textsc{ProjectedVolume} algorithm and intuition}\label{app:proj-vol}
In this subsection, we describe the \textsc{ProjectedVolume} algorithm of \cite{LLV18}, which is the algorithm that \textsc{CorPV.Known} builds on. \textsc{ProjectedVolume} minimizes the $\eps$-ball loss for fully rational agents by approximately estimating $\stheta$. At all rounds $t \in [T]$ \textsc{ProjectedVolume} maintains a convex body, called the \emph{knowledge set} and denoted by $\calK_t \in \bbR^d$, which corresponds to all parameters $\btheta$ that are not ruled out based on the information until round $t$. It also maintains a set of orthonormal vectors $S_t = \{\vs_1, \dots, \vs_{|S_t|}\}$ such that $\calK_t$ has small width along these directions, i.e., $\forall \vs \in S_t: w(\calK_t,\vs) \leq \delta'$. The algorithm ``ignores'' a dimension of $\calK_t$, once it becomes small, and focuses on the projection of $\calK_t$ onto a set $L_t$ of dimensions that are orthogonal to $S_t$ and have larger width, i.e., $\forall \vl \in L_t: w(\calK_t, \vl) \geq \delta'$.

\begin{algorithm}[H]
\caption{\textsc{ProjectedVolume} \citep{LLV18}}
\label{alg:projected_volume}
\DontPrintSemicolon
\SetAlgoNoLine
Initialize $S_0 \gets \emptyset, \calK_0 \gets \calK$. \;
\For{$t \in [T]$}{
 context $\vx_t$, chosen by nature. \;
Query point $\omega_t = \langle \vx_t, \vkappa_t \rangle$, where $\vkappa_t \gets \text{approx-centroid}(\Cyl(\calK_t,S_t))$. \;
Observe feedback $y_t$ and set $\calK_{t+1} \gets \calK_t \bigcap \vH^+(\vx_t, \omega_t)$ if $y_t=+1$ or $\calK_{t+1} \gets \calK_t \bigcap \vH^-(\vx_t, \omega_t)$ if $y_t=-1$. \;
Add all directions $\vu$ orthogonal to $S_t$ with $w(\calK_{t+1},\vu)\leq \delta' = \frac{\eps^2}{16d(d+1)^2}$ to $S_{t}$. \;
Set $S_{t+1} = S_t$.
}
\end{algorithm}

\noindent At round $t$, after observing $\vx_t$, the algorithm queries point $\omega_t = \langle \vx_t, \vkappa_t \rangle$ where $\vkappa_t$ is the \emph{approximate} centroid of knowledge set $\calK_t$. Based on the feedback, $y_t$, the algorithm eliminates one of $\vH^+(\vx_t, \omega_t)$ or $\vH^-(\vx_t, \omega_t)$. The analysis uses the volume of $\Pi_{L_t}\calK_t$, denoted by $\vol \left( \Pi_{L_t}\calK_t \right)$, as a potential function. After each query either the set of small dimensions $S_t$ increases, thus making $\vol \left( \Pi_{L_t} \calK_t \right)$ increase by a bounded amount (which can happen at most $d$ times), or $\vol \left( \Pi_{L_t} \calK_t \right)$ decreases by a factor of $\left(1 - \nicefrac{1}{e^2} \right)$. This potential function argument leads to a regret of at most $\bigO\prn*{d\log(\nicefrac{d}{\eps})}$.
\subsection{Failure of \textsc{ProjectedVolume} against corruptions in one dimension}\label{app:single_dimension}
When $d=1$ and $\bc = 0$, there exists a $\stheta \in \bbR$ and nature replies whether $\omega_t$ is \emph{greater} or \emph{smaller} than $\stheta$. By appropriate queries, the learner can decrease the size of the knowledge set that is consistent with all past queries so that, after $\log(1/\eps)$ rounds, she identifies an $\eps$-ball containing $\thetast$. However, even when $\bc=1$, the above algorithm can be easily misled. Think about an example as in Figure~\ref{subfig:bin-search1} where $\stheta = 3/4$, the learner queries point $1/2$, and nature corrupts the feedback making her retain the interval $[0,1/2]$, instead of $[1/2, 1]$, as her current knowledge set.

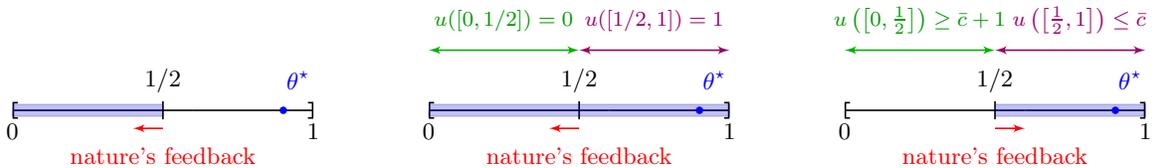
\begin{figure}[htbp]
\centering
\begin{subfigure}{0.33\textwidth}
\centering
\begin{tikzpicture}[line cap=timestep,line join=timestep,>=latex',x = 1cm, y = 1cm, scale=0.4]
\draw [color=white,, xstep=1cm,ystep=1cm] (-1, -2) grid (11,4.5);
\clip(-1,-2) rectangle (11,4.5);
\draw[[-, color=black, semithick] (0,0)  -- (6,0);
\draw[[-, color=black, semithick] (10,0) -- (6,0);
\draw[color=blue, fill=blue]      (9, 0) circle (3pt);
\draw[color=black, semithick]     (5, -0.3) -- (5, 0.3);
\draw[->, color=red, semithick]   (5, -0.6) -- (4, -0.6);
\draw[line width=0pt, fill=blue, opacity=0.25] (0,0) -- (0, -0.2) -- (5, -0.2) -- (5, 0.2) -- (0,0.2);
\begin{footnotesize}
\draw[color=black] (0,  -0.7) node {$0$};
\draw[color=black] (10, -0.7) node {$1$};
\draw[color=blue]  (9.5,  1 ) node {$\thetast$};
\draw[color=black] (5,    1 ) node {$1/2$};
\draw[color=red]   (5, -1.5 ) node {nature's feedback};
\end{footnotesize}
\end{tikzpicture}
\caption{\centering Binary search at round $1$.}\label{subfig:bin-search1}
\end{subfigure}\hfill
\begin{subfigure}{0.33\textwidth}
\centering
\begin{tikzpicture}[line cap=timestep,line join=timestep,>=latex',x = 1cm, y = 1cm, scale=0.4]
\draw [color=white,, xstep=1cm,ystep=1cm] (-1, -2) grid (11,4.5);
\clip(-1,-2) rectangle (11,4.5);
\draw[[-, color=black, semithick] (0,0)  -- (6,0);
\draw[[-, color=black, semithick] (10,0) -- (6,0);
\draw[color=blue, fill=blue]      (9, 0) circle (3pt);
\draw[color=black, semithick]     (5, -0.3) -- (5, 0.3);
\draw[->, color=red, semithick]   (5, -0.6) -- (4, -0.6);
\draw[line width=0pt, fill=blue, opacity=0.25] (0,0) -- (0, -0.2) -- (10, -0.2) -- (10, 0.2) -- (0,0.2);
\draw[<->, color=black!30!green, semithick]   (0,2) -- (5,2);
\draw[<->, color=blue!40!red, semithick] (5,2) -- (10,2);
\begin{footnotesize}
\draw[color=black] (0,  -0.7) node {$0$};
\draw[color=black] (10, -0.7) node {$1$};
\draw[color=blue]  (9.5,  1 ) node {$\thetast$};
\draw[color=black] (5,    1 ) node {$1/2$};
\draw[color=red]   (5, -1.5 ) node {nature's feedback};
\end{footnotesize}
\begin{scriptsize}
\draw[color=black!40!green] (2.5, 3) node {$u([0,1/2]) = 0$};
\draw[color=blue!40!red]    (7.5, 3) node {$u([1/2,1]) = 1$};
\end{scriptsize}
\end{tikzpicture}
\caption{\centering Undesirabilities at round $1$.}\label{subfig:bin-search2}
\end{subfigure}\hfill
\begin{subfigure}{0.33\textwidth}
\centering
\begin{tikzpicture}[line cap=timestep,line join=timestep,>=latex',x = 1cm, y = 1cm, scale=0.4]
\draw [color=white,, xstep=1cm,ystep=1cm] (-1, -2) grid (11,4.5);
\clip(-1,-2) rectangle (11,4.5);
\draw[[-, color=black, semithick] (0,0)  -- (6,0);
\draw[[-, color=black, semithick] (10,0) -- (6,0);
\draw[color=blue, fill=blue]      (9, 0) circle (3pt);
\draw[color=black, semithick]     (5, -0.3) -- (5, 0.3);
\draw[->, color=red, semithick]   (5, -0.6) -- (6, -0.6);
\draw[line width=0pt, fill=blue, opacity=0.25] (5,0) -- (5, -0.2) -- (10, -0.2) -- (10, 0.2) -- (5,0.2);
\draw[<->, color=black!30!green, semithick]   (0,2) -- (5,2);
\draw[<->, color=blue!40!red, semithick] (5,2) -- (10,2);
\begin{footnotesize}
\draw[color=black] (0,  -0.7) node {$0$};
\draw[color=black] (10, -0.7) node {$1$};
\draw[color=blue]  (9.5,  1 ) node {$\thetast$};
\draw[color=black] (5,    1 ) node {$1/2$};
\draw[color=red]   (5, -1.5 ) node {nature's feedback};
\end{footnotesize}
\begin{scriptsize}
\draw[color=black!40!green] (2.5, 3) node {$u\left(\left[0,\frac{1}{2}\right]\right) \geq \bc+1$};
\draw[color=blue!40!red]    (7.8, 3) node {$u\left(\left[\frac{1}{2},1\right]\right) \leq \bc$};
\end{scriptsize}
\end{tikzpicture}
\caption{\centering Undesirabilities at round $2\bc + 1$.}\label{subfig:bin-search3}
\end{subfigure}
\caption{Single dimensional binary search. The opaque band is the knowledge set after each query.}\label{fig:one-dim-bin-search}
\end{figure}

That said, if the learner knows $\bc$ then, by repeatedly querying the same point, she can guarantee that if she observes $y_t = +1$ (resp. $y_t = -1$) for \emph{at least} $\bc+1$ times, then feedback $y_t = +1$ (resp. $y_t = -1$) is definitely consistent with $\stheta$. Hence, by repeating each query $2\bc+1$ times the learner can incur regret at most  $(2\bc + 1)\log (\nicefrac{1}{\eps})$. Unfortunately, in higher dimensions it is impossible for the learner to repeat the exact same query, as nature chooses different contexts.
\section{Main proofs for Section~\ref{sec:corpv_analysis}}

\subsection{Proof of Proposition~\ref{prop:known-c}}\label{app:key_proposition}

\begin{proof}[Proof of Lemma~\ref{lem:computation}.]
From Lemma~\ref{lem:existence}, there exists a hyperplane $(\vhst_\phi,\omegast_\phi)$ with distance at most $\bnu$ from $\vkappa_{\phi}$ that has all of $\calP(\bc,\nu)$ inside $\vH^+(\vhst_\phi,\omegast_\phi)$. By arguing about the volume contained in any specified ``cap'' of a multi-dimensional ball, we prove that with probability at least $(20\sqrt{d-1})^{-1}$ we can identify a point $
\vq$ lying on the halfspace \[ \vH^+\left(\vhst_\phi, \left \langle \vhst_\phi, \vpst \right\rangle + \frac{\zeta \cdot \ln(3/2)}{\sqrt{d-1}} \right).\] This is formally stated and proved in Lemma~\ref{lem:cap-vol} (Appendix~\ref{app:computation}). In each
iteration of \textsc{CorPV.SeparatingCut} using point $\vq$ (i.e., lines~\ref{step:perc-mb}--\ref{step:check-calpc} in Algorithm~\ref{algo:separating_cut}), Perceptron can identify a hyperplane $(\tvh_{\phi}, \tomega_\phi)$ separating $\vq$ and $\calP(\bc,\nu)$ after $\frac{d-1}{\zeta^2\cdot \ln^2(3/2) }$ samples. The number of samples depends on the Perceptron mistake bound, as mentioned in Section~\ref{ssec:algorithmic_crux}. Since $\vq \in \calB_{L_\phi}(\vpst,\zeta)$, then, 
\begin{align*}
\|\vq- \vkappast_\phi \| &= \|\vq - \vkappa_\phi + \vkappa_\phi - \vkappast_\phi \| \\  
&\leq \|\vq - \vkappa_\phi\| + \|\vkappa_\phi - \vkappast_\phi \| &\tag{triangle inequality}\\
&\leq \|\vq - \vkappa_\phi\| + \bnu &\tag{approximation of $\vkappast_\phi$ in polynomial time}\\ 
&= \|\vq - \vpst + \vpst - \vkappa_\phi \| + \bnu\\
&\leq \|\vq - \vpst \| + \|\vpst - \vkappa_\phi \| + \bnu &\tag{triangle inequality}\\
&\leq \bnu + \bnu + \bnu &\tag{$\vq \in \calB_{L_\phi}(\vpst, \zeta)$ and Definition~\ref{def:landmark}}
\end{align*} 
Hence, $\dist(\vkappast_\phi, (\tvh_\phi,\tomega_\phi)) \leq 3\bnu$.

Assume for now that the random landmark $\vpst \in \Land_\phi$ chosen at Step~\ref{step:sample-ball} of Algorithm~\ref{algo:separating_cut} is the landmark such that $u_\phi(\vpst, \nu) \geq \bc \cdot (d+1) +1$. Because $\vpst$ is chosen uniformly at random from set $\Land_\phi$, then the probability of it being the target landmark is $(2d)^{-1}$, and is independent with all other random variables of Algorithm~\ref{algo:separating_cut}.

At every iteration of the inner while-loop of Algorithm~\ref{algo:separating_cut}, the algorithm checks whether $\vq$ and $\vkappa_\phi$ are on the ``correct'' side of $\vh$ and possibly updates the Perceptron (this is done in time $\bigO(1)$): the centroid $\vkappa_\phi$ needs to be part of the $\calP(\bc, \nu)$, while $\vq$ needs to be separated from $\calP(\bc,\nu)$. Thus, they must belong to $\vH^+(\tvh, \tomega)$ and $\vH^-(\tvh,\tomega)$ respectively. This is done in Steps~\ref{step:check-centr-pt1}--\ref{step:check-centr-pt2} of Algorithm~\ref{algo:separating_cut}. The most computationally demanding part of Algorithm~\ref{algo:separating_cut} is Steps~\ref{step:comb1}--\ref{step:lp2}, where we check whether the hyperplane cuts some part of $\calP(\bc,\nu)$. For that, we check all the possible \[{\binom{|\calA_\phi|}{|D_\phi|}}  ={\binom{|\calA_\phi|}{\bc}} \leq (2d \cdot \bc (d+1)+1)^{\bc}=\Theta \left(\left(d^2\bc\right)^{\bc} \right)\]
combinations of which $\bc$ hyperplanes to disregard as corrupted and solve
a mathematical program which we refer to as CP~\footnote{Technically this program is convex and not linear as we also take intersection with the $\calK_\phi$ which is a convex body.} for each such combination. We remark that this computation serves the purpose of identifying which $\bc$ hyperplanes in $\mathcal{A}_\phi$
were corrupted, thus giving
erroneous feedback regarding where $\stheta$ lies. These CPs have at most $d$ variables (since $\calK_\phi \subseteq \bbR^d$) and $|\calA_\phi| - \bc = \bc \cdot (2d(d+1) - 1) + 1)$ constraints. Denoting the complexity of solving a CP with $n$ variables and $m$ constraints as $O(\text{CP}(n,m))$ we therefore obtain that Steps~\ref{step:comb1}--\ref{step:lp2} have computational complexity \[\calO\left( O(\text{CP}(d,\bc \cdot (2d(d+1) - 1) + 1))) \cdot \left( d^2 \bc \right)^{\bc} \right).\] 

Putting everything together, and given that the event that $\vpst$ is the target landmark is independent from the event that $\tvq$ is found at the desired halfspace we have that with probability at least \[\frac{1}{2d} \cdot \frac{1}{20\sqrt{d-1}} \] the computational complexity of Algorithm~\ref{algo:separating_cut} is:
\[\calO \left(\frac{(d-1)}{\bnu^2}\ \cdot \left( d^2 \cdot \bc \right)^{\bc} \cdot O(\text{CP}(d, \bc \cdot (2d(d+1) - 1) + 1))\right)\]
This concludes our proof.
\end{proof}

\begin{proof}[Proof of Lemma~\ref{lem:num-epochs}.]
To prove this lemma, we construct a potential function argument, similar to the one of \cite{LLV18} and we highlight the places where our analysis differs from theirs.

We use $\Gamma_\phi = \vol\left( \Pi_{L_\phi} \calK_\phi\right)$ as our potential function. Its lower bound is: $\Gamma_\phi \geq \Omega \left( \frac{\delta}{d} \right)^{2d}$. To see this, note that Step~\ref{step:keep-large} of \textsc{CorPV.EpochUpdates} (Algorithm~\ref{alg:corpv_epoch_updates}) ensures that for all $\vu \in L_\phi$ it holds that $w\left(\Pi_{L_\phi} \calK_\phi, \vu \right) \geq \delta$. It is known (see \cite[Lemma~6.3]{LLV18} or Lemma~\ref{lem:ball-inside}) that if $\calK \subseteq \bbR^d$ is a convex body such that $w(\calK, \vu) \geq \delta$ for every unit vector $\vu$, then $\calK$ contains a ball of diameter $\delta/d$. This means that $\Pi_{L_\phi} \calK_\phi$ contains a ball of radius $\frac{\delta}{|L_\phi|}$, so \[ \vol\left( \Pi_{L_\phi}\calK_\phi\right) \geq V(\abs{L_\phi}) \left( \frac{\delta}{|L_\phi|} \right)^{|L_\phi|},\] where by $V(\abs{L_\phi})$ we denote the volume of the $|L_\phi|$-dimensional unit ball. Using that $|L_\phi| \leq d$ and $V(d) \geq \Omega\left(\frac{1}{d} \right)^d$, the latter can be lower bounded by $\Omega \left(\frac{\delta}{d} \right)^{2d} $. Hence, $\Gamma_\phi = \vol\left( \Pi_{L_\phi} \calK_\phi\right) \geq \Omega \left(\frac{\delta}{d} \right)^{2d} $.

We split our analysis of the upper bound of $\Gamma_\phi$ in two parts. In the first part, we study the potential function between epochs where the set of large dimensions $L_\phi$ does not change. In the second part, we study the potential function between where the set of large dimensions $L_\phi$ becomes smaller. For both cases, we prove the following useful result (Lemma~\ref{lem:epoch-grun}) which relates the volume of $\Pi_{L_\phi} \calK_{\phi+1}$ with the volume of $\Pi_{L_\phi} \calK_\phi$ when $\delta = \frac{\eps}{4 (d + \sqrt{d})}$:
\begin{equation}\label{eq:vol-same-large}
\vol \left( \Pi_{L_{\phi}} \calK_{\phi+1} \right) \leq \left( 1 - \frac{1}{2e^2} \right) \vol \left( \Pi_{L_{\phi}} \calK_\phi \right)
\end{equation}
When the set $L_\phi$ does not change between epochs $\phi, \phi+1$ (i.e., $L_\phi = L_{\phi+1}$) Equation~\eqref{eq:vol-same-large} becomes: 
\begin{equation}\label{eq:vol-same-large2}
\vol \left( \Pi_{L_{\phi+1}} \calK_{\phi+1} \right) \leq \left( 1 - \frac{1}{2e^2} \right) \vol \left( \Pi_{L_{\phi}} \calK_\phi \right)
\end{equation}
When $L_\phi$ does change, then the set of small dimensions increases from $S_\phi$ to $S_{\phi+1}$. In order to correlate $\vol ( \Pi_{L_{\phi+1}} \calK_{\phi+1} )$ with $\vol(\Pi_{L_\phi} \calK_\phi)$ we make use of the following known inequality (\cite[Lemma~6.1]{LLV18} or Lemma~\ref{lem:cyl}) that for a convex body $\calK \subseteq \bbR^d$ if $w(\calK, \vu) \geq \delta'$ (for some scalar $\delta' >0$, for \emph{every} unit vector $\vu$), then, for every $(d-1)-$dimensional subspace $L$ it holds that: 
\begin{equation}\label{eq:cyl}
    \vol \left( \Pi_L \calK \right) \leq \frac{d (d+1)}{\delta'} \vol(\calK)
\end{equation}
We are going to apply Equation~\eqref{eq:cyl} for $\calK := \Pi_{L_\phi} \calK_{\phi+1}$ and $L := L_{\phi+1}$. For that, we need to find $\delta'$ for which $w(\Pi_{L_\phi}\calK) \geq \delta'$.

We make use of the following lemma (\cite[Theorem~5.3]{LLV18} or Lemma~\ref{lem:dir-grun}) which relates the width of the following convex body: $\calK_{+} = \calK \bigcap \{\vx| \langle \vu, \vx - \vkappa \rangle =0\}$ (where $\vkappa$ is the centroid of $\calK$ and $\vu$ is any unit vector) with the width of $\calK$ in the direction of any unit vector $\vv$ as follows: 
\begin{equation}\label{eq:dir-grun}
\frac{1}{d+1} w(\calK,\vv) \leq w(\calK_+, \vv) \leq w(\calK,\vv) 
\end{equation}
By the definition of large dimensions, $w(\calK_\phi,\vu) \geq \delta, \forall \vu \in L_{\phi}$. So, if we were to cut $\calK_\phi$ with a hyperplane that passes precisely from the centroid $\vkappa_\phi^{\star}$ then, from Equation~\eqref{eq:dir-grun}: $w(\calK_+,\vu) \geq \frac{\delta}{d+1}$. Since, however, we make sure that we cut $\calK_\phi$ through the approximate centroid $\vkappa_\phi$, then $w(\calK_+,\vu) \geq \frac{\delta}{d+1}$. This is due to the fact that the halfspace $\vH^+(\tvh_\phi,\tomega_\phi)$ returned from \textsc{CorPV.SeparatingCut} (Algorithm~\ref{algo:separating_cut}) always contains $\vkappa_\phi$. Since $\|\vkappa_\phi - \vkappa_\phi^{\star}\| \leq \bnu$ then $\vkappa_\phi^{\star} \in \vH^+(\tvh_\phi,\tomega_\phi)$. As a result, from Equation~\eqref{eq:cyl} we get:
\begin{equation}\label{eq:vol-incr}
    \vol \left( \Pi_{L_{\phi+1}} \calK_{\phi+1}\right) \leq \frac{d(d+1)^2}{\delta} \vol \left( \Pi_{L_\phi} \calK_{\phi+1} \right)
\end{equation}
We have almost obtained the target Equation~\eqref{eq:vol-same-large}. To complete the argument we need to correlate $\vol(\Pi_{L_\phi} \calK_{\phi+1})$ with $\vol(\Pi_{L_\phi} \calK_\phi)$. We prove the following inequality between the two (Lemma~\ref{lem:epoch-grun}):
\begin{equation}\label{eq:almst}
\vol\left(\Pi_{L_{\phi}}\calK_{\phi+1}\right)\leq \left(1-\frac{1}{2e^2}\right)\vol(\Pi_{L_{\phi}}\calK_{\phi})
\end{equation}
Combining Equations~\eqref{eq:vol-incr} and~\eqref{eq:almst} and using the fact that we add at most $d$ new directions to $S_\phi$, that the volume of $\calK_0$ is upper bounded by $O(1)$, and the lower bound for $\Gamma_\phi$ we computed, we have: 
\begin{equation*}
    \Omega \left( \frac{\delta}{d} \right)^{2d} \leq \Gamma_\Phi= \vol \left(\Pi_{L_\Phi} \calK_\Phi \right) \leq O(1) \cdot \left(\frac{d(d+1)^2}{\delta} \right)^d \cdot \left( 1 - \frac{1}{2e^2} \right)^{\Phi}
\end{equation*}
where by $\Phi$ we denote the total number of epochs. Solving the above in terms of $\Phi$ and substituting $\delta$ we obtain: $\Phi \leq O ( d \log \frac{d}{\eps})$.
\end{proof}

\begin{proof}[Proof of Proposition~\ref{prop:known-c}]
We start by analyzing the regret for the $\eps$-ball loss function. Lemma~\ref{lem:num-epochs} establishes that after $\Phi = O (d \log (d/\eps))$ epochs, the set of large dimensions $L_\phi$ is empty. When $L_\phi = \emptyset$, then $S_\phi$ must be an orthonormal basis for which $w(\calK_\phi, \vs) \leq \delta, \forall \vs \in S_\phi$. For any received context $\vx_t$ after $L_\phi = \emptyset$, we have the following. First, $\vx_t = \sum_{i \in [|S_\phi|]} a_i \cdot s_i$, and for any vector $\mathbf{a}$ it holds that: $\|\mathbf{a}\|_1 \leq \sqrt{d} \cdot \|\mathbf{a}\|_2$. So, the width of $\calK_\phi$ in the direction of $\vx_t$ when $|S_\phi| = d$ is: 
\begin{align*}
w\left(\calK_\phi,\vx_t\right) &= \max_{\vp,\vq \in \calK_\phi} \langle \vx_t, \vp - \vq \rangle &\tag{definition of width}\\
&\leq \sum_{i \in [|S_\phi|]} a_i \cdot \max_{\vp,\vq \in K_\phi} \langle s_i, \vp - \vq \rangle &\tag{$\vx_t = \sum_{i \in [|S_\phi|]} a_i \vs_i$ and properties of $\max(\cdot)$}\\
&\leq \sum_{i \in [|S_\phi|]} a_i\cdot \delta &\tag{definition of small dimensions}\\
&\leq \|\mathbf{a}\|_1 \cdot \delta \leq \sqrt{d} \cdot \delta \cdot \|\mathbf{a}\|_2 &\tag{$\|\mathbf{a}\|_1 \leq \sqrt{d} \cdot \|\mathbf{a}\|_2$}
\end{align*}
Substituting $\delta = \frac{\eps}{4(d + \sqrt{d})}$ the latter becomes: $w(\calK_\phi, \vx_t) \leq \frac{\eps}{4(\sqrt{d}+1)}\leq \eps$. Therefore, when $L_\phi = \emptyset$, then \textsc{CorPV.Known} incurs no additional regret for any context it receives in the future, if we are interested in the $\eps$-ball loss. Using the fact that each epoch contains $\tau = 2d \cdot \bc (d+1) + 1$ rounds during which we can incur a loss of at most $1$ we have that the regret for the $\eps$-ball loss is equal to: 
\[R_{\eps-\text{ball}}(T) = O \left(d \log \frac{d}{\eps} \right) \cdot \left(2 d \cdot \bc \cdot (d+1) + 1  \right) = \calO \left((d^2 \bc + 1) d \log \left(\frac{d}{\eps} \right) \right) \]
For the absolute and the pricing loss, for every round after the set $L_\phi$ has become empty, the queried point incurs a loss of at most $\eps$. As a result the regret for both cases is \emph{at most}
\[R_{\eps-\text{ball}}(T) + \eps \cdot T\]
Tuning $\eps = 1/T$ we get the result for these two loss functions.
\end{proof}

\subsection{Proof of Theorem~\ref{thm:agnostic-corpv}}\label{app:analysis_agnostic}

Before providing the proof of Theorem~\ref{thm:agnostic-corpv}, we give some auxiliary probabilistic lemmas that we use.

\begin{lemma}[{\cite[Lemma~3.3]{LML18}}]\label{lem:cor-per-layer}
For corruption level $C$, each layer $j \geq \log C$ observes at most $\ln \left(\nicefrac{1}{\beta} \right) + 3$ corruptions with probability at least $1 - \beta$.
\end{lemma}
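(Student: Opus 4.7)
The plan is a martingale Chernoff bound applied to the random variable counting corruptions observed at layer $j$. Define
\[X_j = \sum_{t=1}^T c_t \cdot \mathbf{1}[j_t = j],\]
the number of corrupted rounds on which layer $j$ was played. The key structural observation I would use is that, while the corruption indicator $c_t$ is chosen by an adaptive adversary, the adversary does not see the learner's round-$t$ randomness, so $c_t$ is measurable with respect to the history $\mathcal{F}_{t-1} = \sigma(j_1,\ldots,j_{t-1})$, while $j_t$ is independent of $\mathcal{F}_{t-1}$ with $\Pr(j_t = j) = 2^{-j}$ (modulo the boundary adjustment at $j=1$). Consequently each summand $Y_t := c_t \mathbf{1}[j_t = j]$ is a conditional Bernoulli with $\mathbb{E}[Y_t \mid \mathcal{F}_{t-1}] = c_t \cdot 2^{-j}$, and the total conditional mean satisfies
\[\sum_{t=1}^T \mathbb{E}[Y_t \mid \mathcal{F}_{t-1}] \;\leq\; C \cdot 2^{-j} \;\leq\; 1 \quad\text{whenever } j \geq \log C.\]

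Next I would apply the standard exponential martingale argument: for any $\lambda > 0$, since $Y_t \in \{0,1\}$,
\[\mathbb{E}[e^{\lambda Y_t} \mid \mathcal{F}_{t-1}] \;\leq\; \exp\!\bigl(c_t 2^{-j}(e^\lambda - 1)\bigr),\]
so iterating the tower property yields $\mathbb{E}[e^{\lambda X_j}] \leq \exp(e^\lambda - 1)$ for $j \geq \log C$. By Markov's inequality,
\[\Pr(X_j \geq a) \;\leq\; \exp(-\lambda a + e^\lambda - 1).\]
Optimizing with $\lambda = \ln a$ (valid for $a \geq 1$) gives the bound $\Pr(X_j \geq a) \leq (e/a)^a \cdot e^{-1}$.

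Finally I would verify the numerical constant: setting $a = \ln(1/\beta) + 3$, taking logs reduces the claim $(e/a)^a \leq \beta$ to $a(\ln a - 2) \geq -3$, which one checks directly is satisfied for every $a \geq 3$ (and $a \geq 3$ automatically because $\beta \leq 1$). This yields $\Pr(X_j \geq \ln(1/\beta) + 3) \leq \beta$, as required.

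The only subtle point, and therefore the piece I would want to state most carefully, is the measurability argument that lets us treat $X_j$ as a sum with controlled conditional MGF despite the adaptive adversary: without the fact that $c_t$ is chosen before $j_t$ is revealed (formalized via the protocol's assumption that nature is adaptive only up to the learner's round-$t$ randomness), the martingale bound above would not apply and one could not hope to obtain a logarithmic-in-$1/\beta$ bound on $X_j$. The rest is a routine Chernoff calculation.
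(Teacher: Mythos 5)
Your proof is correct: the measurability observation (that $c_t$ is fixed before the learner's round-$t$ layer draw, so each summand has conditional mean $c_t 2^{-j}$), the conditional-MGF/supermartingale step using $\sum_t c_t \leq C$ almost surely, and the Chernoff optimization at $a = \ln(1/\beta)+3$ all go through, since $\min_a (a\ln a - 2a) = -e > -3$. Note, however, that the paper does not prove this lemma at all --- it is imported verbatim from \cite{LML18} (their Lemma~3.3) --- and your argument is essentially the standard martingale Chernoff proof underlying that cited result; the only point of divergence is the boundary layer $j=1$, whose sampling probability exceeds $2^{-j}$ (by roughly $1/T$), which you already flagged and which the slack $-e>-3$ in the constant absorbs.
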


\begin{lemma}\label{app:union-bound}
Let $X_1, \dots, X_n$ denote $n$ random binary variables that take value of $0$ with probability \emph{at most} $p_1, \dots, p_n$ respectively. Then, the following is true: \[ \Pr \left[ \bigcap_{i \in [n]} X_i \right] \geq 1 - \sum_{i \in [n]} p_i\]
\end{lemma}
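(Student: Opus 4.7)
The plan is to apply the classical union bound to the complementary failure events. Let $A_i$ denote the event $\{X_i = 0\}$ and let $B = \bigcap_{i \in [n]} \{X_i = 1\}$ be the event whose probability we wish to lower bound. By hypothesis, $\Pr[A_i] \leq p_i$ for each $i$.

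The key observation is that $B^c = \bigcup_{i \in [n]} A_i$: the only way for at least one $X_i$ to fail to be $1$ is for at least one $X_i$ to equal $0$. Thus I would first rewrite $\Pr[B] = 1 - \Pr[B^c] = 1 - \Pr\bigl[\bigcup_{i \in [n]} A_i\bigr]$. Next I would apply Boole's inequality (the countable subadditivity of probability measures) to obtain
\[
\Pr\Bigl[\bigcup_{i \in [n]} A_i\Bigr] \;\leq\; \sum_{i \in [n]} \Pr[A_i] \;\leq\; \sum_{i \in [n]} p_i,
\]
where the second inequality uses the monotonicity of probability together with the stated bound $\Pr[A_i] \leq p_i$. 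Combining these two displays yields $\Pr[B] \geq 1 - \sum_{i \in [n]} p_i$, which is exactly the claim.

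There is essentially no obstacle to overcome: the lemma is a restatement of the union bound applied to complementary events, and no independence or further structural assumption on the $X_i$ is needed (which is why the lemma is useful in the body of the paper, where the failure events across layers may be correlated).
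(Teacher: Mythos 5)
Your proof is correct and follows essentially the same route as the paper: the paper also interprets $\Pr\bigl[\bigcap_{i\in[n]} X_i\bigr]$ as the probability that no $X_i$ equals $0$, passes to the complement $\{\exists j: X_j = 0\}$, and applies Boole's inequality together with $\Pr[X_j = 0]\leq p_j$. Your additional remark that no independence is needed is accurate and consistent with how the lemma is used in the paper.
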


\begin{proof}
This inequality is proven using the union bound as follows: \[\Pr \left[ \bigcap_{i \in [n]} X_i \right] = 1 - \Pr [\exists j: X_j = 0] \geq 1 - \sum_{j \in [n]} p_j\]
\end{proof}

\begin{lemma}\label{lem:binom}
Let $X$ be a random variable following the binomial distribution with parameters $n$ and $p$, such that $p = 1/a$ for some $a > 0$. Then, $\Pr[X < 1] \leq \delta$ for $n = a \cdot \log (1/\delta)$. 
\end{lemma}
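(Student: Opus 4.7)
The plan is to observe that $\Pr[X < 1]$ reduces to $\Pr[X = 0]$ since $X$ is nonnegative and integer-valued, and then to bound this probability directly using a standard exponential inequality. More precisely, I would first write
\[
\Pr[X < 1] = \Pr[X = 0] = (1-p)^n = \left(1 - \frac{1}{a}\right)^n,
\]
which follows immediately from the definition of the binomial distribution.

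Next I would apply the elementary inequality $1 - x \leq e^{-x}$ (valid for all real $x$), specialized to $x = 1/a$, to obtain $\left(1 - 1/a\right)^n \leq e^{-n/a}$. Substituting the assumed value $n = a \cdot \log(1/\delta)$ (with $\log$ interpreted as the natural logarithm, which is consistent with the exponential bound used throughout the paper), the right-hand side becomes $e^{-\log(1/\delta)} = \delta$, yielding the claim.

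There is essentially no obstacle here: the only minor subtlety is the convention on $\log$. If one prefers to be explicit, the statement could be phrased with $\ln$, or alternatively one could use the tighter bound $(1-p)^n \leq e^{-np}$ and absorb any base-change constant into the definition of $n$. Since the lemma is invoked later in routine high-probability bounds (e.g., arguments ensuring that the robust layer $\lceil \log C \rceil$ in $\textsc{CorPV.AI}$ is played sufficiently often), the cleanest presentation is simply the two-line calculation above.
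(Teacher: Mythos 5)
Your proof is correct and follows essentially the same route as the paper: both reduce $\Pr[X<1]$ to $\Pr[X=0]=(1-p)^n$ and finish with the elementary exponential bound (your $1-x\le e^{-x}$ is just the rearranged form of the paper's $\log\frac{1}{1-p}\ge \frac{p}{1-p}$, which is marginally tighter but not needed). The remark about interpreting $\log$ as the natural logarithm is the only caveat, and it matches the paper's implicit convention.
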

\begin{proof}
Using the definition of the binomial distribution we have that: $\Pr[X < 1] = \Pr[X = 0] = (1 - p)^n$. For any $\beta$ in order for the result to hold one needs 
\begin{equation}\label{eq:n}
n \geq \frac{\log(1/\beta)}{\log \left(\frac{1}{1-p} \right)}
\end{equation}
Since $\log \left(\frac{1}{1-p} \right) \geq \frac{p}{1-p}$ then Equation~\eqref{eq:n} is satisfied. Substituting $p  = 1/a$ we get the result.\end{proof}

\vspace{0.1in}
\begin{proof}[Proof of Theorem~\ref{thm:agnostic-corpv}]
We present the proof for the $\eps$-ball loss. Tuning $\eps = 1/T$ afterwards gives the stated result for the absolute and pricing loss.

We separate the layers into two categories: layers $j \geq \log C$ are \emph{corruption-tolerant}, and layers $j < \log C$ are \emph{corruption-intolerant}. Every layer $j$, if it were to run in isolation, would spend $\Phi_j$ epochs until converging to a knowledge set with width at most $\eps$ in all the directions. However, in \textsc{CorPV.AC} layer $j$'s epoch potentially gets increased every time that a layer $j' \geq j$ changes epoch. Since there are at most $\log T$ layers, this results in an added $\log T$ multiplicative overhead for the epochs of each layer. This overhead is suffered by the corruption-tolerant layers.

We first study the performance of the corruption-tolerant layers. Let $\beta_j > 0$ denote the failure probability for layer $j$ such that $\beta_j \leq \frac{\beta}{\log T+1}$. From Lemma~\ref{lem:cor-per-layer} we have that with probability at least $1 - \beta_j$, the actual corruption experienced by the tolerant layers is at most 
\begin{equation}\label{eq:tildeC}
\calC = \ln \left(\frac{1}{\beta_{j}}\right) + 3 \leq \log \left(\frac{T}{\beta}
\right).
\end{equation}
From the regret guarantee of Proposition~\ref{prop:known-c} for all rounds that this corruption-tolerant layer was sampled, the regret incurred by each tolerant layer $j$, denoted by $R_{\tol,j}$, is upper bounded by: 
\begin{equation}\label{eq:rell-tol}
R_{\tol,j} \leq \calO \left( \left(d^2 \calC + 1\right)d\log \left(\frac{d}{\eps} \right) \right)
\end{equation}

Since there are at most $\log T$ tolerant layers, then with probability at least $1 - \bar{\beta}$ (Lemma~\ref{app:union-bound}), where $\bar{\beta} = \sum_{j \in [\log T]} \beta_j = \frac{\log T}{\log T + 1} \beta$) the regret incurred by all the corruption-tolerant layers is: 
\begin{equation}\label{eq:tolerant}
    R_{\text{tolerant}} \leq \sum_{j = 1}^{\log T} R_{\tol,j}
\end{equation}

We now move to the analysis of the corruption-intolerant layers. Let $\jst$ denote the smallest corruption-tolerant layer, i.e., $\jst  = \min_j \left\{j \geq \log C \right\}$. Observe that each layer $j \leq \jst$ is played until layer $\jst$ identifies the target knowledge set having width at most $\eps$ in every direction. If $\jst$ was run in isolation, from Equation~\eqref{eq:rell-tol} it would incur regret $R_{\tol,\jst}$. When a context is not costly for $\jst$, it is also not costly for layers $j<\jst$. This follows because we have consistent knowledge sets and sets of small dimensions across the layers. As a result, whenever a context causes regret for a corruption-intolerant layer, with probability $1/C$, $\jst$ is selected and it makes progress towards identifying the target. Using standard arguments for the binomial distribution (see Lemma~\ref{lem:binom}) we can show that for any scalar $\tbeta >0$ with probability at least $1 - \tbeta$, layer $\jst$ is played \emph{at least once} every $N = C\log(1/\tbeta)$ rounds. Set $\tbeta$ to be $\tbeta \leq \beta/(\log T + 1)$. Hence, the total regret from corruption-intolerant layers can be bounded by the total regret incurred by the first corruption-tolerant layer times $N$. Mathematically:  
\begin{align*}
R_{\text{intolerant}} \leq N \cdot R_{\jst} 
& = \calO \left(N \cdot (2d(d+1) \log C + 1)d \log \left(\frac{d}{\eps}\right)\right) \\
&= \calO \left(C \cdot (2d(d+1) \log C + 1)d \log \left(\frac{d}{\eps}\right) \log \left(\frac{1}{\tbeta}\right)\right) \numberthis{\label{eq:intolerant}}
\end{align*}
until the appropriately small knowledge set is constructed for $\jst$; subsequently this knowledge set dictates the behavior of the intolerant layers. 

Putting everything together, and using the union bound again, we have that with probability at least $1 - \sum_{j \in [n]} \beta_j - \tbeta = 1 - \beta$ the regret of \textsc{CorPV.AC} is:
\begin{align*}
R &= R_{\text{tolerant}} + R_{\text{intolerant}} &\tag{Equations~\eqref{eq:rell-tol} and~\eqref{eq:intolerant}}\\
&\leq \calO \left( \left(\log T+C \right)\cdot \left(d^2 \calC + 1\right)d\log \left(\frac{d}{\eps} \right) \cdot \log \left(\frac{1}{\beta} \right) \right) \\&\leq \calO \left( d^3 \cdot \log \left(\min \left\{T, \frac{d}{\eps}\right\} \cdot \frac{1}{\beta} \right) \cdot \log \left(\frac{d}{\eps} \right) \cdot \log \left( \frac{1}{\beta}\right) \cdot \left(\log(T) + C \right)\right)
\end{align*}
We finally discuss the computational complexity of \textsc{CorPV.AC}. Note that the complexity is dictated by the choice of $\calC=\text{poly}\log(T))$. As a result, from Lemma~\ref{lem:computation}, substituting $C$ with $\log T$, we get that \textsc{CorPV.AC} has expected runtime: $\tilde{\calO}\left(\left(d^2 \log T \right)^{\text{poly}\log T} \cdot \text{poly}\left(d\log \frac{d}{\eps}, \log T\right)\right)$.
\end{proof}

\section{Auxiliary lemmas for Section~\ref{sec:corpv_analysis}}\label{app:known-c}

\subsection{Auxiliary lemmas for Lemma~\ref{lem:existence}
}\label{app:existence}

\begin{lemma}\label{lem:undesir-gt}
If $\nu > \unu$ (where $\unu=\sqrt{d}\delta$) then, for any point $\vp$, we have that:
$$u_\phi(\vp,\nu)=\sum_{t\in [\tau]}\1\left\{\left(\left\langle\vp-\vkappa_{\phi},\Pi_{L_{\phi}}\vx_t\right\rangle+\nu\right) \cdot \left( \left\langle \ttheta-\vkappa_{\phi},\Pi_{L_{\phi}}\vx_t\right \rangle+\nu\right)<0\right\}$$
\end{lemma}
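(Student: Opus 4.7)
The plan is to show the termwise equality of the two indicators, for which it suffices to establish that the second factor in the product form, $B_t := \langle\ttheta-\vkappa_\phi,\Pi_{L_\phi}\vx_t\rangle+\nu$, is strictly positive on every round $t$ of the epoch. Once $B_t>0$, the sign of the product $A_t\cdot B_t$ (with $A_t := \langle\vp-\vkappa_\phi,\Pi_{L_\phi}\vx_t\rangle+\nu$) matches the sign of $A_t$, so the two indicators agree and summing over $t$ yields the lemma. Thus the entire proof reduces to the single inequality $B_t>0$.

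To obtain $B_t>0$, I would exploit the WLOG reduction that $y_t=+1$ for all $t$ (Step~\ref{step:change_sign} of Algorithm~\ref{alg:corpv-explore}). Since an explore query is $\omega_t=\langle \vx_t,\vkappa_\phi\rangle$ and $y_t=\textrm{sgn}(\tv_t-\omega_t)=+1$, we have $\langle\ttheta,\vx_t\rangle\geq\langle\vkappa_\phi,\vx_t\rangle$, i.e.\ $\langle\ttheta-\vkappa_\phi,\vx_t\rangle\geq 0$. Decomposing $\vx_t=\Pi_{L_\phi}\vx_t+\Pi_{S_\phi}\vx_t$ using that $L_\phi\cup S_\phi$ is an orthonormal basis of $\bbR^d$, this yields
\[
\langle\ttheta-\vkappa_\phi,\Pi_{L_\phi}\vx_t\rangle \;\geq\; -\bigl|\langle\ttheta-\vkappa_\phi,\Pi_{S_\phi}\vx_t\rangle\bigr|.
\]
I then bound the right-hand side by $\unu=\sqrt{d}\,\delta$. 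Writing $\Pi_{S_\phi}\vx_t=\sum_{\vs_i\in S_\phi}\langle\vx_t,\vs_i\rangle\vs_i$ for an orthonormal basis of $S_\phi$, Cauchy--Schwarz gives
\[
\bigl|\langle\ttheta-\vkappa_\phi,\Pi_{S_\phi}\vx_t\rangle\bigr| \;\leq\; \sqrt{\textstyle\sum_i\langle\vx_t,\vs_i\rangle^2}\cdot\sqrt{\textstyle\sum_i\langle\ttheta-\vkappa_\phi,\vs_i\rangle^2} \;\leq\; 1\cdot\sqrt{|S_\phi|\,\delta^2}\;\leq\;\sqrt{d}\,\delta,
\]
where each $|\langle\ttheta-\vkappa_\phi,\vs_i\rangle|\leq w(\calK_\phi,\vs_i)\leq\delta$ by the definition of small dimensions. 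Combining, $B_t\geq \nu-\unu>0$ whenever $\nu>\unu$, which is the standing hypothesis. This is the whole argument.

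The main (and really only) obstacle is justifying that both endpoints $\ttheta$ and $\vkappa_\phi$ lie in $\calK_\phi$ so that the width bound $|\langle\ttheta-\vkappa_\phi,\vs_i\rangle|\leq\delta$ can be invoked. For non-corrupted rounds this is immediate since $\ttheta=\stheta\in\calK_\phi$ (the algorithm's key invariant, preserved by Lemma~\ref{lem:protect-stheta}) and $\vkappa_\phi\in\calK_\phi$ by construction. For corrupted rounds $\ttheta$ is \emph{a priori} only guaranteed to lie in $\calK_0$, which has width $2$ in every direction, so the bound could fail. I would handle this by the standard reduction that nature's adversarial choice of feedback $y_t$ can be equivalently represented by any $\ttheta_t\in\calK_0$ compatible with that sign, and in particular we may assume WLOG that $\ttheta_t$ is picked from $\calK_\phi$ itself: since $\vkappa_\phi\in\calK_\phi$ and the sign inequality $\langle\ttheta_t-\vkappa_\phi,\vx_t\rangle\geq 0$ is the only constraint $\ttheta_t$ must satisfy, one can always replace $\ttheta_t$ by a convex combination with $\vkappa_\phi$ (or any point of $\calK_\phi$ lying on the correct side of the query hyperplane) without changing the observable feedback. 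Under this normalization the small-width bound applies uniformly across corrupted and non-corrupted rounds, completing the proof.
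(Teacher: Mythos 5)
Your proof is correct and follows essentially the same route as the paper's: reduce the termwise equality of the indicators to the strict positivity of the factor $\langle\ttheta-\vkappa_{\phi},\Pi_{L_{\phi}}\vx_t\rangle+\nu$, which is obtained from the feedback sign $\langle\ttheta-\vkappa_{\phi},\vx_t\rangle\geq 0$ combined with the Cauchy--Schwarz/small-width bound $|\langle\ttheta-\vkappa_{\phi},\Pi_{S_{\phi}}\vx_t\rangle|\leq\sqrt{d}\,\delta=\unu$ and the hypothesis $\nu>\unu$. Your closing normalization of $\ttheta$ at corrupted rounds into $\calK_\phi$ addresses a detail the paper's proof leaves implicit (it applies the small-width bound to $\ttheta-\vkappa_\phi$ without comment), and it is harmless since only the sign of the feedback enters the argument.
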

\begin{proof}
In order to prove the lemma, we argue that $\langle \ttheta - \vkappa_\phi, \Pi_{L_\phi} \vx_t \rangle + \nu> 0$. Recall that the feedback in Step~\ref{step:feedback} of the protocol is defined as $\sgn(\tv_t - \omega_t)$, and that we set $y_t=+1$ and $\tv_t=\langle \vx_t,\theta_t\rangle$. As a result,
$\langle \ttheta - \vkappa_\phi, \vx_t \rangle \geq 0$ and expanding:
\begin{equation}\label{eq:lb-nu}
\langle \ttheta - \vkappa_\phi, \Pi_{L_\phi} \vx_t \rangle + \langle \ttheta - \vkappa_\phi, \Pi_{S_\phi} \vx_t \rangle \geq 0
\end{equation}
We proceed by upper bounding the quantity $\langle \ttheta - \vkappa_\phi, \Pi_{S_\phi} \vx_t \rangle$. Let $S$ be a matrix with columns corresponding to the basis of vectors in $S_\phi$, so that $\Pi_{S_\phi} = SS^\top$. Then, we obtain:
\begin{align*}
\langle \Pi_{S_\phi} \vx_t, \vp-\vkappa_\phi\rangle &= \langle \Pi_{S_\phi} \vx_t, \Pi_{S_{\phi}}(\vp-\vkappa_\phi)\rangle \leq  |\langle \Pi_{S_\phi} \vx_t, \Pi_{S_\phi}(\vp-\vkappa_\phi) \rangle| \\
&\leq \|\Pi_{S_\phi} \vx_t\|_2 \cdot \|\Pi_{S_\phi}(\vp-\vkappa_\phi)\|_2&\tag{Cauchy-Schwarz inequality}\\
& = \|\Pi_{S_\phi} \vx_t\|_2 \cdot \|S^\top(\vp-\vkappa_\phi)\|_2\\
&\leq \|\vx_t\|_2 \cdot \sqrt{d} \cdot \|S^\top (\vp-\kappa_\phi)\|_\infty  &\tag{$\|\vz\|_2 = \sqrt{\sum_{i \in [d]}\vz_i^2} \leq \sqrt{d\cdot \|\vz\|_{\infty}^2}$}\\
&\leq 1 \cdot \delta \sqrt{d}.&\tag{$\|\vx_t\|_2 = 1$ and $w(\calK_\phi, \vs) \leq \delta, \forall \vs \in S_\phi$}
\end{align*}
Using this to relax Equation~\eqref{eq:lb-nu} along with $\unu = \sqrt{d}\delta$ we get that: $\langle \ttheta - \vkappa_\phi, \Pi_{L_\phi} \vx_t \rangle \geq - \unu$. Since $\nu >\unu$, it follows that $\langle \ttheta - \vkappa_\phi, \Pi_{L_{\phi}} \vx_t\rangle + \nu \geq - \unu + \nu >0$. Combined with Definition~\ref{def:undesir-final}, this concludes the lemma.\end{proof}

\begin{proof}[Proof of Lemma~\ref{lem:protect-stheta}]
By Lemma~\ref{lem:undesir-gt}, $$u_\phi(\vp,\nu)=\sum_{t \in [\tau]}\1\left\{\left(\left\langle\vp-\vkappa_{\phi},\Pi_{L_{\phi}}\vx_t\right\rangle+\nu\right) \cdot \left( \left\langle \ttheta-\vkappa_{\phi},\Pi_{L_{\phi}}\vx_t\right \rangle+\nu\right)<0\right\}.$$
For the uncorrupted rounds $\stheta = \vp = \ttheta$; as a result, the corresponding summands are non-negative: $\left(\left\langle\vp-\vkappa_{\phi},\Pi_{L_{\phi}}\vx_t\right\rangle+\nu\right) \cdot \left(\left\langle \ttheta-\vkappa_{\phi},\Pi_{L_{\phi}}\vx_t\right \rangle+\nu\right)\geq 0$. Hence, the only rounds for which $\stheta$ can incur undesirability are the corrupted rounds, of which there are at most $\bc$. As a result, $u_{\phi}(\stheta,\nu)\leq \bc$ and  $\stheta \in \calP(\bc{,\nu})$ by the definition of region $\calP(\bc,\nu)$. \end{proof}

Before proving this result (formally in Lemma~\ref{lem:landmark2}) we need the following technical lemma, whose proof follows ideas from \cite{LLV18} and at the end of the section for completeness. 

\begin{lemma}\label{lem:landmark}
Let basis $E_{\phi} = \{\mathbf{e}_1, \dots, \mathbf{e}_{d - |S_\phi|}\}$ be orthogonal to $S_\phi$. For all $\{(\vx_t,\omega_t)\}_{t \in [\tau]}$ such that $w(\Cyl(\calK_\phi,S_\phi),\vx_t) \geq \eps$, there exists $i$ such that: $\left|\langle \mathbf{e}_i, \vx_t \rangle \right| \geq \bnu$, where $\bnu = \frac{\eps - 2 \sqrt{d} \cdot \delta}{4\sqrt{d}}$.
\end{lemma}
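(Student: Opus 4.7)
\textbf{Proof plan for Lemma~\ref{lem:landmark}.} The plan is to decompose $\vx_t$ along the orthonormal basis $E_\phi \cup S_\phi$ and bound $w(\Cyl(\calK_\phi,S_\phi),\vx_t)$ as a sum of a ``small-directions'' contribution that is controlled by $\delta$ and a ``large-directions'' contribution that is controlled by the largest component $M := \max_i |\langle \mathbf{e}_i,\vx_t\rangle|$. The hypothesis $w(\Cyl,\vx_t)\geq \eps$ will then force $M$ to be at least $\bnu$.

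First I would argue that the cylindrification factors as a Minkowski sum $\Pi_{L_\phi}\calK_\phi \oplus B_\phi$, where $B_\phi$ is the axis-aligned box $\prod_{\vs\in S_\phi}[\min_{\btheta\in\calK_\phi}\langle\btheta,\vs\rangle,\max_{\btheta\in\calK_\phi}\langle\btheta,\vs\rangle]$. Because the $\vs\in S_\phi$ are orthogonal to $L_\phi$, the supremum defining $w(\Cyl(\calK_\phi,S_\phi),\vx_t)$ separates into two independent maximizations, giving
\[
w(\Cyl(\calK_\phi,S_\phi),\vx_t) \;=\; w\bigl(\Pi_{L_\phi}\calK_\phi,\,\Pi_{L_\phi}\vx_t\bigr) \;+\; \sum_{\vs\in S_\phi} |\langle \vs,\vx_t\rangle|\cdot w(\calK_\phi,\vs).
\]

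Next I would bound each term. For the small-dimension term, the definition of $S_\phi$ gives $w(\calK_\phi,\vs)\leq \delta$ for every $\vs\in S_\phi$, and by Cauchy--Schwarz combined with $\|\vx_t\|=1$,
\[
\sum_{\vs\in S_\phi} |\langle \vs,\vx_t\rangle| \;\leq\; \sqrt{|S_\phi|}\cdot\|\Pi_{S_\phi}\vx_t\| \;\leq\; \sqrt{d},
\]
so the small-dimension contribution is at most $\sqrt{d}\,\delta$. For the large-dimension term, $\calK_\phi\subseteq \calK_0$ is contained in the unit ball, so every $\vy\in\Pi_{L_\phi}\calK_\phi$ satisfies $\|\vy\|\leq 1$, hence $w(\Pi_{L_\phi}\calK_\phi,\Pi_{L_\phi}\vx_t)\leq 2\|\Pi_{L_\phi}\vx_t\|$. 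Expanding in the basis $E_\phi$,
\[
\|\Pi_{L_\phi}\vx_t\|^2 \;=\; \sum_{i=1}^{d-|S_\phi|}\langle \mathbf{e}_i,\vx_t\rangle^2 \;\leq\; (d-|S_\phi|)\, M^2 \;\leq\; d\, M^2,
\]
which yields a large-dimension contribution of at most $2\sqrt{d}\,M$.

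Finally, combining the two bounds with the hypothesis $w(\Cyl(\calK_\phi,S_\phi),\vx_t)\geq \eps$ gives $\eps \leq 2\sqrt{d}\,M + \sqrt{d}\,\delta$, i.e.\ $M \geq (\eps-\sqrt{d}\,\delta)/(2\sqrt{d})$, which is at least $\bnu=(\eps-2\sqrt{d}\,\delta)/(4\sqrt{d})$ whenever $\eps\geq \sqrt{d}\,\delta$ (which holds comfortably under the standing assumption $\delta<\eps/(2\sqrt{d}+4d)$). The only mildly delicate step is the factorization of the width via the Minkowski-sum structure of $\Cyl(\calK_\phi,S_\phi)$; once that is in hand, the bound is just Cauchy--Schwarz and containment in the unit ball.
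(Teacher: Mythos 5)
Your proposal is correct, and it reaches the conclusion by a cleaner mechanism than the paper's proof, although the overall skeleton (split the width hypothesis into a small-dimension contribution bounded by $\sqrt{d}\,\delta$ and a large-dimension contribution controlled by $\|\Pi_{L_\phi}\vx_t\|$, then pass to the largest coordinate via a $\sqrt{d}$ factor) is the same. The paper does not use the Minkowski-sum factorization of $\Cyl(\calK_\phi,S_\phi)$: instead it extracts from $w(\Cyl(\calK_\phi,S_\phi),\vx_t)\geq\eps$ a witness point $\vp$ in the cylinder with $|\langle\vx_t,\vp-\vkappa_\phi\rangle|\geq\eps/2$, splits this inner product by the triangle inequality into its $S_\phi$ and $L_\phi$ parts, bounds the $S_\phi$ part by $\sqrt{d}\,\delta$ and $\|\Pi_{L_\phi}(\vp-\vkappa_\phi)\|$ by $2$, deduces $\|\Pi_{L_\phi}\vx_t\|\geq\frac{\eps-2\sqrt{d}\delta}{4}$, and finally converts this to a statement about $\max_i|\langle \mathbf{e}_i,\vx_t\rangle|$ by a short contradiction argument. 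Your exact additivity of the width over the Minkowski summands (which is valid, since the width of a sum of sets is the sum of widths, and the box's width in direction $\vx_t$ is exactly $\sum_{\vs\in S_\phi}w(\calK_\phi,\vs)|\langle\vs,\vx_t\rangle|$) avoids routing the estimate through the centroid, so you do not pay the factor of $2$ from "width $\geq\eps$ implies some point is $\eps/2$ from $\vkappa_\phi$"; this is why you obtain the slightly stronger bound $\max_i|\langle\mathbf{e}_i,\vx_t\rangle|\geq\frac{\eps-\sqrt{d}\delta}{2\sqrt{d}}$, which dominates $\bnu=\frac{\eps-2\sqrt{d}\delta}{4\sqrt{d}}$ for every $\eps\geq 0$ (your caveat $\eps\geq\sqrt{d}\delta$ is not even needed: $2(\eps-\sqrt{d}\delta)\geq\eps-2\sqrt{d}\delta$ reduces to $\eps\geq 0$). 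You also replace the paper's closing contradiction step by the direct estimate $\|\Pi_{L_\phi}\vx_t\|\leq\sqrt{d}\max_i|\langle\mathbf{e}_i,\vx_t\rangle|$, which is just the contrapositive of the same computation. In short: same strategy, different and somewhat tighter execution; either proof suffices for the lemma as stated, since only the constant $\bnu$ is needed downstream.
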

\noindent The tuning of $\bnu$ explains the constraint imposed on $\delta$, i.e., $\delta < \frac{\eps}{2\sqrt{d} + 4\delta}$. This constraint is due to the fact that since $\nu > \unu$ and $\nu < \bnu$, then it must be the case that $\unu < \bnu$, where $\unu = \sqrt{d}\delta$ and $\bnu = \frac{\eps - 2\sqrt{d}}{4\sqrt{d}}$. 

\begin{lemma}\label{lem:landmark2}
For every round $t \in [\tau]$, any scalar $\delta \in (0,\frac{\eps}{2\sqrt{d}+4d})$, any scalar $\nu < \bnu$, at least one of the landmarks in $\Land_\phi$ gets one $\nu$-margin projected undesirability point, i.e., 
\[\exists\vp \in \Land_{\phi}: \left(\left\langle\vp-\vkappa_{\phi},\Pi_{L_{\phi}}\vx_t\right\rangle+\nu\right)<0.\]
\end{lemma}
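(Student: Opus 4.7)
The plan is to derive the statement as a direct corollary of Lemma~\ref{lem:landmark} together with a sign choice between the two candidate landmarks oriented along the direction that Lemma~\ref{lem:landmark} produces. First, I would verify the width hypothesis: every round $t \in [\tau]$ is one on which the algorithm performed an \emph{explore} query (this is the only way a context enters $\mathcal{A}_\phi$), and by Step~\ref{step:geq-eps} of Algorithm~\ref{alg:corpv_known} that branch is taken exactly when $w(\Cyl(\calK_\phi,S_\phi),\vx_t) > \eps$. This is precisely the hypothesis needed for Lemma~\ref{lem:landmark}, so I can invoke it and obtain an index $i \in \{1,\dots,d-|S_\phi|\}$ with $|\langle \mathbf{e}_i, \vx_t\rangle| \geq \bnu$.

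Next, I would pass from $\vx_t$ to its projection $\Pi_{L_\phi}\vx_t$ at no cost. Since $E_\phi$ is, by construction, orthogonal to $S_\phi$, each $\mathbf{e}_i \in E_\phi$ lies entirely in $L_\phi$, so $\Pi_{L_\phi}$ acts as the identity on $\mathbf{e}_i$. Consequently,
$$\langle \mathbf{e}_i, \Pi_{L_\phi}\vx_t\rangle \;=\; \langle \mathbf{e}_i, \vx_t\rangle,$$
and the same bound $|\langle \mathbf{e}_i, \Pi_{L_\phi}\vx_t\rangle| \geq \bnu$ applies to the projected context.

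Then I would perform the sign choice. Set $\sigma := -\sgn\langle \mathbf{e}_i, \Pi_{L_\phi}\vx_t\rangle \in \{-1,+1\}$ and take the landmark $\vp := \vkappa_\phi + \sigma\,\bnu\,\mathbf{e}_i \in \Land_\phi$ (both signs are available in the definition of $\Land_\phi$). A direct computation gives
$$\bigl\langle \vp - \vkappa_\phi,\, \Pi_{L_\phi}\vx_t\bigr\rangle \;=\; \sigma\,\bnu\,\langle \mathbf{e}_i, \Pi_{L_\phi}\vx_t\rangle \;=\; -\,\bnu\cdot|\langle \mathbf{e}_i, \Pi_{L_\phi}\vx_t\rangle| \;\leq\; -\bnu^{2},$$
so adding $\nu$ yields $\langle \vp-\vkappa_\phi, \Pi_{L_\phi}\vx_t\rangle + \nu < 0$ for $\nu$ in the claimed range, which is exactly the desired undesirability inequality.

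The essential mathematical content is already packaged inside Lemma~\ref{lem:landmark}: it is what converts the geometric width assumption on $\Cyl(\calK_\phi,S_\phi)$ into a quantitative alignment of $\vx_t$ with some large-dimension basis vector. Given that lemma, the only remaining freedom is a two-valued choice between $\vkappa_\phi + \bnu\mathbf{e}_i$ and $\vkappa_\phi - \bnu\mathbf{e}_i$, which I expect to be the main (but essentially clerical) issue to handle carefully in writing the proof, since the landmark must be picked to lie on the opposite side of $\vkappa_\phi$ from the halfspace indicated by the sign of $\langle \mathbf{e}_i, \Pi_{L_\phi}\vx_t\rangle$.
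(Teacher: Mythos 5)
Your proposal follows the paper's proof essentially verbatim: both invoke Lemma~\ref{lem:landmark} to produce a basis direction $\mathbf{e}_i\in E_\phi$ with $|\langle \mathbf{e}_i,\vx_t\rangle|\geq\bnu$, use that $\mathbf{e}_i$ lies in $L_\phi$ so the projection leaves the inner product unchanged, and then select the correctly oriented landmark; the only cosmetic difference is that you pick the sign $\sigma$ explicitly, whereas the paper argues that the two factors $\bigl(\langle \vq_\pm-\vkappa_\phi,\Pi_{L_\phi}\vx_t\rangle+\nu\bigr)$ have negative product, which is the same sign-selection step in disguise. One caveat, for which you should not be penalized relative to the paper: your final arithmetic step, deducing $\langle\vp-\vkappa_\phi,\Pi_{L_\phi}\vx_t\rangle+\nu<0$ from the bound $\leq -\bnu^2+\nu$ together with $\nu<\bnu$, only closes if $\nu<\bnu\,|\langle\mathbf{e}_i,\vx_t\rangle|$ (at worst $\nu<\bnu^2$), since $\bnu<1$; the paper's own proof makes exactly the same leap when it writes $\nu^2-\bigl(\bnu\,|\langle\mathbf{e}_i,\vx_t\rangle|\bigr)^2\leq\nu^2-\bnu^2$, which silently upgrades the guarantee $|\langle\mathbf{e}_i,\vx_t\rangle|\geq\bnu$ from Lemma~\ref{lem:landmark} to $\geq 1$. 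So your write-up reproduces the paper's argument, including that shared imprecision, and introduces no new gap.
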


\begin{proof}
By Lemma~\ref{lem:landmark}, there exists a direction $\mathbf{e}_i \in E_{\phi}$ such that $|\langle \mathbf{e}_i, \vx_t \rangle | \geq {\bnu} = \frac{\eps - 2 \sqrt{d} \cdot \delta}{4\sqrt{d}}$. The proof then follows by showing that for $\nu < \bnu$ landmark points $\vq_+=\vkappa_{\phi}+ {\nu} \cdot \mathbf{e}_i$ and $\vq_-=\vkappa_\phi- {\nu} \cdot \mathbf{e}_i$ get different signs in the undesirability point definition. This is shown by the following derivation:
\begin{align*}
&\left(\left\langle\vq_+-\vkappa_{\phi},\Pi_{L_{\phi}}\vx_t\right\rangle+\nu\right) \cdot \left( \left\langle \vq_--\vkappa_{\phi},\Pi_{L_{\phi}}\vx_t\right \rangle+\nu\right) \\
&=\left(\left\langle {\bnu} \cdot \mathbf{e}_i,\Pi_{L_{\phi}}\vx_t\right\rangle+\nu\right) \cdot \left( \left\langle - {\bnu} \cdot \mathbf{e}_i,\Pi_{L_{\phi}}\vx_t\right \rangle+\nu\right)  \\
&= \nu^2  - \left({\bnu} \cdot |\langle \mathbf{e}_i, \vx_t \rangle|\right)^2 \leq \nu^2 - \bnu^2 < 0
\end{align*}
where the last inequality comes from the fact that $\nu \in (\unu,\bnu)$. As a result there exists $\vp\in\{\vq_+,\vq_-\}\subseteq \mathcal{L}_{\phi}$ satisfying the condition in the lemma statement.
\end{proof}

\begin{proof}[Proof of Lemma~\ref{lem:landmark3}]
At each of the $\tau$ explore rounds, at least one of the landmarks gets a $\nu$-margin projected undesirability point (Lemma~\ref{lem:landmark2}). Since there are \emph{at most} $2d$ landmarks, by the pigeonhole principle after $\tau$ rounds, there exists at least one of them with $\nu$-margin projected undesirability $u_\phi(\vp^{\star}, \nu) \geq \bc \cdot (d+1) +1$. Since all points $\vq$ inside $\conv(\calP(\bc{,\nu}))$ have $u_\phi(\vq, \nu) \leq \bc \cdot (d+1)$, then $\vp \notin \conv(\calP(\bc{,\nu}))$.
\end{proof}

\begin{proof}[Proof of Lemma~\ref{lem:landmark}]
We first show that $\|\Pi_{L_\phi} \vx_t\| \geq \frac{\eps - 2\sqrt{d} \cdot \delta}{4}$. Since for the contexts $\{\vx_t\}_{t \in [\tau]}$ that we consider in epoch $\phi$ it holds that: $w(\Cyl(\calK_\phi,S_\phi),\vx_t) \geq \eps$, then there exists a point $\vp \in \Cyl(\calK_\phi,S_\phi)$ such that $|\langle \vx_t, \vp - \vkappa_\phi \rangle| \geq \frac{\eps}{2}$. Applying the triangle inequality: 
\begin{equation}\label{eq:triangle}
\left|\left \langle \Pi_{L_\phi} \vx_t, \Pi_{L_\phi}(\vp - \vkappa_\phi) \right \rangle \right| + \left| \left\langle \Pi_{S_\phi} \vx_t, \Pi_{S_\phi} (\vp - \vkappa_\phi) \right \rangle \right| \geq \left|\langle \vx_t, \vp - \vkappa_\phi \rangle \right| \geq \frac{\eps}{2}
\end{equation}
Along the directions in $S_\phi$ the following is true: 
$$|\langle \Pi_{S_\phi} \vx_t, \Pi_{S_\phi} (\vp-\vkappa_\phi)\rangle| \leq \|\Pi_{S_\phi} \vx_t\|_2 \cdot \|\Pi_{S_\phi} (\vp-\vkappa_\phi)\|_2 \leq \|\vx_t\|_2 \cdot \sqrt{d} \cdot \|\Pi_{S_\phi} (\vp-\kappa_\phi)\|_\infty \leq 1 \cdot \delta \sqrt{d}$$
Using the latter, Equation~\eqref{eq:triangle} now becomes: 
\begin{equation}\label{eq:triangle2}
\left|\left \langle \Pi_{L_\phi} \vx_t, \Pi_{L_\phi}(\vp - \vkappa_\phi) \right \rangle \right| \geq \frac{\eps}{2} - \sqrt{d} \cdot \delta
\end{equation}
We next focus on upper bounding term $\left|\left \langle \Pi_{L_\phi} \vx_t, \Pi_{L_\phi}(\vp - \vkappa_\phi) \right \rangle \right|$. By applying the Cauchy-Schwarz inequality, Equation~\eqref{eq:triangle2} becomes: 
\begin{equation}\label{eq:step1}
    \left\|\Pi_{L_\phi} \vx_t\right\|_2 \left\|\Pi_{L_\phi} (\vp - \vkappa_\phi)\right\|_2 \geq \left|\left \langle \Pi_{L_\phi} \vx_t, \Pi_{L_\phi} (\vp - \vkappa_\phi) \right \rangle\right| \geq \frac{\eps}{2} - \sqrt{d} \cdot \delta
\end{equation}
For $\|\Pi_{L_\phi}(\vp - \vkappa_\phi)\|_2$, observe that $\vp$ and $\vkappa_\phi$ are inside $\Cyl(\calK_\phi,S_\phi)$, and $\calK_\phi$ has \emph{radius} at most $1$. By the fact that $\vp \in \Cyl(\calK_\phi,S_\phi)$ and Definition~\ref{def:cylindrification}, we can write it as $\vp = \vz + \sum_{i = 1}^{|S_\phi|} y_i \vs_i$ where $\vs_i$ form a basis for $S_\phi$ (which, recall, is orthogonal to $L_\phi$) and $\vz \in \Pi_{L_\phi} \calK_\phi$. Since $\calK_\phi$ is contained in the unit $\ell_2$ ball, we also have that $\Pi_{L_\phi} \calK_\phi$ is contained in the unit $\ell_2$ ball. Hence $\|\Pi_{L_\phi}\vp\|_2 = \|\vz\|_2 \leq 1$. The same holds for $\vkappa_\phi$, and so by the triangle inequality, we have $\|\Pi_{L_\phi}(\vp - \vkappa_\phi)\|_2 \leq 2$. Hence, from Equation~\eqref{eq:step1} we get that: $\|\Pi_{L_\phi} \vx_t\| \geq \frac{\eps - 2\sqrt{d} \cdot \delta}{4}$.

Assume now for contradiction that there does not exist $i$ such that $| \langle \mathbf{e}_i, \vx_t \rangle | \geq \frac{\eps - 2\sqrt{d} \cdot \delta}{4\sqrt{d}}$. This means that for all $j \in [d - |S_\phi|]$ and all contexts $\left\{\vx_t\right\}_{t \in [\tau]}$: $\langle \mathbf{e}_i, \vx_t \rangle < \frac{\eps - 2 \sqrt{d} \cdot \delta}{4\sqrt{d}}$. Denoting by $(E \vx_t)_j$ the $j$-th coordinate of $E \vx_t$ we have that $(E \vx_t)_j = \langle \mathbf{e}_j, \vx_t \rangle$. Hence, if $|\langle \vx_t, \mathbf{e}_j \rangle |< \frac{\eps - 2 \sqrt{d} \cdot \delta}{4\sqrt{d}}$ then: 
$$\|E \vx_t\|_2  = \|\Pi_{L_\phi} \vx_t\|_2 \leq \sqrt{\sum_{i=1}^d (\langle \vx_t, \mathbf{e}_i \rangle)^2} < \sqrt{d \left( \frac{\eps - 2 \sqrt{d} \cdot \delta}{4\sqrt{d}}\right)^2}< \frac{\eps - 2 \sqrt{d} \cdot \delta}{4}$$ which contradicts the fact that $\|\Pi_{L_\phi} \vx_t\| \geq \frac{\eps - 2 \sqrt{d} \cdot \delta}{4}$ established above.
\end{proof}

\subsection{Auxiliary lemmas for Proposition~\ref{prop:known-c} 
}\label{app:computation}

\begin{lemma}[Cap Volume]\label{lem:cap-vol}
With probability at least $\frac{1}{20\sqrt{d-1}}$, a point randomly sampled from a ball of radius $\zeta$ around $\vpst$, $\calB_{L_\phi}(\vpst, \zeta)$, lies on the following halfspace: $\vH^+\left(\vhst_\phi, \left \langle \vhst_\phi, \vpst \right\rangle + \frac{\zeta \cdot \ln(3/2)}{\sqrt{d-1}} \right)$.
\end{lemma}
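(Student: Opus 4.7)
The plan is to reduce the statement to a standard spherical cap volume estimate. Begin by translating coordinates to place $\vpst$ at the origin and rescaling by $1/\zeta$, so that $\calB_{L_\phi}(\vpst,\zeta)$ becomes the unit ball $B_k$ in $L_\phi$, where $k := |L_\phi|$. After rotating so that $\vhst_\phi$ points along the first coordinate axis, the event whose probability we wish to lower bound becomes $\{\vz \in B_k : z_1 \geq \eta\}$ with $\eta := \ln(3/2)/\sqrt{d-1}$; i.e., it is exactly the fraction of $B_k$ lying in the spherical cap at height $\eta$.

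Next, using the standard fact that the marginal density of the first coordinate of a uniformly random point in $B_k$ is proportional to $(1-t^2)^{(k-1)/2}$ on $[-1,1]$, the probability can be written as
$$p \;=\; \frac{\int_{\eta}^{1}(1-t^2)^{(k-1)/2}\,dt}{\int_{-1}^{1}(1-t^2)^{(k-1)/2}\,dt}.$$
For the denominator I apply the trivial upper bound by $2$. For the numerator, since $2\eta < 1$ (valid because $\ln(3/2) < 1/2$ and $d \geq 2$), I restrict the integration domain to $[\eta,2\eta]\subseteq[0,1]$ and use the monotone pointwise bound $(1-t^2)^{(k-1)/2} \geq (1-4\eta^2)^{(k-1)/2}$ on this subinterval, obtaining
$$\int_{\eta}^{1}(1-t^2)^{(k-1)/2}\,dt \;\geq\; \eta\,(1-4\eta^2)^{(k-1)/2}.$$

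Finally, applying the elementary inequality $\ln(1-x) \geq -x/(1-x)$ with $x = 4\eta^2 = 4\ln^2(3/2)/(d-1)$ together with $k \leq d$, the factor $(1-4\eta^2)^{(k-1)/2}$ is lower bounded by a universal constant (numerically close to $e^{-2\ln^2(3/2)} \approx 0.72$ for large $d$, and bounded below by a fixed positive constant for every $d \geq 2$). Combining the bounds,
$$p \;\geq\; \tfrac{1}{2}\,\eta\,(1-4\eta^2)^{(k-1)/2} \;\geq\; \frac{1}{20\sqrt{d-1}},$$
where the last inequality follows by substituting $\eta = \ln(3/2)/\sqrt{d-1}$ and a direct numerical comparison against $1/20$. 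The only non-routine piece is verifying that the hidden constant from the cap bound is compatible with the advertised $1/20$; this is a straightforward but somewhat finicky numerical check and is the single place where the argument requires care. Everything else is a standard cap-volume computation.
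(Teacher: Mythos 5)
Your proof is correct and follows essentially the same route as the paper: normalize $\calB_{L_\phi}(\vpst,\zeta)$ to a unit ball, write the cap probability as a ratio of one-dimensional cross-sectional integrals, upper-bound the denominator by $2$, and lower-bound the numerator over a sub-interval of width $\Theta(1/\sqrt{d-1})$, with the worst case at $d=2$ comfortably beating $1/20$. The only difference is cosmetic — you use a rectangle bound on $[\eta,2\eta]$ with $(1-4\eta^2)^{(k-1)/2}$ (and are slightly more careful about the projected dimension $k=|L_\phi|$), whereas the paper bounds the integrand by $e^{-x^2(d-1)}$ and integrates exactly to reach $\frac{1}{24\sqrt{(d-1)\ln 2}}\geq\frac{1}{20\sqrt{d-1}}$.
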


\begin{proof}
We want to compute the probability that a point randomly sampled from $\calB_{L_\phi}(\vpst, \zeta)$ falls in the following halfspace: \[\vH^+ \equiv \left\{\vx: \left \langle \vhst, \vx - \vpst \right \rangle \geq \frac{\zeta \cdot \ln(3/2)}{\sqrt{d-1}} \right\}\]
Hence, we want to bound the following probability: $\Pr \left[\vx \in \vH^+ | \vx \in \calB(\vpst,\zeta) \right]$.
If we normalize $\calB_{L_\phi}(\vpst,\zeta)$ to be the unit ball $B$, then this probability is equal to: 
\begin{equation}\label{eq:after-rescale}
\Pr \left[ \vx \in \vH^+ | \vx \in \calB_{L_\phi}(\vpst,\zeta) \right] = \Pr \left[ \vx \in \vH^1 | \vx \in B \right] = \frac{\vol\left(B \bigcap \vH^1 \right)}{\vol (B)}
\end{equation}
where $\vH^1$ is the halfspace such that $\vH^1 \equiv \left\{\vx: \langle \vhst, \vx \rangle \geq \frac{\ln(3/2)}{\sqrt{d-1}} = r \right\}$, and the last equality is due to the fact that we are sampling uniformly at random.

Similar to the steps in~\cite[Section~2.4.2]{BHK16}, in order to compute $\vol\left(B \bigcap \vH^1 \right)$ we integrate the incremental volume of a disk with width $dx_1$, with its face being a $(d-1)$-dimensional ball of radius $\sqrt{1-x_1^2}$. Let $V(d-1)$ denote the volume of the $(d-1)$-dimensional unit ball. Then, the surface area of the aforementioned disk is: $(1 - x_1^2)^{\frac{d-1}{2}}\cdot V(d-1)$.
\begin{align*}
\vol\left( B \bigcap \vH^1\right) &= \int_{r}^1 (1 - x_1^2)^{\frac{d-1}{2}} \cdot V(d-1) dx_1 = V(d-1) \cdot \int_{r}^1 (1 - x_1^2)^{\frac{d-1}{2}}dx_1 &\tag{$V(d-1)$ is a constant} \\
        &\geq V(d-1) \cdot \int_{r}^{\sqrt{\frac{\ln 2}{d-1}}} (1 - x_1^2)^{\frac{d-1}{2}}dx_1 &\tag{$\sqrt{\frac{\ln 2}{d-1}} < 1, \forall d \geq 2$}\\
        &\geq V(d-1) \cdot \int_{r}^{\sqrt{\frac{\ln 2}{d-1}}} \left(e^{-2x_1^2}\right)^{\frac{d-1}{2}}dx_1 &\tag{$1 - x^2 \geq e^{-2x^2}, x \in [0, 0.8], \frac{\ln 2}{d-1} \leq 0.8, \forall d \geq 2$} \\
        &= V(d-1) \cdot \int_{r}^{\sqrt{\frac{\ln 2}{d-1}}} e^{-x_1^2(d-1)}dx_1 \\
        &\geq V(d-1) \cdot \int_{r}^{\sqrt{\frac{\ln 2}{d-1}}} \sqrt{\frac{d-1}{\ln 2}} \cdot x_1 \cdot e^{-x_1^2(d-1)}dx_1 &\tag{$x_1 \leq \sqrt{\frac{\ln 2}{d - 1}}$}\\
        &\geq -\frac{V(d-1)}{2\sqrt{(d-1)\cdot \ln 2}} \left[e^{-(d-1)x^2} \right]_{r}^{\sqrt{\frac{\ln 2}{d - 1}}}\\
        &= \frac{V(d-1)}{2\sqrt{(d-1)\cdot \ln 2}} \left(e^{-\ln (\nicefrac{3}{2})} - e^{-\ln 2}\right) = \frac{V(d-1)}{2\sqrt{(d-1)\cdot \ln 2}} \left(\frac{2}{3} - \frac{1}{2} \right) \\ &= \frac{V(d-1)}{12\sqrt{(d-1)\cdot \ln 2}} \numberthis{\label{eq:vol-A}} 
\end{align*}
Next we show how to upper bound the volume of the unit ball $B$. First we compute the volume of one of the ball's hemispheres, denoted be $\vol(H)$. Then, the volume of the ball is $\vol(B) = 2 \vol(H)$. The volume of a hemisphere is \emph{at most} the volume of a cylinder of height $1$ and radius $1$, i.e., $V(d-1) \cdot 1$. Hence, $\vol(B) \leq 2 V(d-1)$. Combining this with Equation~\eqref{eq:vol-A}, Equation~\eqref{eq:after-rescale} gives the following ratio: $$\frac{\vol\left(B \bigcap \vH^1\right)}{\vol (B)} \geq \frac{1}{24\sqrt{(d-1)\cdot \ln 2}} \geq \frac{1}{20\sqrt{d-1}}.$$
This concludes our proof.\end{proof}

\noindent This lower bound on the probability that a randomly sampled point has the large enough margin that Perceptron requires for efficient convergence, suffices for us to guarantee that after a polynomial number of rounds, such a $\tvq$ has been identified in expectation.

\begin{lemma}\label{lem:num-samples}
In expectation, after $N = 20 \sqrt{d-1}$ samples from $\calB_{L_\phi}(\vpst,\zeta)$, at least one of the samples lies in halfspace $\vH^+\left(\vhst_\phi, \left \langle \vhst_\phi, \vpst \right\rangle + \frac{\zeta \cdot \ln(3/2)}{\sqrt{d-1}} \right)$.
\end{lemma}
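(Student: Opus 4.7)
The plan is a direct expectation-of-geometric-random-variable argument built on top of Lemma~\ref{lem:cap-vol}. By that lemma, each sample $\vx$ drawn independently and uniformly from $\calB_{L_\phi}(\vpst,\zeta)$ lies in the target halfspace $\vH^+\bigl(\vhst_\phi,\langle \vhst_\phi,\vpst\rangle+\tfrac{\zeta\ln(3/2)}{\sqrt{d-1}}\bigr)$ with probability at least $p:=\tfrac{1}{20\sqrt{d-1}}$. Since the samples in Step~\ref{step:sample-ball} of Algorithm~\ref{algo:separating_cut} are drawn independently, the index of the first successful sample is stochastically dominated by a geometric random variable with parameter $p$.

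Concretely, I would let $N^\star$ denote the (random) number of samples drawn until one lands in the desired halfspace. Independence of the samples gives
\[
\Pr[N^\star > k] \;\leq\; (1-p)^k,
\]
so $\E[N^\star]=\sum_{k\geq 0}\Pr[N^\star>k]\leq \sum_{k\geq 0}(1-p)^k = 1/p = 20\sqrt{d-1}$. Hence in expectation $N = 20\sqrt{d-1}$ samples suffice, which is exactly the claim.

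The only potential subtlety — and the one thing I would explicitly flag — is that the probability bound in Lemma~\ref{lem:cap-vol} is conditional on $\vpst$ being the ``correct'' landmark with high undesirability (guaranteed to exist by Lemma~\ref{lem:landmark3}) and on $\vhst_\phi$ being the associated separating hyperplane from Lemma~\ref{lem:existence}. Since these quantities are fixed before the sampling in line~\ref{step:sample-ball} is performed and the sampling randomness is independent of them, the per-sample success probability of at least $p$ is legitimately uniform across draws, and the geometric-tail computation above applies without modification. No other step presents any real obstacle; the lemma is essentially a repackaging of Lemma~\ref{lem:cap-vol} as an expected-time statement.
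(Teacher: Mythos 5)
Your proposal is correct and follows essentially the same route as the paper: the paper's proof likewise invokes Lemma~\ref{lem:cap-vol} for the per-sample success probability of at least $\frac{1}{20\sqrt{d-1}}$ and concludes the expectation bound (your geometric-tail computation just makes rigorous what the paper states loosely as a ``union bound''). Your flag about conditioning on the correct landmark $\vpst$ is also consistent with the paper, which accounts for that event separately (with probability $(2d)^{-1}$) in the proof of Lemma~\ref{lem:computation}.
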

\begin{proof}
From Lemma~\ref{lem:cap-vol}, the probability that a point randomly sampled from $\calB_{L_\phi}(\vpst,\zeta)$ lies on halfspace $\vH^+\left(\vhst_\phi, \left \langle \vhst_\phi, \vpst \right\rangle + \frac{\zeta \cdot \ln(3/2)}{\sqrt{d-1}} \right)$ is at least $\frac{1}{20\sqrt{d-1}}$. Hence, in expectation after $20\sqrt{d-1}$ samples we have identified one such point by union bound.
\end{proof}

\xhdr{Auxiliary on volumetric progress.} The next lemma states that a convex body $\calK$ with width at least $\delta$ in every direction must fit a ball of diameter $\delta/d$ inside it.

\begin{lemma}[{{\cite[Lemma~6.3]{LLV18}}}]\label{lem:ball-inside}
If $\calK \subset \bbR^d$ is a convex body such that $w(\calK, \vu) \geq \delta$ for every unit vector $\vu$, then $\calK$ contains a ball of diameter $\delta/d$.
\end{lemma}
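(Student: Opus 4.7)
The plan is to invoke John's theorem (the Löwner--John maximum-volume ellipsoid inequality) for general, not necessarily symmetric, convex bodies. Specifically, let $E \subseteq \calK$ be the unique ellipsoid of maximum volume contained in $\calK$, with center $c$. The classical result guarantees that $\calK$ is contained in the dilation $E_d := c + d \cdot (E - c)$ of $E$ by factor $d$ about its center $c$.

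From here the argument is a one-line width comparison. Let $r$ denote the length of the shortest semi-axis of $E$, attained in some unit direction $\vu^\star$. The width of $E$ along $\vu^\star$ is exactly $2r$, so the width of $E_d$ along $\vu^\star$ equals $2dr$. The inclusion $\calK \subseteq E_d$ therefore yields $w(\calK, \vu^\star) \leq 2dr$, which combined with the hypothesis $w(\calK, \vu^\star) \geq \delta$ gives $r \geq \delta/(2d)$. Consequently $\calK$ contains the ball of radius $r$ centered at $c$, and this ball has diameter $2r \geq \delta/d$, as claimed.

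The only delicate point I expect is invoking the correct form of John's theorem: for an arbitrary convex body the containment holds with factor $d$, whereas the tighter factor $\sqrt{d}$ applies only to centrally symmetric bodies and would not suffice to justify a $\delta/d$ diameter statement in the non-symmetric case we are in. As an elementary alternative that avoids citing John's theorem, I would instead construct an inscribed simplex $\Delta = \mathrm{conv}(p_0, \ldots, p_d) \subseteq \calK$ iteratively: pick any $p_0 \in \calK$, and having fixed $p_0, \ldots, p_k$ spanning the affine subspace $A_k$, take a unit vector $\vu_k$ orthogonal to $A_k - p_0$ and use $w(\calK, \vu_k) \geq \delta$ to locate $p_{k+1} \in \calK$ at distance at least $\delta/2$ from $A_k$. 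Then $\Delta$ is a $d$-simplex whose successive heights are each $\geq \delta/2$, and a standard insphere computation for such a simplex recovers the same bound: $\Delta$ (and hence $\calK$) contains a ball of diameter $\delta/d$.
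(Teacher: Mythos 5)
Your primary argument via John's theorem is correct: for a general (not necessarily centrally symmetric) convex body, the maximum-volume inscribed ellipsoid $E$ with center $c$ satisfies $\calK \subseteq c + d(E-c)$, the width comparison along the direction of the shortest semi-axis gives $r \geq \delta/(2d)$, and the ball of radius $r$ centered at $c$ sits inside $E \subseteq \calK$, so $\calK$ contains a ball of diameter $2r \geq \delta/d$. You are also right that the symmetric version of John's theorem (factor $\sqrt{d}$) would be the wrong one to cite here. The paper itself does not reprove this lemma---it cites it directly as Lemma~6.3 of [LLV18]---so there is no paper-internal argument to compare against, but your derivation is the standard one.

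The elementary alternative you sketch does not go through, however. Knowing that each successive height of the inscribed simplex is at least $\delta/2$ places no lower bound on its inradius. Already in $d=2$: take $p_0 = (0,0)$, $p_1 = (\delta/2,0)$, $p_2 = (R, \delta/2)$ with $R$ large. Both successive heights equal $\delta/2$ exactly, the area is the fixed quantity $\delta^2/8$, but the perimeter grows like $2R$, so the inradius (area over semiperimeter) tends to zero. The construction controls the orthogonal distance of each new vertex $p_{k+1}$ from the previously spanned affine subspace $A_k$ but says nothing about its drift \emph{within} $A_k$, so the simplex can be arbitrarily skewed; moreover the ``successive heights'' are heights over the $k$-dimensional flats $A_k$, which are not the altitudes over $(d-1)$-dimensional facets that enter the simplex insphere formula $r = dV/\sum_i A_i$. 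Any elementary proof must re-use the width hypothesis on the full body $\calK$, not merely on an inscribed simplex; the classical statement that does this directly is Steinhagen's inequality relating minimal width to inradius, which in fact yields the stronger bound of a ball of diameter $\Omega(\delta/\sqrt{d})$. Stick with the John's theorem argument.
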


\begin{lemma}[Directional Gr{\"u}nbaum {\cite[Theorem~5.3]{LLV18}}]\label{lem:dir-grun}
If $\calK$ is a convex body and $\vkappa$ is its centroid, then, for \emph{every} unit vector $\vu \neq 0$, the set $\calK_+ = \calK \bigcap \{\vx | \langle \vu, \vx -\vkappa \rangle \geq 0\}$ satisfies: \[\frac{1}{d+1} w(\calK,\vv) \leq w(\calK_+, \vv) \leq w(\calK,\vv), \quad \text{for all unit vectors } \vv. \]
\end{lemma}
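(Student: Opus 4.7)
The upper bound $w(\calK_+,\vv) \leq w(\calK,\vv)$ is immediate from the inclusion $\calK_+ \subseteq \calK$, so all the work lies in the lower bound. My plan is to reduce the two-sided width bound to a one-sided extremal statement and then establish that statement by a Grünbaum-type scaling argument.

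For the reduction, translate so that $\vkappa = 0$. For every unit $\vv$, the width decomposes as $w(\calK,\vv) = M(\calK,\vv) + M(\calK,-\vv)$, where $M(\calK,\vv) := \max_{x \in \calK}\langle \vv, x\rangle$, and an identical decomposition holds for $\calK_+$. Hence it is enough to prove the one-sided inequality
\[
M(\calK_+,\vv) \;\geq\; \tfrac{1}{d+1}\, M(\calK,\vv)
\]
for every unit $\vv$; applying this to both $\vv$ and $-\vv$ and summing yields the lemma.

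For the one-sided inequality, I would use the following Grünbaum-style construction. Let $\vp^\star \in \calK$ attain $M := M(\calK,\vv)$, and consider the chord of $\calK$ through $\vkappa = 0$ and $\vp^\star$. By the one-dimensional Hammer/Grünbaum inequality, the centroid splits this chord in a ratio at most $d{:}1$, so $\calK$ contains a point of the form $-\alpha \vp^\star$ with $\alpha \geq 1/d$. Combining this one-dimensional bound with a convex-averaging argument across the $(d-1)$-dimensional level slices $\calK \cap \{\langle \vv, x\rangle = t\}$ (whose volume is log-concave in $t$ by Brunn--Minkowski) should produce a point $\vy \in \calK$ satisfying simultaneously $\langle \vu, \vy\rangle \geq 0$ and $\langle \vv, \vy\rangle \geq M/(d+1)$. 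The $d$-simplex with $\vkappa$ at its centroid witnesses that the factor $1/(d+1)$ is sharp, which pins down the right exponents in the cross-section argument.

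The main obstacle will be executing this cross-sectional argument when the cutting direction $\vu$ is nearly orthogonal to $\vp^\star$; in that regime neither $\vp^\star$ nor its Grünbaum reflection across the centroid lies in $\calK_+$, and a witness point in $\calK_+$ with large $\vv$-coordinate must be produced from the interior of $\calK$ rather than read off the chord through $\vkappa$ and $\vp^\star$. Handling this case requires carefully tracking how the log-concavity of the cross-sectional volumes, together with the centroid constraint $\int_\calK x\,dx = 0$, forces a nonnegligible fraction of the mass of $\calK$ to sit in $\calK_+$ at nontrivial $\vv$-coordinate. I expect the cleanest formalization to parallel the proof in \cite[Theorem~5.3]{LLV18}, which in turn mirrors classical derivations of Grünbaum's volume inequality via one-parameter symmetrizations.
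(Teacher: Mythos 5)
Your reduction is where the plan breaks: the one-sided inequality $\max_{\vx\in\calK_+}\langle\vv,\vx-\vkappa\rangle\ \ge\ \frac{1}{d+1}\max_{\vx\in\calK}\langle\vv,\vx-\vkappa\rangle$ is simply false, so no amount of Brunn--Minkowski slicing will produce the witness point you describe. Concretely, take $d=2$ and let $\calK$ be the triangle with vertices $(0,2)$, $(-1,-1)$, $(1,-1)$, whose centroid is the origin; choose $\vv=(0,1)$ and $\vu=-\vv$. Then $\calK_+=\calK\cap\{\vx:\langle\vv,\vx\rangle\le 0\}$, so every point of $\calK_+$ has $\vv$-value at most $0$, whereas $\frac{1}{d+1}\max_{\vx\in\calK}\langle\vv,\vx\rangle=\frac{2}{3}$. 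A point of $\calK_+$ with simultaneously $\langle\vu,\cdot\rangle\ge 0$ and $\vv$-value at least $\frac{1}{d+1}\max_\calK\langle\vv,\cdot\rangle$ does not exist here; note also that the delicate regime is not ``$\vu$ nearly orthogonal to the maximizer'' but $\vu$ opposite to $\vv$. The lemma survives in this example only because the loss in the $+\vv$ direction is compensated in the $-\vv$ direction: $w(\calK_+,\vv)=1=\frac{1}{3}w(\calK,\vv)$. (For calibration, the paper does not reprove this statement at all; it imports it verbatim from \cite[Theorem~5.3]{LLV18}.)

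The two directions must therefore be handled jointly, and your own ingredient (the chord-through-the-centroid ratio, i.e.\ the Minkowski--Radon fact that, after translating $\vkappa$ to the origin, $-\frac{1}{d}\calK\subseteq\calK$) already suffices. Let $a,b\in\calK$ attain $M_+=\max_{\vx\in\calK}\langle\vv,\vx\rangle$ and $M_-=\max_{\vx\in\calK}\langle-\vv,\vx\rangle$, so $w(\calK,\vv)=M_++M_-$, and recall $\vkappa=0\in\calK_+$. If $a\in\calK_+$, then $w(\calK_+,\vv)\ge\langle\vv,a\rangle-\langle\vv,0\rangle=M_+\ge\frac{1}{d+1}(M_++M_-)$, where the last step uses $M_+\ge\langle\vv,-b/d\rangle=M_-/d$ with $-b/d\in\calK$; symmetrically if $b\in\calK_+$. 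If neither $a$ nor $b$ lies in $\calK_+$, then $\langle\vu,a\rangle<0$ and $\langle\vu,b\rangle<0$, so both $-a/d$ and $-b/d$ lie in $\calK_+$, giving $w(\calK_+,\vv)\ge\langle\vv,-b/d\rangle-\langle\vv,-a/d\rangle=\frac{1}{d}\left(M_++M_-\right)\ge\frac{1}{d+1}w(\calK,\vv)$. This short case analysis replaces the false one-sided claim, needs no log-concavity of slices, and is the kind of argument behind \cite[Theorem~5.3]{LLV18}; the simplex example above also shows the constant $\frac{1}{d+1}$ is tight, as you anticipated.
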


The Approximate Gr{\"u}nbaum lemma, which is stated next, relates the volume of a set $\calK_+^{\mu} = \{\vx \in \calK: \langle \vu, \vx - \vkappa \rangle \geq \mu\}$ with the volume of set $\calK$, when $\mu \leq 1/d$ for any unit vector $\vu$. Its proof, which we provide right below, is similar to the proof of \cite[Lemma~5.5]{LLV18} with the important difference that $\mu$ is no longer $w(\calK,\vu)/(d+1)^2$, but rather, $\mu < 1/d$

\begin{lemma}[Approximate Gr{\"u}nbaum]\label{lem:approx_grunbaum}
Let $\calK$ be a convex body and $\vkappa$ be its centroid. For an arbitrary unit vector $\vu$ and a scalar $\mu$ such that $0 < \mu < \frac{1}{d}$, let $\calK_+^{\mu} = \{\vx \in \calK: \langle \vu, \vx - \vkappa \rangle \geq \mu\}$. Then: $\vol\left(\calK_+^{\mu}\right) \geq \frac{1}{2e^2}\vol(\calK)$.
\end{lemma}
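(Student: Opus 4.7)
The plan is to combine Grünbaum's classical inequality with an affine contraction toward the ``tip'' of $\calK$ in direction $\vu$. First, Grünbaum's original result, applied to the hyperplane through $\vkappa$ orthogonal to $\vu$, gives $\vol(\calK_+^0) \geq \bigl(d/(d+1)\bigr)^d \vol(\calK) \geq \vol(\calK)/e$, which handles the $\mu = 0$ case.

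To transfer this estimate to $\calK_+^\mu$, pick any maximizer $\vx^\star \in \arg\max_{\vx \in \calK}\langle \vu, \vx - \vkappa\rangle$, set $b := \langle \vu, \vx^\star - \vkappa\rangle$, and consider the affine contraction $T(\vx) := (1 - \mu/b)\vx + (\mu/b)\vx^\star$. A short check shows that $T(\calK_+^0) \subseteq \calK_+^\mu$: the image lies in $\calK$ by convexity, and for any $\vx \in \calK_+^0$ one has $\langle \vu, T(\vx) - \vkappa\rangle = (1-\mu/b)\langle \vu, \vx - \vkappa\rangle + \mu \geq \mu$. Since $T$ has linear part $(1-\mu/b)I$, its Jacobian determinant equals $(1-\mu/b)^d$, and therefore
\[
\vol(\calK_+^\mu) \;\geq\; (1-\mu/b)^d\, \vol(\calK_+^0) \;\geq\; \frac{(1-\mu/b)^d}{e}\, \vol(\calK).
\]

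It then remains to show $(1-\mu/b)^d \geq 1/(2e)$ under the hypothesis $\mu < 1/d$. By directional Grünbaum (Lemma~\ref{lem:dir-grun}), $b \geq w(\calK, \vu)/(d+1)$; under the width normalization of $\calK$ implicit in how this lemma is invoked (notably in Lemma~\ref{lem:epoch-grun}, where $\calK = \Pi_{L_\phi}\calK_\phi$ has width at least $\delta$ along every direction of $L_\phi$), the ratio $\mu/b$ stays bounded away from $1$, and a Bernoulli-type estimate $(1-x)^d \geq 1 - dx$ then delivers the target factor $1/(2e)$. This last step is the main obstacle: controlling $(1-\mu/b)^d$ requires $b$ to be sufficiently large, and the assumption $\mu < 1/d$ does not by itself rule out arbitrary spiky bodies where $\calK_+^\mu$ could even be empty; so the argument must carefully track the geometry supplied by the calling context. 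Combining the two displays above yields $\vol(\calK_+^\mu) \geq \vol(\calK)/(2e^2)$, as claimed.
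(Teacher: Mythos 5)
Your contraction argument is sound as far as it goes: $T(\vx)=(1-\mu/b)\vx+(\mu/b)\vx^\star$ with $b=\max_{\vx\in\calK}\langle\vu,\vx-\vkappa\rangle$ indeed maps $\calK_+^0$ into $\calK_+^\mu$ (when $\mu\le b$) and scales volume by $(1-\mu/b)^d$, so combined with standard Gr\"unbaum you get $\vol(\calK_+^\mu)\ge \frac{(1-\mu/b)^d}{e}\vol(\calK)$. The genuine gap is exactly the step you flag yourself: you must show $(1-\mu/b)^d\ge \frac{1}{2e}$, and the lemma's hypotheses give you no lower bound on $b$ at all. The route you sketch cannot close it: directional Gr\"unbaum gives only $b\ge w(\calK,\vu)/(d+1)$, and in the calling context the width guarantee is merely $w\ge\delta$ with $\delta=\frac{\eps}{4(d+\sqrt d)}$, which is far smaller than $d\mu$ for the relevant $\mu\le 3\bnu=\Theta(\eps/\sqrt d)$; with only that information $\mu/b$ need not be $O(1/d)$ and could even exceed $1$ (your own ``spiky body'' remark), so no Bernoulli-type estimate can rescue the bound. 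As written, the proposal therefore does not prove the lemma.

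For comparison, the paper's proof takes a different route — it reduces to the one-dimensional profile $r(x)=g(x)^{1/(d-1)}$ of the sections, invokes Brunn's theorem for concavity, compares $r$ with a cone to get $\int_\mu^1 r^{d-1}\,dx\ \ge\ (1-\tfrac1d)^d\int_0^1 r^{d-1}\,dx\ \ge\ \tfrac{1}{2e}\int_0^1 r^{d-1}\,dx$, and then multiplies by standard Gr\"unbaum — but, crucially, it opens by normalizing ``WLOG'' so that the projection of $\calK$ onto $\vu$, centered at $\vkappa$, is $[a,1]$, i.e.\ it fixes the reach above the centroid to $b=1$ and reads $\mu<\frac1d$ relative to that normalization (equivalently, the lemma is used with $\mu<b/d$). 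That normalization is the missing ingredient in your argument, not a new estimate: once you grant $b=1$ (or $\mu\le b/d$), your contraction closes the proof in one line, $(1-\mu)^d\ge(1-\tfrac1d)^d\ge\tfrac14$, hence $\vol(\calK_+^\mu)\ge\frac{1}{4e}\vol(\calK)\ge\frac{1}{2e^2}\vol(\calK)$, and it does so more elementarily than the paper (no Brunn's theorem, no section integrals). So: correct idea, genuinely different and arguably cleaner mechanism, but the final bound is left unproved because the required lower bound on $b$ — which the paper simply builds into its normalization — is never supplied.
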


In order to prove the Appoximate Gr{\"u}nbaum lemma we make use of Brunn's theorem and the Gr{\"u}nbaum Theorem, both stated below.

\begin{lemma}[Brunn's Theorem]\label{lem:brunn}
For convex set $\calK$ if $g(x)$ is the $(d-1)$-dimensional volume of the section $\calK \bigcap \{\vy | \langle \vy, \mathbf{e}_i \rangle = x\}$, then the function $r(x) = g(x)^{\frac{1}{d-1}}$ is concave in $x$ over its support.
\end{lemma}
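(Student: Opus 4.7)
The plan is to derive Brunn's theorem as a consequence of the (classical) Brunn–Minkowski inequality applied to the $(d-1)$-dimensional slices. Specifically, let $H_x = \{\vy \in \bbR^d : \langle \vy, \mathbf{e}_i \rangle = x\}$, and let $S_x = \calK \cap H_x$. Since each $H_x$ is an affine hyperplane, we may identify it isometrically with $\bbR^{d-1}$, and under this identification $S_x$ becomes a convex body in $\bbR^{d-1}$ (as the intersection of the convex set $\calK$ with an affine subspace), with $(d-1)$-dimensional volume $g(x)$.

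The first step is a convexity observation. Fix $x_1, x_2$ in the support of $g$ and $\lambda \in [0,1]$, and set $x_\lambda = \lambda x_1 + (1-\lambda) x_2$. I claim
\[
\lambda S_{x_1} + (1-\lambda) S_{x_2} \subseteq S_{x_\lambda},
\]
where the Minkowski sum is taken inside $\bbR^{d-1}$ after the identification above (equivalently, taken along directions orthogonal to $\mathbf{e}_i$). Indeed, for any $\vy_1 \in S_{x_1}$ and $\vy_2 \in S_{x_2}$, the point $\vy_\lambda = \lambda \vy_1 + (1-\lambda) \vy_2$ lies in $\calK$ by convexity of $\calK$, and $\langle \vy_\lambda, \mathbf{e}_i \rangle = \lambda x_1 + (1-\lambda) x_2 = x_\lambda$, so $\vy_\lambda \in S_{x_\lambda}$.

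The second step applies the Brunn–Minkowski inequality in $\bbR^{d-1}$ to the non-empty compact convex sets $\lambda S_{x_1}$ and $(1-\lambda) S_{x_2}$, which yields
\[
\vol_{d-1}\bigl(\lambda S_{x_1} + (1-\lambda) S_{x_2}\bigr)^{1/(d-1)} \;\geq\; \lambda \,\vol_{d-1}(S_{x_1})^{1/(d-1)} + (1-\lambda)\,\vol_{d-1}(S_{x_2})^{1/(d-1)}.
\]
Combining with the inclusion from the first step and monotonicity of volume,
\[
g(x_\lambda)^{1/(d-1)} \;\geq\; \vol_{d-1}\bigl(\lambda S_{x_1} + (1-\lambda) S_{x_2}\bigr)^{1/(d-1)} \;\geq\; \lambda\, g(x_1)^{1/(d-1)} + (1-\lambda)\, g(x_2)^{1/(d-1)},
\]
which is exactly the concavity of $r(x) = g(x)^{1/(d-1)}$ on its (convex) support.

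The main obstacle is really just invoking Brunn–Minkowski, which is a standard black-box result from convex geometry and can be cited rather than re-proven; modulo that citation, the argument is a short exercise using only convexity of $\calK$ and the behavior of volumes under scaling and Minkowski addition. One small bookkeeping point is to verify that the support of $g$ is an interval (so that convex combinations of points in the support remain in the support), which follows from the convexity of $\calK$ and the fact that the set of $x$ for which $S_x$ is non-empty is the projection of $\calK$ onto the $\mathbf{e}_i$-axis, hence itself a convex subset of $\bbR$.
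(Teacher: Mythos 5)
Your proof is correct: the inclusion $\lambda S_{x_1} + (1-\lambda) S_{x_2} \subseteq S_{x_\lambda}$ from convexity, followed by an application of the Brunn--Minkowski inequality in $\bbR^{d-1}$, is the standard textbook derivation of Brunn's theorem. Note that the paper states this lemma without proof, treating it as a classical black-box result from convex geometry (it is used only to establish the Approximate Gr\"unbaum lemma), so there is no authorial argument to compare against; your derivation supplies the standard one and is sound.
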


\begin{lemma}[Gr{\"u}nbaum Theorem]\label{thm:std-grun}
Let $\calK$ denote a convex body and $\vkappa$ its centroid. Given an arbitrary non-zero vector $\vu$, let $\calK_+ = \{\vx| \langle \vu, \vx - \vkappa \rangle \geq 0\}$. Then: 
$$\frac{1}{e}\vol(\calK) \leq \vol \left(\calK_+ \right) \leq \left(1 - \frac{1}{e} \right) \vol (\calK)$$
\end{lemma}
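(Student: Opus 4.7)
Plan: I will let $g(x) = \vol_{d-1}\bigl(\calK \cap \{\vy : \langle \vu, \vy - \vkappa\rangle = x\}\bigr)$ denote the cross-sectional volume at signed height $x$ from the centroid in direction $\vu$, and set $r(x) = g(x)^{1/(d-1)}$. By Brunn's theorem (Lemma~\ref{lem:brunn}), $r$ is concave on its support $[-L, M]$, where $M = \max_{\vx \in \calK}\langle \vu, \vx - \vkappa\rangle$ and $L = -\min_{\vx \in \calK}\langle \vu, \vx - \vkappa\rangle$, with the endpoint conditions $r(-L) = r(M) = 0$. The target bound is to show $\vol(\calK_+^\mu) = \int_\mu^M g(x)\,dx \geq \vol(\calK)/(2e^2)$.

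The high-level strategy is the two-stage estimate $\vol(\calK_+^\mu) \geq \tfrac{1}{2e}\vol(\calK_+) \geq \tfrac{1}{2e^2}\vol(\calK)$, where the second inequality is precisely the standard Gr\"unbaum theorem (Lemma~\ref{thm:std-grun}) applied at the centroid hyperplane. For the first inequality, it is enough to show that the ``discarded slab'' $\int_0^\mu g(x)\,dx$ is at most a $(1 - 1/(2e))$ fraction of $\vol(\calK_+)$. I will extract two chord estimates from the concavity of $r$. First, combining concavity with the endpoint $r(-L)=0$ yields the right-derivative bound $r'(0^+) \leq r(0)/L$ (from the mean-value theorem on $[-L,0]$ together with monotonicity of $r'$), which together with concavity on $[0,M]$ upgrades to the pointwise upper bound $r(x) \leq r(0)(1 + x/L)$ on $[0,\mu]$, giving $g(x) \leq g(0)(1+x/L)^{d-1}$. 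Second, combining concavity with $r(M)=0$ gives the chord lower bound $r(x) \geq r(0)(1-x/M)$ on $[0,M]$, which integrates to $\vol(\calK_+) \geq g(0) M/d$.

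Combining these yields the ratio bound
\[
\frac{\int_0^\mu g(x)\,dx}{\vol(\calK_+)} \;\leq\; \frac{L}{M}\,\bigl[(1+\mu/L)^d - 1\bigr].
\]
The hypothesis $\mu < 1/d$ ensures $(1+\mu/L)^d \leq e^{\mu d/L}$ with a controlled exponent, and the centroid constraint $\int_{-L}^M x\, g(x)\,dx = 0$ applied to the concave profile $r$ couples $L$ and $M$ tightly enough that the right-hand side can be bounded by $(2e-1)/(2e)$. Feeding this back into the two-stage estimate recovers the stated factor $1/(2e^2)$. The main obstacle is the final bookkeeping step: the precise constant $1/(2e^2)$ hinges on making the chord-based bounds interact cleanly with the centroid coupling between $L$ and $M$, and the threshold $\mu < 1/d$ is exactly what keeps the expansion of $(1+\mu/L)^d$ tight enough that no further loss beyond the Gr\"unbaum factor $1/e$ is incurred.
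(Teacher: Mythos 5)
You have misread which statement you were asked to prove: the lemma in question is the \emph{standard} Gr\"unbaum theorem, whereas the bound you set out to establish --- that $\vol(\calK_+^\mu) = \int_\mu^M g(x)\,dx \geq \vol(\calK)/(2e^2)$ for a hyperplane shifted by $\mu < 1/d$ off the centroid --- is the paper's \emph{Approximate} Gr\"unbaum lemma (Lemma~\ref{lem:approx_grunbaum}), a different and weaker claim. Worse, your argument is circular: the second leg of your two-stage estimate, $\vol(\calK_+) \geq \tfrac{1}{e}\vol(\calK)$ ``applied at the centroid hyperplane,'' \emph{is} Lemma~\ref{thm:std-grun}, so you are invoking as a black-box ingredient the very result you were supposed to prove. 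Nothing in the write-up addresses the statement itself, namely that every hyperplane through the centroid of a convex body leaves at least a $1/e$ and at most a $1-1/e$ fraction of the volume on each side.

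For the record, the paper does not supply a proof of Lemma~\ref{thm:std-grun}; it is quoted as a classical fact. A correct argument would still start, as you do, from Brunn's theorem to obtain a concave profile $r(x)=g(x)^{1/(d-1)}$ vanishing at both ends of the support, but the substance lies in a comparison step you did not attempt: among all such concave profiles with fixed total volume and centroid at $0$, the volume fraction on one side of the centroid is extremized by a cone (replace $r$ by an affine profile that matches it in volume and first moment, using concavity to show the cone is the worst case), and for a $d$-dimensional cone the centroid sits at distance $\tfrac{d}{d+1}$ of the height from the apex, so the apex side carries volume fraction $\bigl(\tfrac{d}{d+1}\bigr)^d \geq 1/e$ and the base side $1 - \bigl(\tfrac{d}{d+1}\bigr)^d \leq 1-1/e$. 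Your chord estimates $r(x) \leq r(0)(1+x/L)$ and $r(x)\geq r(0)(1-x/M)$ are genuine consequences of concavity, but they drive a different computation (bounding the lost slab for a shifted cut), not the centroid-split ratio that Gr\"unbaum's theorem is about.
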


\begin{proof}[Proof of Lemma~\ref{lem:approx_grunbaum}]
For this proof we assume without loss of generality that $\vu =e_1$, and that the projection of $\calK$ onto $e_1$ is interval $[a,1]$. We are interested in comparing the following two quantities: $\vol(\calK)$ and $\vol(\calK_+^\mu)$. By definition: 
\begin{equation}
\vol \left( \calK_+ \right) = \int_0^1 r(x)^{d-1}dx \qquad \text{and} \qquad \vol \left( \calK_+^{\mu} \right) = \int_\mu^1 r(x)^{d-1}dx
\end{equation}
where $r(x) = g(x)^{\frac{1}{d-1}}$ and $g(x)$ corresponds to the volume of the $(d-1)$-dimensional section $\calK_x = \calK \bigcap \{\vx| \langle \vx, e_i \rangle = x \}$. We now prove that $\vol(\calK_+^\mu) \geq \frac{1}{e} \vol(\calK_+)$. Combining this with Gr\"unbaum Theorem (Lemma~\ref{thm:std-grun}) gives the result. We denote by $\rho$ the following ratio: 
\begin{equation}\label{eq:rho-1}
    \rho = \frac{\int_\mu^1 r(x)^{d-1}dx}{\int_0^1 r(x)^{d-1}dx} \geq \frac{\int_{\nicefrac{1}{d}}^1 r(x)^{d-1}dx}{\int_0^1 r(x)^{d-1}dx}
\end{equation}
We approximate function $r(x)$ with function $\tr$: $$\tr(x) = \twopartdef{r(x)}{0 \leq x \leq \delta}{(1-x) \cdot \frac{r(\delta)}{1-\delta}}{\delta <x \leq 1}$$
Note that since $0= \tr(1) \leq r(1)$ (because $r(x)$ is a non-negative function) and $r(x)$ is concave from Brunn's theorem (Lemma~\ref{lem:brunn}), for functions $r(x)$ and $\tr(x)$ it holds that $r(x) \geq \tr(x)$. Using this approximation function $\tr(x)$ along with the fact that function $f(z) = \frac{z}{y+z}$ is \emph{increasing} for any scalar $y >0$, we can relax Equation~\eqref{eq:rho-1} as follows: 
\begin{equation}\label{eq:rho-2}
    \rho \geq \frac{\int_{\nicefrac{1}{d}}^1 \tr(x)^{d-1}dx}{\int_0^{\nicefrac{1}{d}} \tr(x)^{d-1}dx + \int_{\nicefrac{1}{d}}^1 \tr(x)^{d-1}dx}
\end{equation}
Next, we use another approximation function $\hr(x) = (1-x)\cdot \frac{r(\delta)}{1-\delta}, 0 \leq x \leq 1$; this time in order to approximate function $\tr(x)$. For $x \in [\delta, 1]$: $\tr(x) = \hr(x)$. For $x \in [0,\delta]$ and since $\tr(0) = r(0) = 0$ and $\tr(x)$ is concave in $x \in [0,\delta]$, $\hr(x) \geq \tr(x) = r(x), x \in [0,\delta]$. Hence, Equation~\eqref{eq:rho-2} can be relaxed to: 
\begin{align*}
    \rho &\geq \frac{\int_{\nicefrac{1}{d}}^1 \hr(x)^{d-1}dx}{\int_0^{\nicefrac{1}{d}} \hr(x)^{d-1}dx + \int_{\nicefrac{1}{d}}^1 \hr(x)^{d-1}dx} =  \frac{\int_{\nicefrac{1}{d}}^1 (1-x)^{d-1}\cdot \left( \frac{r(\delta)}{1-\delta}\right)^{d-1}dx}{\int_{0}^1 (1-x)^{d-1}\cdot \left( \frac{r(\delta)}{1-\delta}\right)^{d-1}dx}\\
    &= \frac{\int_{\nicefrac{1}{d}}^1  (1-x)^{d-1} dx}{\int_{0}^1 (1-x)^{d-1} dx} = \frac{-\frac{1}{d} \left(0 - \left(1 - \frac{1}{d}\right)^d \right)}{-\frac{1}{d} \left(0 - 1\right)} = \left( 1 - \frac{1}{d}\right)^d \geq \frac{1}{2e} 
\end{align*}
This concludes our proof.\end{proof}

We next state the cylindrification lemma, whose proof was provided by \cite{LLV18}, relates the volume of the convex body to the volume of its projection onto a subspace. 

\begin{lemma}[{Cylindrification \cite[Lemma~6.1]{LLV18}}]\label{lem:cyl}
Let $\calK$ be a convex body in $\bbR^d$ such that $w(\calK,\vu) \geq \delta'$ for every unit vector $\vu$. Then, for every $(d-1)$-dimensional subspace $L$ it holds that $\vol(\Pi_L \calK) \leq \frac{d(d+1)}{\delta'} \vol(\calK)$.
\end{lemma}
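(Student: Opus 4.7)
The plan is to write $\vol(\calK) = \int_{\Pi_L \calK} \ell(\vy)\, d\vy$, where $\ell(\vy)$ is the length of the fiber of $\calK$ above $\vy$ in the direction $\vh$ normal to $L$, and then lower bound this integral by a suitable fraction of $\vol(\Pi_L \calK)$. By convexity of $\calK$, the fiber length $\ell$ is non-negative and \emph{concave} on the convex body $\Pi_L \calK \subset \bbR^{d-1}$: if $\vy = t\vy_1 + (1-t)\vy_2$, then taking the same convex combination of the top and bottom endpoints of the fibers above $\vy_1$ and $\vy_2$ produces a segment in $\calK$ above $\vy$ of length exactly $t\ell(\vy_1) + (1-t)\ell(\vy_2)$.

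Next, I would lower bound $\max_\vy \ell(\vy)$ via the already-proved Lemma~\ref{lem:ball-inside}: the hypothesis $w(\calK,\vu) \geq \delta'$ for every unit $\vu$ yields an inscribed ball of diameter $\delta'/d$ centered at some point $\vc \in \calK$, whose diameter along $\vh$ is a chord of $\calK$ that projects to $\Pi_L\vc$. Hence $\max_\vy \ell(\vy) \geq \ell(\Pi_L\vc) \geq \delta'/d$.

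The last ingredient is a standard ``cone bound'' for non-negative concave functions: for such an $f$ on a convex body $B \subset \bbR^n$ with $\max f = M$ attained at $\vx_0$, the subgraph $\{(\vx,y): \vx \in B,\ 0 \leq y \leq f(\vx)\}$ is a convex body in $\bbR^{n+1}$ of volume $\int_B f\, d\vx$ that contains the cone $\conv((B \times \{0\}) \cup \{(\vx_0, M)\})$ of volume $M\vol(B)/(n+1)$, giving $\int_B f\, d\vx \geq M\vol(B)/(n+1)$. Applying this with $n = d-1$, $B = \Pi_L\calK$, and $f = \ell$ gives
\[
\vol(\calK) \;\geq\; \frac{\max_\vy \ell(\vy)}{d}\, \vol(\Pi_L\calK) \;\geq\; \frac{\delta'}{d^2}\, \vol(\Pi_L\calK),
\]
and since $d^2 \leq d(d+1)$, we conclude $\vol(\Pi_L\calK) \leq (d(d+1)/\delta')\, \vol(\calK)$.

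The subtle point I would expect to be the main obstacle is that $\max_\vy \ell(\vy)$ can be strictly smaller than $w(\calK,\vh)$ itself (for instance, a thin prism tilted away from $\vh$ has uniformly short vertical fibers despite large $\vh$-width), so one cannot simply read off a fiber bound from the $\vh$-width. The ``width at least $\delta'$ in \emph{every} direction'' hypothesis is essential precisely because it guarantees a fat inscribed ball, hence a single long fiber; concavity of $\ell$ then propagates this pointwise lower bound to an integral bound through the cone argument.
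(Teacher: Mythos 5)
The paper never proves this lemma: it is imported verbatim from \cite[Lemma~6.1]{LLV18} (the surrounding text explicitly defers the proof to that work), so there is no in-paper argument to compare against; your proposal must stand on its own, and it does. The Fubini decomposition $\vol(\calK)=\int_{\Pi_L\calK}\ell(\vy)\,d\vy$ is valid; your endpoint-interpolation argument correctly establishes concavity of the fiber-length function $\ell$ on $\Pi_L\calK$ (this is the one-dimensional analogue of Brunn's theorem, Lemma~\ref{lem:brunn}, proved directly, which is fine); Lemma~\ref{lem:ball-inside} indeed supplies an inscribed ball of diameter $\delta'/d$, whose diameter parallel to the normal direction of $L$ is a chord of $\calK$ projecting to the single point $\Pi_L\vc$, so $\max_{\vy}\ell(\vy)\geq \delta'/d$; and the cone bound $\int_B f\geq M\,\vol(B)/(n+1)$ for a nonnegative concave $f$ is correct, since the subgraph is convex and contains the cone over $B\times\{0\}$ with apex $(\vx_0,M)$. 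Chaining these gives $\vol(\Pi_L\calK)\leq \frac{d^2}{\delta'}\vol(\calK)$, which is in fact slightly sharper than the stated constant $\frac{d(d+1)}{\delta'}$ (the $d(d+1)$ form in \cite{LLV18} reflects a route that extracts the long fiber with length only $\delta'/(d+1)$, e.g.\ via a centroid/Gr\"unbaum-type chord bound, rather than via the inscribed ball). Your closing remark is also apt: the ``width $\geq\delta'$ in every direction'' hypothesis is exactly what rules out the thin tilted-prism obstruction, since the $\vh$-width alone does not lower bound any single fiber. One presentational nit: rather than saying the interpolated segment has length ``exactly'' $t\ell(\vy_1)+(1-t)\ell(\vy_2)$, say the fiber above $\vy$ \emph{contains} a segment of that length, hence $\ell(\vy)\geq t\ell(\vy_1)+(1-t)\ell(\vy_2)$; that is all concavity requires.
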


\begin{lemma}[Epoch Based Projected Gr{\"u}nbaum]\label{lem:epoch-grun}
For $\delta = \frac{\eps}{4(d+\sqrt{d})}$ and $\calK_{\phi+1} = \calK_\phi \bigcap \vH^+(\tvh_\phi, \tomega_\phi)$, where $\vH^+(\tvh_\phi,\tomega_\phi)$ was the halfspace returned from \textsc{CorPV.SeparatingCut} it holds that: 
\[\vol\left(\Pi_{L_{\phi}}\calK_{\phi+1}\right)\leq \left(1-\frac{1}{2e^2}\right)\vol(\Pi_{L_{\phi}}\calK_{\phi})\]
\end{lemma}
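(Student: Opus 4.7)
The plan is to reduce the claim directly to the geometry of the projected body $K' := \Pi_{L_\phi}\calK_\phi$ in dimension $|L_\phi|$, exploiting the fact that the separating cut returned by \textsc{CorPV.SeparatingCut} is, by Lemma~\ref{lem:computation}, orthogonal to every small dimension (i.e.\ $\tvh_\phi \in L_\phi$). Because of this orthogonality, the halfspace $\vH^+(\tvh_\phi,\tomega_\phi)$ is a cylinder over the corresponding halfspace in $L_\phi$, so projection and intersection commute:
\[
\Pi_{L_\phi}\calK_{\phi+1} \;=\; \Pi_{L_\phi}\!\left(\calK_\phi \cap \vH^+(\tvh_\phi,\tomega_\phi)\right) \;=\; K' \cap \vH^+(\tvh_\phi,\tomega_\phi).
\]
Hence it suffices to show $\vol\!\left(K' \cap \vH^+(\tvh_\phi,\tomega_\phi)\right) \leq (1 - 1/(2e^2))\vol(K')$.

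Next I would identify the centroid of $K'$. By Definition~\ref{def:cylindrification}, $\Cyl(\calK_\phi, S_\phi)$ is a Cartesian product of $\Pi_{L_\phi}\calK_\phi = K'$ with an axis-aligned box in the span of $S_\phi$. Since the centroid of a product is the product of centroids, the projection $\vkappa' := \Pi_{L_\phi}\vkappast_\phi$ is exactly the centroid of $K'$. Moreover, because $\tvh_\phi \in L_\phi$, the distance to the hyperplane is unchanged by projection onto $L_\phi$, so $\dist(\vkappa', (\tvh_\phi,\tomega_\phi)) = \dist(\vkappast_\phi, (\tvh_\phi,\tomega_\phi)) \leq 3\bnu$ by Lemma~\ref{lem:computation}.

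The proof then splits into two cases depending on which side of the hyperplane contains $\vkappa'$. If $\vkappa' \in \vH^-(\tvh_\phi,\tomega_\phi)$, then $\vH^+(\tvh_\phi,\tomega_\phi)$ is contained in the halfspace parallel to it and passing through $\vkappa'$; the classical Gr\"unbaum theorem (Lemma~\ref{thm:std-grun}) applied to that halfspace yields $\vol(K' \cap \vH^+) \leq (1 - 1/e)\vol(K')$, which is strictly stronger than what we need. If instead $\vkappa' \in \vH^+(\tvh_\phi,\tomega_\phi)$, set $\vu = -\tvh_\phi$ and $\mu = \langle \tvh_\phi, \vkappa'\rangle - \tomega_\phi \in [0, 3\bnu]$. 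Then
\[
\{x \in K' : \langle \vu, x - \vkappa'\rangle \geq \mu\} \;=\; K' \cap \vH^-(\tvh_\phi,\tomega_\phi),
\]
so Approximate Gr\"unbaum (Lemma~\ref{lem:approx_grunbaum}) applied in $|L_\phi|$-dimensional space gives $\vol(K' \cap \vH^-) \geq (1/(2e^2))\vol(K')$, and the conclusion follows by subtraction.

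The single delicate check is that the hypothesis $\mu < 1/|L_\phi|$ required by Approximate Gr\"unbaum is satisfied. Substituting $\delta = \eps/(4(d+\sqrt{d}))$ into the formula for $\bnu$ simplifies to $\bnu = \eps(2\sqrt{d}+1)/(8\sqrt{d}(\sqrt{d}+1))$, which is $O(\eps/\sqrt{d})$; since $|L_\phi|\leq d$, the condition $3\bnu < 1/|L_\phi|$ reduces to $\eps$ being sufficiently small relative to $d$, which is the regime of interest (e.g.\ $\eps = 1/T$ for absolute and pricing loss). This is the main obstacle in the proof; the rest is a clean reduction via the cylindrification's product structure and the orthogonality of the separating cut to $S_\phi$, both of which make the volume bound factor through the projected body.
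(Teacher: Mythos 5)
Your proof is correct and follows essentially the same route as the paper: invoke Lemma~\ref{lem:computation} for the orthogonality of the cut to $S_\phi$ and the bound $\dist(\vkappast_\phi,(\tvh_\phi,\tomega_\phi))\leq 3\bnu$, check that substituting $\delta=\frac{\eps}{4(d+\sqrt d)}$ makes this distance at most $1/d$ (the paper does this using $\eps\leq 1/\sqrt d$, the same smallness assumption you flag), and then apply the Approximate Gr\"unbaum lemma with $\calK=\Pi_{L_\phi}\calK_\phi$. Your additional steps---commuting projection with the cut, identifying $\Pi_{L_\phi}\vkappast_\phi$ as the centroid of the projected body via the product structure of the cylindrification, and the case split on which side of the cut the centroid lies---are just explicit versions of details the paper's one-line application of Lemma~\ref{lem:approx_grunbaum} glosses over.
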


\begin{proof}
By Lemma~\ref{lem:computation}, we know that \textsc{CorPV.SeparatingCut} returned hyperplane $(\tvh_{\phi},\tomega_\phi)$ orthogonal to all small dimensions, such that $\dist(\vkappast_\phi, (\tvh_\phi, \tomega_\phi)) \leq 3 \bnu = 3\cdot \frac{\eps - 2\sqrt{d}\delta}{4\sqrt{d}}$. Substituting $\delta = \frac{\eps}{4(d + \sqrt{d})}$ we get that: \[\dist\left(\vkappast_\phi, \left(\tvh_\phi,\tomega_\phi\right)\right) \leq \frac{(2\sqrt{d} +1) \eps}{2\sqrt{d}(\sqrt{d}+1)} \leq \frac{1}{d}\] 
where the last inequality uses the fact that $\eps \leq \nicefrac{1}{\sqrt{d}}$ and that $\frac{2\sqrt{d}+1}{\sqrt{d}+1}\leq 2$. Hence, the clause in the approximate Gr\"unbaum lemma (Lemma~\ref{lem:approx_grunbaum}) holds and as a result, applying the approximate Gr\"unbaum lemma with $\calK=\Pi_{L_\phi}\calK_\phi$, the lemma follows. \end{proof}

For completeness, we state the celebrated Perceptron mistake bound lemma \citep{N63}.

\begin{lemma}\label{lem:perceptron}
Given a dataset $\calD = \{ \vx_i, y_i \}_{i \in [n]}$ with $\vx_i \in \bbR^d$ and $y_i \in \{-1, +1\}$, if $\|x_i\| \leq R$ and there exists a linear classifier $\btheta$ such that $\|\btheta \| = 1$ and $y_i \cdot \langle \btheta, \vx_i \rangle \geq \gamma$ for a scalar $\gamma$. Then, the number of mistakes that the Perceptron algorithm incurs in $\calD$ is upper bounded by $(R/\gamma)^2$.
\end{lemma}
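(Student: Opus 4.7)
The plan is to give the classical Novikoff-style analysis, which tracks two scalar potentials on the Perceptron weight vector $\vw$ and combines them through Cauchy--Schwarz. Assume without loss of generality that $\vw$ is initialized to $\mathbf{0}$ and that on a mistake at example $(\vx_i, y_i)$ we perform the update $\vw \gets \vw + y_i \vx_i$ (this matches the sign conventions used elsewhere in the paper, after reflecting $\vx_i$ by $y_i$ so that the margin becomes positive). Let $\vw^{(k)}$ denote the weight vector after the $k$-th mistake, and let $M$ be the total number of mistakes incurred on $\calD$.

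The first step is a lower-bound potential: I will track the inner product $\langle \vw^{(k)}, \btheta\rangle$ along the mistake sequence. When a mistake occurs on $(\vx_i, y_i)$, the update gives
\[
\langle \vw^{(k+1)}, \btheta\rangle \;=\; \langle \vw^{(k)}, \btheta\rangle + y_i \langle \vx_i, \btheta\rangle \;\geq\; \langle \vw^{(k)}, \btheta\rangle + \gamma,
\]
using the margin hypothesis $y_i\langle \btheta, \vx_i\rangle \geq \gamma$. Telescoping from $\vw^{(0)}=\mathbf{0}$ yields $\langle \vw^{(M)}, \btheta\rangle \geq M\gamma$.

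The second step is an upper-bound potential on $\|\vw^{(k)}\|^2$. Since a mistake means $y_i\langle \vw^{(k)}, \vx_i\rangle \leq 0$,
\[
\|\vw^{(k+1)}\|^2 \;=\; \|\vw^{(k)}\|^2 + 2 y_i\langle \vw^{(k)}, \vx_i\rangle + \|\vx_i\|^2 \;\leq\; \|\vw^{(k)}\|^2 + R^2,
\]
so telescoping gives $\|\vw^{(M)}\|^2 \leq M R^2$. Finally, Cauchy--Schwarz with $\|\btheta\|=1$ combines the two bounds:
\[
M\gamma \;\leq\; \langle \vw^{(M)}, \btheta\rangle \;\leq\; \|\vw^{(M)}\|\cdot\|\btheta\| \;\leq\; \sqrt{M}\,R,
\]
which rearranges to $M \leq (R/\gamma)^2$. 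There is no real obstacle here: the whole content of the argument is choosing the two potentials correctly, and the only subtlety worth flagging is that the bound is independent of the dimension $d$ and of the order in which examples are presented, which is exactly what makes it the right primitive to invoke inside $\textsc{CorPV.SeparatingCut}$ in Section~\ref{ssec:algorithmic_crux}.
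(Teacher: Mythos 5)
Your proof is correct: it is the classical Novikoff two-potential argument (lower-bounding $\langle \vw^{(M)},\btheta\rangle$ by $M\gamma$, upper-bounding $\|\vw^{(M)}\|^2$ by $MR^2$, and combining via Cauchy--Schwarz), which is exactly the argument behind the result the paper invokes. The paper does not prove this lemma at all---it states it for completeness and cites Novikoff---so your writeup simply supplies the standard proof that the citation points to, and no discrepancy with the paper arises.
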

\section{Extension to bounded rationality.} \label{sec:extensions_behavior}
We now extend the algorithm and analysis to the bounded rationality behavioral model. We first recap the behavioral model. There is a noise parameter $\xi_t$ drawn from a $\sigma$-subgaussian distribution $\subG(\sigma)$, \emph{fixed} across rounds and \emph{known} to the learner, i.e., nature selects it before the first round and reveals it. At every round $t$ a realized noise $\xi_t\sim \subG(\sigma)$ is drawn, but $\xi_t$ is never revealed to the learner. The agent's perceived value is then $\tv_t=v(\vx_t)+\xi_t$.

We focus on a \emph{pseudo-regret} definition that compares to a benchmark that has access to $\stheta$ and $\subG(\sigma)$ but does not have access to the realization $\xi_t$. The resulting benchmark is:
\begin{equation}\label{eq:bench}
   L_{\stheta}^{\star}(\vx) = \min_{\omega^{\star}} \E_{\xi'\in\subG(\sigma)}\big[\ell(\omega^{\star},\left\langle \vx,\stheta \right\rangle, \langle \vx, \stheta\rangle + \xi')\big].
\end{equation}
and the corresponding regret is $R(T)=\sum_{t\in[T]}\big[\ell(\omega_t,v(\vx_t),\tv_t)-L_{\stheta}^{\star}(\vx_t)\big]$. 

We remark that $\omegast$ should be thought of as the optimal query that the learner could have issued had we known $\stheta$ but not the realization of $\xi'$. To develop more intuition regarding the benchmark stated assume for example that $\xi'$ comes from a normal distribution. Then, the optimal $\omegast$ in expectation for the $\eps$-ball and the absolute loss is equal to $\langle \vx, \stheta \rangle$. However, $\omegast$ should be strictly \emph{lower} than $\langle \vx, \stheta \rangle$ when interested in the pricing loss, due to its discontinuity.

Our algorithm only differs from the one described in Section~\ref{sec:corpv_algorithm} in the $\textsc{Exploit}$ module (Algorithm~\ref{alg:corpv-exploit}) as $\omega_t$ is defined in a similar way with the benchmark. More formally, we again consider the worst-case selection of $\btheta$ consistent with the knowledge set and select the query that minimizes our loss with respect to that, i.e., $\omega_t=\argmin_{\omega}\max_{\btheta\in\calK_\phi} \E_{\xi'\in\subG(\sigma)}\big[\ell(\omega,\langle \vx_t, \stheta\rangle, \langle \vx_t,\stheta,\vx_t\rangle +\xi')\big]$. The algorithm also doubles the corruption budget it should be robust to ($\bc$ in Algorithm~\ref{algo:corpvAI}) and takes care of the additional noise by treating its tail as corruption and upper bounding it by $\bc$. 

\begin{theorem}\label{thm:bounded} With probability at least $1-2\beta$, the guarantee of Theorem~\ref{thm:agnostic-corpv} extends to when rounds with fully rational agents are replaced by boundedly rational with $\sigma \leq \frac{\eps}{8 \sqrt{2d} (\sqrt{d} + 1) \ln T}$. 
\end{theorem}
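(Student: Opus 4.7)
The plan is to reduce bounded rationality to the adversarial irrationality setting of Theorem~\ref{thm:agnostic-corpv} by treating rounds with atypically large noise realizations as additional adversarial corruptions that are swallowed by the doubled corruption budget of $\textsc{CorPV.AI}$.

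First, I would carry out the concentration step. Set the threshold $u := \bnu - \unu$, which by the definitions $\bnu = \frac{\eps - 2\sqrt{d}\delta}{4\sqrt{d}}$, $\unu = \sqrt{d}\delta$, and $\delta = \frac{\eps}{4(d + \sqrt{d})}$ simplifies algebraically to $u = \frac{\eps}{8\sqrt{d}(\sqrt{d}+1)}$. The subgaussian tail bound together with the hypothesis $\sigma \leq \frac{\eps}{8\sqrt{2d}(\sqrt{d}+1)\ln T}$ forces $\Pr[|\xi_t| > u] \leq \beta/T$ per round, so a union bound over all $T$ rounds yields that with probability at least $1 - \beta$ the ``large-noise'' set $B := \{t : |\xi_t| > u\}$ satisfies $|B| \leq \bc$ (in fact one can push to $|B| = 0$ under the stated hypothesis; a slightly looser threshold combined with a Chernoff tail is the direct justification for the ``upper bounding by $\bc$'' referenced in the algorithm description). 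For $t \notin B$, a careful re-examination of the proof of Lemma~\ref{lem:protect-stheta} with $\nu$ chosen slightly below $\bnu$ shows that $\langle \stheta - \vkappa_\phi, \Pi_{L_\phi}\vx_t\rangle + \nu \geq -\xi_t - \unu + \nu \geq 0$ (taking $y_t = +1$ without loss of generality), so $\stheta$ still accumulates no $\nu$-margin projected undesirability. Consequently, the separating-cut existence and computation lemmas (Lemmas~\ref{lem:existence} and~\ref{lem:computation}) extend verbatim if we regard the rounds in $B$ as adversarial.

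With this reduction in place, I would invoke Theorem~\ref{thm:agnostic-corpv} with augmented corruption budget $C + |B|$, which lies inside the layered algorithm's doubled tolerance $2\bc = 4\log(T/\beta)$ on the good event, so the layer-selection and epoch arguments of Section~\ref{ssec:behavioral_results} apply unchanged. A union bound over the two failure events --- the noise concentration and the original Theorem~\ref{thm:agnostic-corpv} --- yields the probability $1 - 2\beta$ in the statement. For exploit rounds, compared against the pseudo-regret benchmark $L_{\stheta}^{\star}(\vx_t)$, the modified query $\omega_t = \arg\min_{\omega} \max_{\btheta \in \calK_\phi} \E_{\xi' \in \subG(\sigma)}[\ell(\omega, \langle \vx_t, \btheta\rangle, \langle \vx_t, \btheta\rangle + \xi')]$ differs from $\arg\min_\omega \E_{\xi'}[\ell(\omega, \langle \vx_t, \stheta\rangle, \langle \vx_t, \stheta\rangle + \xi')]$ only through the choice of $\btheta \in \calK_\phi$; since $\stheta \in \calK_\phi$ (secured by the previous step) and the expected loss is $\calO(1)$-continuous in the scalar $\langle \vx_t, \btheta\rangle$, the per-round pseudo-regret is $\calO(\eps)$ once the width has dropped below $\eps$, matching the contribution already absorbed in Theorem~\ref{thm:agnostic-corpv}.

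The main obstacle is the calibration in Step 2: one must choose $\nu$ strictly inside $(\unu, \bnu)$ so that Lemmas~\ref{lem:protect-stheta}, \ref{lem:landmark3}, and \ref{lem:existence} continue to apply, while simultaneously leaving room for $|\xi_t|$ up to the threshold $u$; this is what pins down the exact algebraic identity $\bnu - \unu = \frac{\eps}{8\sqrt{d}(\sqrt{d}+1)}$ that drives the assumed bound on $\sigma$. A secondary subtlety is establishing the $\calO(1)$-continuity of the expected pricing loss in $\btheta$, since the discontinuity of the raw pricing loss at $\omega = \tv$ is only smoothed after taking expectation over $\xi'$; this needs to be checked case-by-case but follows from the smoothness of the subgaussian CDF.
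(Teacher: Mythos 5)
Your proposal is correct and follows essentially the same route as the paper's proof: a subgaussian tail bound plus a union bound confines every $|\xi_t|$ below the threshold $\bnu-\unu=\frac{\eps}{8\sqrt{d}(\sqrt{d}+1)}$ with probability $1-\beta$, and shifting the lower admissible margin for $\nu$ by that amount (equivalently requiring $\unu+\Xi\leq\bnu$ with $\Xi=\sqrt{2}\sigma\ln T$) keeps $\stheta$ protected so that Theorem~\ref{thm:agnostic-corpv} applies, yielding the $1-2\beta$ guarantee. Your added discussion of the exploit-round pseudo-regret and of absorbing any residual large-noise rounds into the doubled corruption budget is consistent with (and slightly more detailed than) the paper's argument.
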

We note that Corollary 1 in \citep{CLL16} has a regret of $\bigO(d^2\log T)$ for pricing loss with $\sigma \approx \frac{d}{T\log T}$. For pricing loss, $\eps=\frac{1}{T}$ and our bound is weaker by a factor of $d$ on the regret and a factor $d^2$ on the subgaussian variance $\sigma$, but it allows for the simultaneous presence of adversarially corrupted
agents.

\label{app:bounded}

\begin{proof}[Proof of Theorem~\ref{thm:bounded}]
We first show that under the low-noise regime stated above, the noise is bounded by $\Xi = \sqrt{2} \sigma \ln T$ with high probability at every round. Indeed, by Hoeffding's inequality we have that $\Pr\left[ |\xi_t | > \Xi \right] \leq e^{-\ln^2 T}$. Using the union bound we have: $\Pr \left[ |\xi_t| > \Xi, \text{ for any } t \in [T] \right] \leq \beta'=\beta/T$, and so $\Pr \left[ |\xi_t| \leq \Xi, \forall t \in [T] \right] \geq 1 - \beta$, which contributes the additional $\beta$ in the high-probability argument.

We next show that when $\sigma \leq \frac{\eps}{8\sqrt{2d}(\sqrt{d}+1)\ln T}$, then our algorithm maintains $\stheta$ in $\calK_\phi$. This is enough to ensure that the regret guarantee remains order unchanged.
Since the perceived value of \br agents is $\tv_t = v(\vx_t) + \xi_t$, then, in order to ``protect'' $\stheta$ (i.e., make sure that $\stheta \in \calK_{\phi+1}$) we need the hyperplanes that we feed to \textsc{CorPV.SeparatingCut} to have a margin of $\Xi$ (since $\xi_t \leq \Xi$). To do so, it suffices to slightly change the lower bound of $\nu$ for the $\nu$-margin projected undesirability levels that we use throughout the proof such that the new lower bound is $\underline{\nu}' = \underline{\nu} + \Xi = \sqrt{d} \cdot \delta + \Xi$. Since $\nu$ is such that $\underline{\nu}' \leq \nu \leq \bnu$, then it must the case that $\underline{\nu}' = \sqrt{d} \delta + \Xi \leq \bnu = \frac{\eps (2 \sqrt{d} + 1)}{8\sqrt{d} (\sqrt{d} + 1)}$. Solving for $\Xi$ we obtain the result. This concludes our proof. 
\end{proof}
\section{Discussion of algorithmic choices affecting the regret guarantee}\label{app:discussion}\label{ssec:improper_cut}

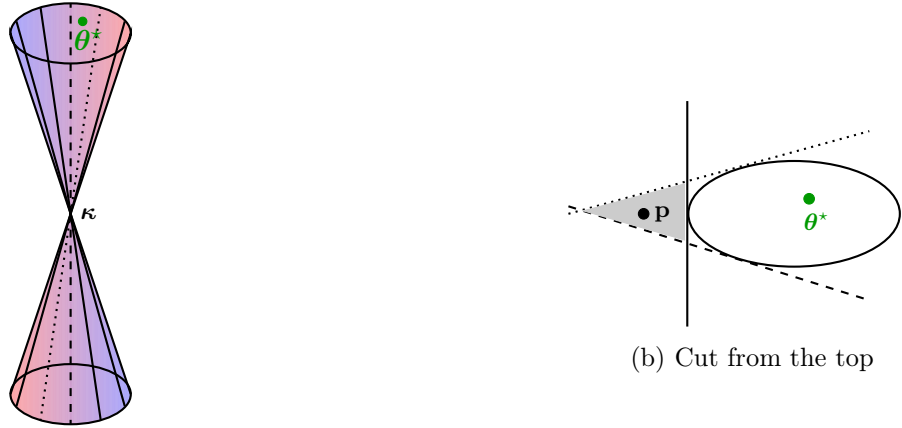
\begin{figure}[htbp]
\begin{subfigure}{0.6\textwidth}
\centering
    \begin{tikzpicture}[scale=0.8] 
        \shade[left color=blue!40!white,opacity=0.75, right color = red!40!white,opacity=0.75] (-1,0) arc (180:0:1cm and 0.5cm) -- (0,-3) -- cycle;
        \draw [thick] (-1,0) arc (180:360:1cm and 0.5cm) -- (0,-3) -- cycle;
        \draw [thick] (-1,0) arc (180:0:1cm and 0.5cm);
        \draw [thick, dashed] (0, -3) -- (0,   0.5); 
        \draw [thick, dotted] (0, -3) -- (0.5, 0.43);
        \draw [thick] (0, -3) -- (-0.5, 0.43);
        \draw [thick] (0, -3) -- (0.9, 0.2);
        \draw [thick] (0, -3) -- (-0.9, 0.2);
        \begin{footnotesize}
        \draw[color=black] (0.3, -3)    node {$\vkappa$}; 
        \end{footnotesize}
        \shade[left color=red!40!white,opacity=0.75, right color=blue!40!white,opacity=0.75] (-1,-6) arc (180:360:1cm and 0.5cm) -- (0,-3) -- cycle;
        \draw [thick] (-1,-6) arc (180:360:1cm and 0.5cm) -- (0,-3) -- cycle;
        \draw [thick] (-1,-6) arc (180:0:1cm and 0.5cm);
        \draw [thick, dashed] (0, -3) -- (0,   -6.5); 
        \draw [thick] (0, -3) -- (0.5, -6.43);
        \draw [thick,dotted] (0, -3) -- (-0.5, -6.43);
        \draw [thick] (0, -3) -- (0.9, -6.2);
        \draw [thick] (0, -3) -- (-0.9, -6.2);
        \draw [color = green!60!black, fill = green!60!black] (0.2,0.2) circle (2pt);
        \draw[color = green!60!black] (0.3, -0.1) node {$\stheta$};
    \end{tikzpicture}
    \caption{3D view of the hyperplane ``cuts'' created by contexts}\label{fig:three_dim_a}
    \end{subfigure}\hfill
    \begin{subfigure}{0.3\textwidth}
    \begin{tikzpicture}
    \draw [thick] (0,0) ellipse (40pt and 20pt);  
    \draw [color = green!60!black, fill = green!60!black] (0.2, 0.2) circle (2pt);
    \draw [thick] (-1.42, -1.5) -- (-1.42, 1.5);
    \draw [thick, dotted] (-3, 0) -- (1, 1.1);
    \draw [thick, dashed] (-3, 0.1) -- (1, -1.15);
    \draw [color=black, fill = black] (-2,0) circle (2pt);
    \draw [color = black!20!white, fill = black!20!white] (-1.45,-0.35) -- (-2.8,0.04) -- (-1.45, 0.4);
        \draw [color=black, fill = black] (-2,0) circle (2pt);
    \begin{footnotesize}
        \draw [color=green!60!black] (0.3, -0.1)  node {$\stheta$}; 
        \draw [color=black, fill=black] (-1.75, 0) node {$\mathbf{p}$};
        \end{footnotesize}
    \end{tikzpicture}
    \caption{\cpedit{Cut} from the top}\label{fig:three_dim_b}
    \end{subfigure}
    \caption{Sketch on why proper cuts do not suffice.}\label{fig:three_dim}
\end{figure}

Before concluding this section, we discuss algorithmic choices affecting the regret guarantee. 

\xhdr{Not making updates by single context.} All the contextual search approaches that are based on binary search techniques rely on refining a similarly constructed knowledge set (or, in other words, version space) that contains the ground truth $\stheta$. In all of the previous non-corrupted works, one could just use every explore query in order to refine this knowledge set. However, in our corrupted setting, this technique can result in the removal of $\stheta$ from the knowledge set (as can be be seen by a simple one-dimensional example formalized in Appendix~\ref{app:single_dimension}). As a result, to employ such binary-search techniques, we need to be more careful about when and how we refine the knowledge set. Our approach is to only remove from the knowledge set parameters that are certifiably not $\stheta$. To simplify the subsequent discussion, assume that contexts $\{\vx_t\}_{t \in [\tau]}$ lead only to \emph{explore} queries and that we are still interested in the simpler $\bc$-known corruption setting with $\bc = 1$.

\xhdr{Creating a separating cut by combining explore queries.} Ideally, if we could identify one context $\vx \in \{\vx_t\}_{t \in [\tau]}$ such that the $\bc$-protected region $\calP(\bc,\nu)$ is inside the halfspace $\vH^+\left(\vx, \vkappa_\phi \right)$, i.e., $\calP(\bc, \nu) \subseteq \vH^+(\vx,\vkappa_\phi)$, then we could update the knowledge set as $\calK_\phi \bigcap \vH^+(\vx,\vkappa_\phi)$. As we have explained, these properties ensure sufficient volumetric progress.
In $d = 2$, indeed one of the contexts among $\{\vx_t\}_{t \in [\tau]}$ has the aforementioned property due to a monotonicity argument that we describe in Appendix~\ref{app:two-dim}. 

However, this is no longer true in $d=3$, even if one sees arbitrarily many contexts in an epoch. To see this, consider Figure~\ref{fig:three_dim} and assume that all rounds are \emph{uncorrupted}. In Figure~\ref{fig:three_dim_a}, each straight line corresponds to a context $\vx_t$ and the shaded region corresponds to the halfspace with feedback $y_t = +1$, forming this ``cone.'' In Figure~\ref{fig:three_dim_b} we visualize a cross section of the knowledge set shown in Figure~\ref{fig:three_dim_a} and zoom in on only $3$ of the halfspaces around $\stheta$; the dotted, the dashed and the solid. 

We are going to reason about the undesirability level of points like $\vp$, lying in the shaded area of Figure~\ref{fig:three_dim_b}. Points $\vp$ and $\stheta$ lie on the same side of both the dashed and dotted hyperplanes, and so these two do not contribute any undesirability to $\vp$. The solid line does contribute once to the undesirability level of $\vp$ (and all the points in the shaded region). Recall that since $\bc = 1$, we need a hyperplane with undesirability at least $2$ in the \emph{entirety} of one of its halfspaces. However, for any number of contexts, we can form the cone structure in Figure~\ref{fig:three_dim_a}, in which, for every hyperplane, there exists a shaded region like the one in Figure~\ref{fig:three_dim_b}, whose points have undesirability $1$. 

On the other hand, there exists another hyperplane (not associated with a single explore query) with undesirability at least $\bc+1$. This is the cut that separates the upper part of the cone in Figure~\ref{fig:three_dim_a} containing $\stheta$ from the lower part. 

\xhdr{Separating into small and large dimensions.} In the beginning of this discussion, we assumed that we will focus only on \emph{explore} queries. This is important, as cuts that are made in directions of small width do not necessarily adequately refine our estimate for $\stheta$. That said, since we have established that the cut we make may not correspond to any of the observed contexts, we cannot automatically guarantee that the width of the direction for that cut will indeed be large enough.

To deal with this, we separate the dimensions into small and large and project all objects into the subspace spanned by the ``large dimensions''. This guarantees that any cut that we create will also live in the large dimension subspace and will therefore have sufficiently large width to enable adequate volumetric progress. This is the place where our approach is tailored to the \textsc{ProjectedVolume} algorithm rather than being a generic reduction to any binary-search method for the uncorrupted case as, to the best of our knowledge, the \textsc{ProjectedVolume} is the only one that explicitly separates small and large dimensions, which makes it amenable for our purposes.

\xhdr{Employing landmarks.} So far we have clarified the need to handle small and large dimensions of the knowledge set separately. As a next step, Carath{\'e}odory's Theorem provides an upper bound in the undesirability of all the points within $\conv(\calP(\bc))$. The last step is to identify at least one point \emph{in the large dimensions} that has undesirability strictly larger than the bound provided by Carath{\'e}odory's theorem. This is where our ``landmarks'' construction comes into play and serves the following dual purpose. On the one hand, at least one of them has large enough undesirability that it cannot be in $\conv(\calP(\bc))$ and hence, this landmark can be separated from $\conv(\calP(\bc))$ using the Perceptron algorithm. On the other hand, because of their construction, the hyperplane returned by Perceptron is guaranteed to be valid, meaning that the knowledge set has large width in its direction.

\begin{remark}
We note that the additional $d^2$ degradation in the regret compared to the uncorrupted case arises from the use of Carath\'eodory's Theorem and the use of landmarks respectively. We view the former as inherent to our approach and therefore achieving a linear dependence on $d$ would require fundamentally new ideas. Regarding the latter, landmarks may be an artifact of the particular analysis and there may be other ways to identify such a highly undesirable point while still retaining the main principles of our methodological approach.
\end{remark}

\xhdr{Further details on the need to combine multiple explore queries.}\label{app:two-dim}
In order to prove the results of this section, we use a simplified version of undesirability levels; we define a point's $\vp \in \calK_\phi$ undesirability level as the number of rounds within epoch $\phi$, for which \[u_{\phi}(\vp) = \sum_{t \in [\tau]} \1 \left\{ \left\langle \vp - \vkappa_\phi, \vx_t \right\rangle \cdot y_t < 0 \right\}.\]

\noindent We next present two propositions regarding the number of contexts needed in order to guarantee that we have found an appropriately undesirable hyperplane, for the cases of $d=2$ and $d=3$ respectively.
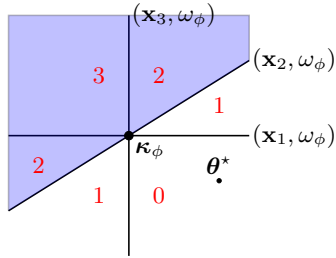
\begin{figure}[htbp]
\centering
\begin{tikzpicture}[line cap=timestep,line join=timestep,>=latex',x = 1cm, y = 1cm, scale=0.4]
\draw[semithick] (-4,0) -- (4,0);
\draw[semithick] (-4,-2.5) -- (4,2.5);
\draw[semithick] (0,-4) -- (0,4);
\draw[semithick, fill = blue, opacity = 0.25]  (-4, -2.5) -- (-4,4) -- (4,4) -- (4,2.5);
\draw[color=black,fill=black] (0,0) circle (4pt);
\draw[color=black,fill=black] (3,-1.5) circle (2pt);
\begin{footnotesize}
\draw[color=black] (0.7,-0.5) node {$\vkappa_\phi$}; 
\draw[color=black] (5.5, 0) node {$(\vx_1, \omega_\phi)$};
\draw[color=black] (5.5, 2.5) node {$(\vx_2, \omega_\phi)$};
\draw[color=black] (1.5, 4) node {$(\vx_3, \omega_\phi)$};
\draw[color=black] (3, -1) node {$\stheta$};
\draw[color=red]   (3, 1)  node {$1$};
\draw[color=red]   (1, 2)  node {$2$};
\draw[color=red]   (-1, 2)  node {$3$};
\draw[color=red]   (-3, -1)  node {$2$};
\draw[color=red]   (-1, -2)  node {$1$};
\draw[color=red]   (1, -2)  node {$0$};
\end{footnotesize}
\end{tikzpicture}
\caption{Sketch of the undesirability levels for epoch $\phi$, after $2\bc + 1$ uncorrupted rounds, assuming that each context appears once. Red numbers denote the undesirability level of each region. The opaque region denotes the knowledge set for epoch $\phi+1$.}\label{fig:2d}
\end{figure}

\begin{proposition}\label{prop:2d}
For $d = 2$ and any corruption level $\bc$, after $3\bc + 1$ rounds within an epoch, there exists a hyperplane $(\vx', \omega')$ among $\left\{(\vx_t, \omega_t)\right\}_{t \in [\tau]}$ with undesirability level at least $\bc +1$ in the entirety of one of its halfspaces.
\end{proposition}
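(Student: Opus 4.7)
The plan is to reduce the statement to a one-dimensional problem on the unit circle of directions. After the standard sign flip that forces $y_t=+1$ at every round, each context $\vx_t$ can be identified with its angle $\theta_t\in[0,2\pi)$ and $\stheta$ with some angle $\alpha^{\star}$. Writing $P_t=[\theta_t-\pi/2,\theta_t+\pi/2]$ for the closed positive semicircle of round $t$ and $N_t$ for its open complement, the undesirability becomes $u(\alpha):=|\{t:\alpha\in N_t\}|$, and the bound of $\bc$ corruptions gives $u(\alpha^{\star})\le\bc$. I want to exhibit a round $j$ such that $u(\alpha)\ge\bc+1$ for every $\alpha\in N_j$, or equivalently $U:=\{\alpha:u(\alpha)\le\bc\}\subseteq P_j$.

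The first step is to show that $U$ fits in a closed semicircle. For any three points $\alpha_1,\alpha_2,\alpha_3\in U$, each lies in at least $\tau-\bc=2\bc+1$ of the $\tau$ positive semicircles $P_i$, so by a Bonferroni union bound at least $\tau-3\bc=1$ of the $P_i$'s contains all three; hence every triple of $U$ lies inside some $P_i$ and in particular in a common semicircle. Together with the observation that $U$ has no antipodal pair (since antipodes $\alpha,\alpha+\pi$ satisfy $u(\alpha)+u(\alpha+\pi)=\tau>2\bc$, so they cannot both be in $U$), a Helly-style argument then promotes the triplewise statement to the whole set: if the arc hull of $U$ had length exceeding $\pi$, then its two extreme points together with an interior point of $U$ would span an arc of length $>\pi$, contradicting that those three lie in a common semicircle.

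Let $[\alpha_L,\alpha_R]$ be the arc hull of $U$ of length $L\le\pi$. A direct computation shows that the valid ``centers'' $V:=\bigcap_{\alpha\in U}P_\alpha$ form the arc $[\alpha_R-\pi/2,\,\alpha_L+\pi/2]$ of length $\pi-L\ge0$. The crux is that both endpoints of $V$ are themselves among the $\theta_i$'s. Indeed, as we move CCW past $\alpha_R$ the function $u$ transitions from $\le\bc$ to $>\bc$, and in general position this is a single $+1$ jump, which can occur only at a position of the form $\theta_i+\pi/2$; consequently $\theta_i=\alpha_R-\pi/2$ is an endpoint of $V$, so that $U\subseteq P_i$. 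Taking $(\vx_j,\omega_j)$ to be this round then gives the desired hyperplane: every $\alpha\in N_j$ satisfies $u(\alpha)\ge\bc+1$.

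The main obstacle is the Helly-type step, because semicircles do not enjoy the usual two-intersection Helly property; four centers at $90^\circ$ apart are the classical counterexample, since every three of the corresponding semicircles have a common point but all four do not. What rescues the argument is precisely the choice $\tau=3\bc+1$: it simultaneously upgrades the Bonferroni count from pairs to triples having a common $P_i$, and forbids antipodal pairs in $U$, and these two facts together rule out the $90^\circ$-type obstruction. A residual technicality, where several jumps of $u$ coincide at $\alpha_R$ because of non-general-position contexts, can be handled by picking any $\theta_i$ contributing a $+1$ jump at $\alpha_R$ as the round $j$, or alternatively by an infinitesimal perturbation of the context angles that restores general position.
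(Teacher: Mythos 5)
Your argument is correct in substance and takes a genuinely different route from the paper's. The paper reasons through the uncorrupted rounds: at least $2\bc+1$ of the $3\bc+1$ hyperplanes through $\vkappa_\phi$ keep $\stheta$'s sector on their positive side, so the undesirability of the angular sectors, traversed circularly starting from $\stheta$'s sector, is unimodal (it climbs from $0$ to $2\bc+1$ and descends back), and the hyperplane crossed when the count first exceeds $\bc$ has an entire halfspace at level at least $\bc+1$. You instead work with the full count $u$ (corrupted rounds included), define $U=\{\alpha: u(\alpha)\le \bc\}$, and show: (i) by the union bound with exactly $\tau-3\bc\ge 1$ that every triple of $U$ lies in one of the given closed semicircles $P_i$; (ii) generically $U$ contains no antipodal pair; (iii) hence $U$ lies in a closed semicircle; and (iv) since $u$ jumps upward when crossing the extreme point $\alpha_R$ of $U$, some arc $N_i$ must open exactly at $\alpha_R$, which forces $U\subseteq P_i$ and identifies the witnessing round. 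Your route makes the role of the constant $3\bc+1$ completely transparent, never needs to locate $\stheta$ or separate corrupted from uncorrupted rounds, and step (iv) is constructive about which observed hyperplane works; the paper's unimodality argument is shorter but more picture-dependent.

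Two points need tightening. First, the Helly-type promotion (iii) is under-justified as written: if the arc hull of $U$ has length $L>\pi$, the triple consisting of the two hull endpoints and an \emph{arbitrary} interior point of $U$ need not violate the common-semicircle property, because one of the two gaps to the endpoints may itself be at least $\pi$; you must take the third point in the middle window $(\alpha_R-\pi,\alpha_L+\pi)$ and argue it exists. If $U$ clusters only near the two ends, a short extra case analysis on $a'=\max\bigl(U\cap[\alpha_L,\alpha_R-\pi]\bigr)$ and $b'=\min\bigl(U\cap[\alpha_L+\pi,\alpha_R]\bigr)$ (giving either a bad triple, an antipodal pair, or a gap of length $>\pi$ contradicting minimality of the hull), or a Steinitz-type argument with four points of $U$ whose convex hull contains the origin, closes this; the claim itself is true, so this is a reparable gap in rigor rather than a wrong step. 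Second, the identity $u(\alpha)+u(\alpha+\pi)=\tau$ holds only when no context is orthogonal to $\alpha$; in degenerate configurations (e.g.\ $\bc=1$ with the four contexts along the coordinate semi-axes) $U$ contains antipodal pairs and the proposition, read with the strict-inequality undesirability on open halfspaces, in fact fails on hyperplane boundaries. This is not a deficiency relative to the paper, whose own proof makes the same implicit general-position assumption (its peak value $2\bc+1$ and unimodal picture break in the same configuration); your fix of selecting any arc that opens at $\alpha_R$ when jumps coincide is fine, but the perturbation variant should be dropped, since the open-halfspace conclusion does not survive the limit.
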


\begin{proof}
Since there exist at most $\bc$ corrupted rounds among the $3\bc + 1$ rounds of epoch $\phi$, then at least $2\bc + 1$ are \emph{uncorrupted}. We say that these rounds are part of the set $U_\phi$. For all $t \in U_\phi$, the learner's hyperplanes $\{(\vx_t, \omega_t)\}_{t \in U_\phi}$ pass from the same centroid $\vkappa_\phi$ and they all \emph{protect} the region where $\stheta$ lies. In other words, none among $\{(\vx_t, \omega_t)\}_{t \in U_\phi}$ adds an undesirability point to $\stheta$ (see e.g.,  Figure~\ref{fig:2d} for $\bc = 1$ and each context appears only once). Since all hyperplanes point towards the same direction (i.e., the region containing $\stheta$ never gets an undesirability point), starting from the region where $\stheta$ lies and moving counter clockwise the undesirability levels of the formed regions first increase (moving from $0$ to $2\bc + 1$) and then decrease (moving from $2\bc + 1$ to $0$). Due to this being a concave function, it is clear to see that there always exists a hyperplane with undesirability level at least $\bc + 1$ in the entirety of one of its halfspaces. \end{proof}

\begin{proposition}\label{prop:no-existing-hyper}
For $d = 3$, any corruption $\bc$, any centroid $\vkappa$, and any number of rounds $N$ within an epoch, there exists a $\stheta$ and a sequence $\{\vx_t\}_{t \in [N]}$, such that there does not exist a hyperplane $(\vx', \omega')$, where $\vx' \in \{\vx_t \}_{t \in [N]}$, with one of its halfspaces having undesirability at least $\bc + 1$. 
\end{proposition}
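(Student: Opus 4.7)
I plan to place $\vkappa = 0$ (translate if necessary), take $\stheta = r\mathbf{e}_3$ for a small $r > 0$ keeping $\stheta \in \calK_\phi$, fix a parameter $\phi \in (0, \pi/2)$, and choose $N$ distinct azimuthal angles $\alpha_1, \ldots, \alpha_N \in [0, 2\pi)$. The contexts will be the unit vectors $\vx_t = \bigl(\cos\alpha_t \sin\phi,\; \sin\alpha_t \sin\phi,\; \cos\phi\bigr)$, arranged as a ``cone'' of hyperplanes through $\vkappa$ in the spirit of Figure~\ref{fig:three_dim}(a). Since $\langle \stheta, \vx_t \rangle = r \cos\phi > 0$ for every $t$, each round is uncorrupted with feedback $y_t = +1$, so $\stheta$ itself has undesirability $0$ and each observed halfspace is $\vH^+(\vx_t, 0)$.

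\textbf{Producing a low-undesirability point in every negative halfspace.} Fix an arbitrary $i \in [N]$. I will exhibit $\vp_i$ with $u_\phi(\vp_i) = 1$, opposite to $\vx_i$ in azimuth and slightly tilted. Let $\vp_i = \lambda\bigl(-\cos\alpha_i \sin\psi,\; -\sin\alpha_i \sin\psi,\; \cos\psi\bigr)$ for small $\lambda > 0$ (to keep $\vp_i$ inside $\calK_\phi$) and a tilt $\psi \in (\pi/2 - \phi,\; \pi/2)$ to be chosen. A one-line computation gives
\[
\langle \vp_i, \vx_j\rangle \;=\; \lambda\bigl(-\sin\psi\sin\phi\,\cos(\alpha_i - \alpha_j) + \cos\psi \cos\phi\bigr).
\]
For $j = i$ this equals $\lambda \cos(\psi+\phi) < 0$, so plane $i$ contributes one unit of undesirability. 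For $j \neq i$, positivity is equivalent to $\cos(\alpha_i - \alpha_j) < \cot\psi\cot\phi$. Setting $c_i := \max_{j \neq i}\cos(\alpha_i - \alpha_j)$, distinctness of the $\alpha_j$ gives $c_i < 1$. As $\psi$ traverses $(\pi/2 - \phi, \pi/2)$, $\cot\psi \cot\phi$ decreases continuously from $1$ to $0$, so I can pick $\psi$ with $\cot\psi\cot\phi \in (c_i, 1)$. With this $\psi$, every $j \neq i$ contributes $0$, and therefore $u_\phi(\vp_i) = 1$.

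\textbf{Conclusion and main obstacle.} For each candidate hyperplane $(\vx_i, \omega_i)$, the positive halfspace contains $\stheta$ with undesirability $0$ and the negative halfspace contains $\vp_i$ with undesirability $1 \leq \bc$ in the nontrivial regime $\bc \geq 1$, so neither halfspace has undesirability $\geq \bc + 1$ throughout, establishing the proposition. The point $\vp_i$ and the anchor $\stheta$ can be rescaled by a common small factor to lie inside any convex body $\calK_\phi$ without affecting any inner-product sign. The delicate step is the dual requirement on $\vp_i$ -- strictly on the negative side of plane $i$ yet strictly on the positive side of every other plane simultaneously -- which is the failure mode that makes $d=3$ qualitatively different from $d = 2$. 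It is resolved by allowing the tilt $\psi$ to depend on $i$ and exploiting the fact that $\cot\psi\cot\phi$ sweeps the full interval $(0,1)$ continuously, so the window $(c_i, 1)$ is guaranteed to be nonempty precisely because the contexts are finitely many and distinct.
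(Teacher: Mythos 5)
Your proof is correct and follows essentially the same route as the paper: both build the conic arrangement of query hyperplanes through the centroid with $\stheta$ near the cone's axis and all feedback $y_t=+1$, and then exhibit, for each queried hyperplane, a point in its eliminated halfspace with undesirability only $1\leq\bc$. Your explicit parametrization of the witness $\vp_i$ (with the $i$-dependent tilt $\psi$ chosen via $\cot\psi\cot\phi\in(c_i,1)$) makes rigorous the ``shaded region'' step that the paper only sketches, and it handles all $\bc\geq 1$ directly rather than deferring to ``easily generalizes.''
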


\begin{proof}
For any convex body $\calK$ with centroid $\vkappa$, we show how to construct a problematic instance of a $\stheta$ and $N$ contexts. Fix the corruption level to be $\bc = 1$, and $c_t = 0, \forall t \in [N]$. However, the learner does not know that none of the rounds is corrupted. Construct a sequence of contexts $\{\vx_t\}_{t \in [N]}$ such that no two are equal and for $\omega_t = \langle \vx_t, \vkappa \rangle, \forall t \in [N]$ we have that: \[\left\{\left(\vx_{t_1}, \omega_{t_1} \right)\right\} \bigcap \left\{\left(\vx_{t_2}, \omega_{t_2} \right)\right\} = \vkappa\] and the smallest region $r^{\star}$ that contains $\stheta$ is defined by all $\{\vx_t\}_{t \in [N]}$. Intuitively, these hyperplanes form a conic hull. 

Take any hyperplane $h \in \bbR^3$ neither parallel nor orthogonal with any hyperplane $\{(\vx_t, \omega_t)\}_{t \in [N]}$ such that $h \cap r^{\star} = q \neq \emptyset$. Take $q$'s projection in $\bbR^2$. Observe that we have constructed an instance where no matter how big $N$ is, there does not exist any hyperplane with undesirability at least $\bc + 1$ (i.e., $2$ when $\bc = 1$) in either one of its halfspaces. This instance easily generalizes for any $\bc > 1$. \end{proof}
\section{Proof from Section~\ref{sec:gradient_descent}}\label{app:GD}

\begin{proof}[Proof of Theorem~\ref{thm:GD-FR}]
Function $f_t(\vz) = - y_t \cdot \langle \vz, \vx_t \rangle$ is Lipschitz in $\vz$. So, using the known guarantees for Online Gradient Descent and denoting by $\vz^{\star} = \arg\min_{\vz} \sum_{t \in [T]} f_t(\vz)$ we know that:
\begin{equation}\label{eq:std-gd}
    \sum_{t \in [T]} f_t(\vz_t) - \sum_{t \in [T]} f_t(\vz^{\star}) = \calO \left( \sqrt{T} \right)
\end{equation}
Due to the definition of $\vz^{\star}$ we can relax the left-hand side of Equation~\eqref{eq:std-gd} and get: 
\begin{equation}\label{eq:std-gd2}
    \sum_{t \in [T]} f_t(\vz_t) - \sum_{t \in [T]} f_t(\stheta) \leq \calO \left( \sqrt{T} \right)
\end{equation}
We now analyze the quantity on the left-hand side of Equation~\eqref{eq:std-gd2} as follows: 
\begin{equation}\label{eq:gd-abs}
    \sum_{t \in [T]} f_t(\vz_t) - \sum_{t \in [T]} f_t(\stheta) = \sum_{t \in [T]} y_t \cdot \left( \langle \stheta, \vx_t \rangle - \langle \vz_t, \vx_t \rangle \right) = \sum_{t \in [T]} |\langle \stheta, \vx_t \rangle - \omega_t|
\end{equation}
which is the quantity that we wish to minimize when we are trying to minimize the absolute loss given binary feedback when the round is not corrupted. Given the fact that $y_t$ is arbitrary for at most $C$ rounds, the regret incurred by $\gd$ is at most $\calO(\sqrt{T} + C)$.

For the proof of the $\eps$-ball loss, we show how the latter compares with the absolute loss. Indeed: 
\begin{align*}
    \sum_{t \in [T]} |\langle \stheta, \vx_t \rangle - \omega_t| \geq \eps \sum_{t \in [T]} \1 \left\{ |\langle \stheta, \vx_t \rangle - \omega_t | \geq \eps \right\}
\end{align*}
Combining the above with Equation~\eqref{eq:gd-abs} we establish that $\gd$ for the $\eps$-ball loss incurs regret $\calO(\sqrt{T}/\eps + C/\eps)$
\end{proof}



\end{document}